\documentclass[11pt]{article}

\usepackage[T1]{fontenc}
\usepackage[utf8]{inputenc}
\usepackage[margin=1in]{geometry}
\usepackage[final,expansion=false]{microtype}

\usepackage{amsmath,amssymb,amsthm,mathtools,bm}
\usepackage{dsfont}        
\usepackage{enumitem}
\usepackage{booktabs}
\usepackage{tabularx}
\usepackage{longtable}
\usepackage{array}
\usepackage{graphicx}
\usepackage{float}
\graphicspath{{figures/}}

\usepackage[round,authoryear]{natbib}
\usepackage{xcolor} 
\usepackage{hyperref}
\hypersetup{
	colorlinks=true,
	linkcolor=blue!60!black,
	citecolor=blue!60!black,
	urlcolor=blue!60!black
}

\numberwithin{equation}{section}

\newtheorem{theorem}{Theorem}[section]
\newtheorem{proposition}[theorem]{Proposition}
\newtheorem{lemma}[theorem]{Lemma}
\newtheorem{corollary}[theorem]{Corollary}

\theoremstyle{definition}
\newtheorem{definition}[theorem]{Definition}
\newtheorem{assumption}[theorem]{Assumption}
\newtheorem{example}[theorem]{Example}

\theoremstyle{remark}
\newtheorem{remark}[theorem]{Remark}

\newcommand{\RR}{\mathbb{R}}

\newcommand{\PP}{\mathbb{P}}
\newcommand{\EE}{\mathbb{E}}

\DeclareMathOperator{\Cov}{Cov}

\newcommand{\1}{\mathds{1}}

\newcommand{\cP}{\mathcal{P}}

\newcommand{\cS}{\mathcal{S}}

\newcommand{\cX}{\mathcal{X}}

\newcommand{\ip}[2]{\left\langle #1,\,#2 \right\rangle}
\newcommand{\norm}[1]{\left\lVert #1 \right\rVert}
\newcommand{\abs}[1]{\left\lvert #1 \right\rvert}

\date{}
\begin{document}
	
	\title{Soft-to-Hard Routing in Sparse Mixture-of-Experts Models}
	
	\author{Reza Rastegar\\
Meta Platforms, Inc\thanks{reza.j.rastegar@gmail.com; Woodinville, WA}}
	\maketitle
	
	\begin{abstract}
		Softmax routing approaches hard top-1 routing as the temperature tends to zero, but the limiting passage is singular at router ties. This paper develops a boundary-layer calculus for this soft-to-hard limit in population squared-loss mixture-of-experts regression. For a router with logits $a_k(x;\phi)$, the relevant local quantity is the top-two margin $\Delta(x;\phi)$, and the relevant global quantity is the boundary mass $\PP(\Delta(X;\phi)\le w)$. Under smoothness and transversality assumptions, coarea and tubular-neighborhood estimates show how this mass scales with the slab width; in the binary case the leading coefficient is an explicit surface integral over the routing interface. These geometric estimates give quantitative bounds between the soft objective $L_\tau$ and the hard objective $L_0$, including an $O(\tau^\alpha)$ uniform comparison under a margin-tail condition, and yield $\Gamma$-convergence of the soft objectives on compact parameter spaces. The main conclusion is that the zero-temperature approximation is controlled by the probability carried by an $O(\tau)$ neighborhood of the routing interfaces, not by temperature alone. After isolating this boundary-layer part of the problem, we record a conditional landscape-transfer theorem from hard to small-temperature soft routing and a reduced two-expert Gaussian calculation illustrating local symmetry breaking. Synthetic diagnostics are included only as controlled checks of the boundary-layer predictions.
	\end{abstract}
	

	
\section{Introduction}
	
	This paper studies a specific soft-to-hard limit in mixture-of-experts routing. As a softmax gate is cooled toward a hard top-1 gate, convergence of the population objective is controlled by the amount of input probability near router indifference. Small temperature alone is not the governing quantity. The soft and hard gates differ only where the router is close to being indifferent between two experts. The size of that region, measured under the input distribution, determines the approximation error. We call this quantity boundary mass, and the main purpose of the paper is to develop a calculus for it.
	
	Mixture-of-experts (MoE) models combine local predictors through a learned gate. In the classical statistical formulation, the gate learns which expert should explain which part of the input space \citep{JacobsJordanNowlanHinton1991,JordanJacobs1994,YukselWilsonGader2012}. In modern sparse MoE architectures, the same routing idea is used for conditional computation, where only a small number of experts are active for a given token or example and model capacity can increase faster than per-example cost \citep{Shazeer2017SGMoE,LepikhinEtAl2021GShard,FedusZophShazeer2021Switch,DuEtAl2022GLaM,ZhouEtAl2022ExpertChoice}. These two uses of MoE differ in scale and purpose, but they share a structural feature that is easy to lose sight of. The router does not merely weight already-fixed predictions. It creates decision interfaces in input space, and probability mass near those interfaces is precisely where soft and hard routing can disagree.
	
	To make the issue concrete, let $f_k(\cdot;\theta_k)$ be the experts and let $a_k(x;\phi)$ be the router logits, $k=1,\ldots,K$. At temperature $\tau>0$, the dense softmax gate assigns
	\begin{equation}\label{eq:intro-softmax-preview}
		p_k^{(\tau)}(x;\phi)=\frac{\exp(a_k(x;\phi)/\tau)}{\sum_{\ell=1}^K \exp(a_\ell(x;\phi)/\tau)},
		\qquad k=1,\dots,K,
	\end{equation}
	and the soft predictor is
	\begin{equation}\label{eq:intro-soft-predictor-preview}
		h_{\theta,\phi,\tau}(x)=\sum_{k=1}^K p_k^{(\tau)}(x;\phi)f_k(x;\theta_k).
	\end{equation}
	As $\tau\downarrow0$, the weights in~\eqref{eq:intro-softmax-preview} converge pointwise to a hard top-1 assignment at every input where the largest logit is unique. That pointwise convergence is elementary. The objective-level limit is subtler. If the top logit exceeds the second by much more than $\tau$, the softmax weights are already exponentially close to a hard assignment. If the gap is of order $\tau$, two experts may still receive comparable weight. Thus the relevant set is a thin boundary layer around the hard-routing interfaces.
	
	For a router parameter $\phi$, write
	\[
		\Delta(x;\phi)=\max_k a_k(x;\phi)-\max_{\ell\ne k_*(x;\phi)} a_\ell(x;\phi),
	\]
	where $k_*(x;\phi)$ is a maximizer, away from ties. The boundary layer at width $w$ is the event $\{\Delta(X;\phi)\le w\}$. The basic program of the paper is to estimate the probability of this event, convert that estimate into a soft-to-hard risk comparison, and use the comparison to pass from the soft variational problem to the hard one. In this sense the paper is closer to a boundary-layer analysis than to a general theory of MoE training. The landscape and symmetry-breaking results later in the paper are included only after the boundary-layer estimates have been separated from the additional model assumptions they require.
	
	\subsection{Contributions}
	
	The first contribution is geometric. For regular routing interfaces, Proposition~\ref{prop:coarea} gives a first-order coarea expansion for the probability of a score slab and identifies the leading coefficient as a surface integral over the interface. Lemma~\ref{lem:tube-single} records the corresponding tube estimate for one interface, while Proposition~\ref{prop:margin-tail-upper} turns pairwise tube bounds into a top-two margin-tail bound for a multi-expert router. Example~\ref{ex:gaussian-linear-router} gives the resulting constant explicitly for a linear router under Gaussian covariates. These statements are the core of the boundary-layer calculus.
	
	The second contribution is a quantitative soft-to-hard comparison. For a fixed regular binary router with bounded response and bounded experts, Corollary~\ref{cor:coarea-soft-hard-linear} gives an $O(\tau)$ risk bound. Theorem~\ref{thm:soft-hard-quant} proves the uniform version. If the router class satisfies a margin-tail condition
	\[
		\PP\bigl(\Delta(X;\phi)\le t\bigr)\le C t^\alpha
	\]
	uniformly in $\phi$, then $\sup_{\theta,\phi}|L_\tau(\theta,\phi)-L_0(\theta,\phi)|$ is $O(\tau^\alpha)$ under the stated boundedness assumptions. Theorem~\ref{thm:gamma} then uses this uniform approximation to obtain $\Gamma$-convergence of $L_\tau$ to the hard-routing objective $L_0$ on compact parameter spaces. This is the variational form of the soft-to-hard limit.
	
	The third contribution is to show how the same calculus can be used without overstating what it proves. Theorem~\ref{thm:benign} is a conditional landscape-transfer result. It assumes that the profiled hard-routing objective already has favorable geometry--realizability, separation modulo relabeling, curvature near the teacher solution, and negative curvature away from teacher-equivalent critical points--and that first- and second-order soft-to-hard transfer holds in the relevant region. Under these hypotheses, the small-temperature soft objective inherits the corresponding recovery and strict-saddle structure. The theorem is therefore a stability statement around a well-behaved hard problem, not a claim that arbitrary MoE objectives have benign landscapes.
	
	The fourth contribution is explanatory rather than foundational. Proposition~\ref{prop:symmetry-breaking} studies a reduced two-expert Gaussian population model near a balanced router and computes the local router update when the experts already have a small contrast. The calculation displays an unstable direction aligned with the teacher separator. It is not used to prove the boundary-layer or landscape theorems; its role is to make visible, in a tractable setting, how expert contrast can select a routing interface. Section~\ref{sec:experiments} reports synthetic diagnostics that check the same boundary-layer predictions in controlled examples. These experiments are not presented as sparse-MoE benchmarks.
	
	\subsection{Related literature}
	
	MoE models were introduced as adaptive combinations of local experts with learned gates \citep{JacobsJordanNowlanHinton1991}, and hierarchical mixtures connected the architecture to latent allocation and EM-style training \citep{JordanJacobs1994}. The survey of \citet{YukselWilsonGader2012} gives a broad account of earlier models and applications. Modern sparse MoE layers use routing as conditional computation inside deep networks; sparsely gated MoE layers, GShard, Switch Transformers, GLaM, and expert-choice routing are representative examples \citep{Shazeer2017SGMoE,LepikhinEtAl2021GShard,FedusZophShazeer2021Switch,DuEtAl2022GLaM,ZhouEtAl2022ExpertChoice}.
	
	A complementary line of work develops statistical theory for MoE estimation. Recent papers study identifiability, least-squares estimation, softmax and sigmoid gates, sparse top-$K$ gates, cosine routing, and weakly identifiable or over-specified regimes \citep{HoYangJordan2022,NguyenNguyenHo2023,NguyenAkbarianYanHo2024,NguyenAkbarianNguyenHo2024,NguyenHoRinaldo2024LSE,NguyenHoRinaldo2024Sigmoid,NguyenEtAl2025Cosine}. Other works analyze routing as a mechanism for optimization and specialization, including stylized settings where gates help discover latent clusters or stabilize expert specialization \citep{Makkuva2019Gridlock,Makkuva2020Gated,ChenDengWuGuLi2022,DikkalaEtAl2023,KawataEtAl2025}. Conditional computation and expressivity give another perspective \citep{Shazeer2017SGMoE,FedusZophShazeer2021Switch,DuEtAl2022GLaM,WangE2025ExpressiveMoE}.
	
	The present paper asks a different question. It does not analyze finite-sample estimation rates, propose a training algorithm, or compare large-scale architectures. It isolates the low-temperature passage in the population objective. The mathematical tools therefore come from coarea formulas, tube estimates, variational convergence, and stability of critical-point structure \citep{EvansGariepy2015,HenrotPierre2005,SokolowskiZolesio1992,Braides2002,Ge2015,Sun2017Benign}. The resulting message is narrow but useful. Once the hard partition is fixed or uniformly controlled, the discrepancy between soft and hard routing is governed by boundary mass near the router interfaces.
	
	\subsection{Organization}
	
	Section~\ref{sec:setup} fixes the population model, the soft and hard risks, and the margin notation. Section~\ref{sec:global-geometry} proves the coarea and tube estimates that quantify boundary mass. Section~\ref{sec:gamma} turns those estimates into soft-to-hard risk bounds and $\Gamma$-convergence. Section~\ref{sec:benign} gives the conditional landscape-transfer theorem. Section~\ref{sec:symmetry-breaking} contains the reduced Gaussian symmetry-breaking calculation. Section~\ref{sec:experiments} reports controlled diagnostic experiments, and Section~\ref{sec:discussion} summarizes the scope of the results and the assumptions needed beyond the boundary-layer calculus.

	\section{Population MoE model and notation}\label{sec:setup}
	
	The analysis is carried out at the population level, with scalar response and squared loss. This choice is intentional. The question in the paper is not how an empirical optimizer behaves on a finite sample, but how the routing objective changes as the softmax temperature is lowered. The input law is assumed regular only where the coarea and tube estimates need such regularity. Empirical risk, generalization error, regularization, and load-balancing penalties are all important in trained MoE systems, but including them here would mix several effects with the specific zero-temperature issue studied in the paper.
	
	Let $\cX\subseteq\RR^d$ be the input space, equipped with the Euclidean norm $\|\cdot\|$, and let $(X,Y)$ have joint law $\mathsf{P}$ on $\cX\times\RR$. Its $X$-marginal is denoted by $P_X$. For events $A\subseteq\cX$ depending only on $X$, we write interchangeably $P_X(A)$ and $\PP(A)$. For a scalar predictor $f:\cX\to\RR$, the population squared-loss risk is
	\begin{equation}\label{eq:pop-risk}
		\mathcal{R}(f) := \EE_{(X,Y)\sim \mathsf{P}}\bigl[(Y-f(X))^2\bigr].
	\end{equation}
	Unless explicitly stated otherwise, expectations are taken with respect to the joint law $\mathsf{P}$, while probabilities involving only $X$ are written either as $P_X(\cdot)$ or, equivalently, as $\PP(\cdot)$.
	
	Fix $K\ge 2$ experts, indexed by $k\in\{1,\dots,K\}$. Let $\Theta_k$ denote the parameter space of expert~$k$, let $\Theta:=\prod_{k=1}^K\Theta_k$, and let $\Phi$ denote the parameter space of the router. An MoE predictor consists of scalar expert maps $f_k(\cdot;\theta_k):\cX\to\RR$ with $\theta=(\theta_1,\dots,\theta_K)\in\Theta$, together with a router parameter $\phi\in\Phi$ that assigns input-dependent weights to those experts through the logit map
	\begin{equation}\label{eq:logit-map}
	a(x;\phi)=\bigl(a_1(x;\phi),\dots,a_K(x;\phi)\bigr)\in\RR^K.
	\end{equation}
	For $\tau>0$, the router produces the temperature-$\tau$ softmax weights
	\begin{equation}\label{eq:softmax-router}
		p_k^{(\tau)}(x;\phi)\ :=\ \frac{\exp\!\bigl(a_k(x;\phi)/\tau\bigr)}{\sum_{j=1}^K \exp\!\bigl(a_j(x;\phi)/\tau\bigr)},
		\qquad k=1,\dots,K.
	\end{equation}
	These weights define the soft MoE predictor
	\begin{equation}\label{eq:moe-predictor}
		h_{\theta,\phi,\tau}(x)\ :=\ \sum_{k=1}^K p_k^{(\tau)}(x;\phi)\, f_k(x;\theta_k).
	\end{equation}
	The corresponding soft population risk is
	\begin{equation}\label{eq:soft-risk-def}
		L_\tau(\theta,\phi)
		:=\mathcal R(h_{\theta,\phi,\tau})
		=\EE\bigl[(Y-h_{\theta,\phi,\tau}(X))^2\bigr].
	\end{equation}
	All notation in this section is population-level. In particular, the definitions below should not be read as claims about empirical minimization, finite-sample generalization, or auxiliary penalties such as load balancing and entropy regularization.
	
	The basic geometric quantities already appear at this level. For a fixed router parameter $\phi$, let
	\[
	a_{(1)}(x;\phi)\ge a_{(2)}(x;\phi)\ge\cdots\ge a_{(K)}(x;\phi)
	\]
	denote the ordered logits. The top--two margin is
	\begin{equation}\label{eq:routing-margin}
		\Delta(x;\phi)\ :=\ a_{(1)}(x;\phi)-a_{(2)}(x;\phi)\ \ge\ 0.
	\end{equation}
	Large values of $\Delta(x;\phi)$ indicate that the winning expert is well separated from the runner-up, hence that routing is locally stable. The corresponding set of winning experts is
	\begin{equation}\label{eq:argmax-set}
	\mathcal{M}(x;\phi)\ :=\ \arg\max_{k\in\{1,\dots,K\}} a_k(x;\phi),
	\end{equation}
	and the routing boundary, or tie set, is
	\begin{equation}\label{eq:boundary-set}
		\mathcal{T}(\phi)\ :=\ \{x\in\cX:\ |\mathcal{M}(x;\phi)|\ge 2\}
		\ =\ \{x\in\cX:\ \Delta(x;\phi)=0\}.
	\end{equation}
	We use the deterministic hard winner
	\begin{equation}\label{eq:hard-winner}
		k^\star(x;\phi):=\min \mathcal M(x;\phi),
	\end{equation}
	where the minimum only fixes a tie-breaking convention on $\mathcal T(\phi)$. The associated hard predictor and hard population risk are
	\begin{align}
		h_{\theta,\phi,0}(x)
		&:= f_{k^\star(x;\phi)}(x;\theta_{k^\star(x;\phi)}),\label{eq:hard-predictor}\\
		L_0(\theta,\phi)
		&:=\EE\bigl[(Y-h_{\theta,\phi,0}(X))^2\bigr].\label{eq:hard-risk-def}
	\end{align}
	Away from $\mathcal{T}(\phi)$ the winning expert is unique, and hard routing is locally constant in $x$ under mild regularity of the logits.
	
	For later coarea estimates it is useful to track not only the top--two gap but also the smallest pairwise logit separation,
	\begin{equation}\label{eq:setup-pairwise-min-margin}
	\Delta_{\min}(x;\phi):=\min_{k\neq \ell}\abs{a_k(x;\phi)-a_\ell(x;\phi)}.
	\end{equation}
	Given a width parameter $w>0$, we introduce the top--two and pairwise boundary-mass profiles
	\begin{align}
	\mathrm{BM}^{\mathrm{top}}_\phi(w)&:=\PP\bigl(\Delta(X;\phi)\le w\bigr),\label{eq:BM-top-profile}\\
	\mathrm{BM}^{\mathrm{pair}}_\phi(w)&:=\PP\bigl(\Delta_{\min}(X;\phi)\le w\bigr).\label{eq:BM-pair-profile-setup}
	\end{align}
	These are related but not interchangeable. The top--two profile in~\eqref{eq:BM-top-profile} is the natural object for soft-versus-hard routing, because it measures how close the winning logit is to the runner-up. The pairwise profile in~\eqref{eq:BM-pair-profile-setup} is a geometric upper envelope: since $\Delta_{\min}(x;\phi)\le \Delta(x;\phi)$, it can also count near-ties between two nonwinning experts, but it decomposes cleanly into pairwise slabs around the surfaces $a_k=a_\ell$. We do not introduce a separate temperature-indexed pairwise boundary-mass symbol; when a tolerance $\varepsilon\in(0,1/2)$ and temperature $\tau$ are fixed, the pairwise slab probability is written explicitly as $\mathrm{BM}^{\mathrm{pair}}_\phi(\kappa_\varepsilon\tau)$, where
	\begin{equation}\label{eq:kappa-eps}
	\kappa_\varepsilon:=\log\!\Bigl(\frac{1-\varepsilon}{\varepsilon}\Bigr).
	\end{equation}
	Thus temperature enters the boundary-mass profiles only through the width argument $w=\kappa_\varepsilon\tau$.
	
	This viewpoint explains why the margin distribution controls the soft-to-hard limit. Fix $(x,\phi)$ and assume the maximizing logit is unique, so that $\Delta(x;\phi)>0$. A direct softmax estimate gives
	\begin{equation}\label{eq:setup-softmax-tail-preview}
	\sum_{j\neq k^\star(x;\phi)} p_j^{(\tau)}(x;\phi)
	\le (K-1)\exp\bigl(-\Delta(x;\phi)/\tau\bigr),
	\end{equation}
	with $k^\star$ as in~\eqref{eq:hard-winner}. In particular, $p^{(\tau)}(x;\phi)\to e_{k^\star(x;\phi)}$ as $\tau\downarrow0$. Thus the integrated discrepancy between soft and hard routing is governed by the amount of input mass for which $\Delta(X;\phi)$ is small, which is exactly the boundary-mass profile above.
	
	Two structural symmetries will matter later. First, the parameterization is invariant under permutations of the experts. If $\pi$ is a permutation of $\{1,\dots,K\}$ and one simultaneously relabels the expert parameters and the router outputs by $\pi$, then the resulting predictor $h_{\theta,\phi,\tau}$ is unchanged. Consequently, any isolated minimizer comes with at least $K!$ equivalent copies. Second, the logits are defined only up to a common translation. Replacing $a_k(x;\phi)$ by $a_k(x;\phi)+c(x)$ for an arbitrary scalar function $c(x)$ leaves the softmax weights unchanged, so one may fix this gauge when convenient, for example by enforcing $\sum_{k=1}^K a_k(x;\phi)=0$.
	
	We close the setup with two regularity assumptions. They are kept deliberately modest here. The first only ensures that the risks being written are well defined. The second is not needed for the softmax algebra itself; it enters only when routing boundaries are treated as geometric objects.
	
	\begin{assumption}[Measurability and integrability]\label{ass:meas-int}
		For each temperature $\tau>0$ and parameters $(\theta,\phi)$ under consideration,
		the map $x\mapsto h_{\theta,\phi,\tau}(x)$ is measurable and
		$\EE[(Y-h_{\theta,\phi,\tau}(X))^2]<\infty$.
	\end{assumption}
	
	Assumption~\ref{ass:meas-int} is largely bookkeeping. It ensures that the population risk is well defined for the parameter values under discussion.
	
	\begin{assumption}[Local regularity of logits]\label{ass:logit-regularity}
		When we discuss geometric properties of $\mathcal{T}(\phi)$ (e.g.\ codimension, tubular neighborhoods),
		we assume $a(\cdot;\phi)$ is locally Lipschitz (or $C^1$) on $\cX\subseteq\RR^d$.
	\end{assumption}
	
	Assumption~\ref{ass:logit-regularity} enters only when we study the geometry of routing boundaries. The later probabilistic soft-to-hard bounds are driven by the margin distribution and do not require this smoothness by themselves.

	
	\section{Boundary-layer geometry and coarea control}\label{sec:global-geometry}
	
	The first step is to understand the geometry of the router on its own, before introducing any teacher--student structure. Soft and hard routing differ in a meaningful way only near inputs where two scores are close. The same region controls how a hard partition reacts to a small movement of its boundary: away from the interface, the assigned expert is unchanged. This section makes that idea quantitative. We measure the probability mass in thin score slabs, use the coarea formula to identify the leading surface term, and then convert pairwise slab estimates into a margin-tail bound for multi-expert routers.
	
	\subsection{Boundary mass and coarea formula}\label{subsec:global-prelim}

	Fix a router parameter $\phi$ for the moment. We work on the input domain $\cX\subseteq\RR^d$ from Section~\ref{sec:setup}, write $P_X$ for the marginal law of $X$, and write $p_X$ for its density when such a density exists. The router scores are the logits $a_k(\cdot;\phi):\cX\to\RR$, $k=1,\dots,K$, and the temperature is denoted by $\tau>0$. Holding $\phi$ fixed in this subsection lets us separate a purely geometric calculation from later optimization questions.
	
	For each pair $(k,\ell)$ with $k\neq\ell$, define the \emph{pairwise score difference}
	\begin{equation}\label{eq:pairwise-score-difference}
		S_{k\ell}(x;\phi)
		:= a_k(x;\phi) - a_\ell(x;\phi),\qquad x\in\cX.
	\end{equation}
	When no ambiguity can arise, we abbreviate $S_{k\ell}(x;\phi)$ by $S_{k\ell}(x)$. The \emph{decision surface} between experts $k$ and $\ell$ is
	\begin{equation}\label{eq:pairwise-decision-surface}
		\cS_{k\ell}(\phi)
		:= \{x\in\cX : S_{k\ell}(x;\phi)=0\}.
	\end{equation}
	For the fixed router parameter used in this subsection, we also write $\cS_{k\ell}$ for $\cS_{k\ell}(\phi)$. The stronger tubular regularity needed for the coarea expansion is stated explicitly in Assumption~\ref{ass:router-coarea-regularity}; the present notation only identifies the pairwise interfaces to which that assumption will be applied.
	
	Let $p_X$ denote the density of $P_X$ with respect to Lebesgue measure on
	\(\RR^d\). We also use the pairwise minimum margin from~\eqref{eq:setup-pairwise-min-margin}, now written in terms of the score differences~\eqref{eq:pairwise-score-difference},
	\begin{equation}\label{eq:global-margin}
		\Delta_{\min}(x;\phi)
		:= \min_{k\neq\ell} |S_{k\ell}(x;\phi)|,
		\qquad x\in\cX.
	\end{equation}
	Small values of $\Delta_{\min}(X;\phi)$ identify inputs lying in thin slabs around at least one decision surface.
	
	\begin{definition}[Pairwise boundary mass profile]\label{def:boundary-mass-pair}
		For $w>0$ recall the \emph{pairwise boundary mass profile} from~\eqref{eq:BM-pair-profile-setup}, namely
		\begin{equation}\label{eq:BM-pair-profile}
			\mathrm{BM}^{\mathrm{pair}}_\phi(w)
			:= \PP\bigl(\Delta_{\min}(X;\phi)\le w\bigr).
		\end{equation}
		For a different input law $R$ on $\cX$, the same definition applies with $R(\cdot)$ in place of $P_X(\cdot)$.
	\end{definition}
	
	Fix $\varepsilon\in(0,1/2)$ and recall $\kappa_\varepsilon$ from~\eqref{eq:kappa-eps}. For each pair of experts define the \emph{pairwise ambiguity event}
	\begin{equation}\label{eq:pairwise-ambiguity-event}
		A_{k\ell}(\varepsilon,\tau)
		:= \Bigl\{
		\frac{\exp(a_k(X;\phi)/\tau)}
		{\exp(a_k(X;\phi)/\tau)+\exp(a_\ell(X;\phi)/\tau)}
		\in[\varepsilon,1-\varepsilon]
		\Bigr\},
		\qquad k\neq \ell.
	\end{equation}
	Then $A_{k\ell}(\varepsilon,\tau)$ is equivalent to $\abs{S_{k\ell}(X;\phi)}\le \kappa_\varepsilon\tau$, and the union of pairwise ambiguous events has probability
	\begin{equation}\label{eq:pairwise-ambiguity-slab-prob}
		\PP\Bigl(\bigcup_{1\le k<\ell\le K} A_{k\ell}(\varepsilon,\tau)\Bigr)
		=\PP\bigl(\Delta_{\min}(X;\phi)\le \kappa_\varepsilon\tau\bigr)
		=\mathrm{BM}^{\mathrm{pair}}_\phi(\kappa_\varepsilon\tau).
	\end{equation}
	Thus one does not need a new object for every temperature. The temperature only selects the slab width $\kappa_\varepsilon\tau$ at which the same boundary-mass profile is evaluated. This remains a pairwise slab quantity; it is not the same object as the $K$-way softmax ambiguity event introduced later in~\eqref{eq:Kway-ambiguity-region}.
	
	\begin{remark}[Equivalent slab form]\label{rem:BM-slab}
		Equation~\eqref{eq:pairwise-ambiguity-slab-prob} is the precise sense in which the pairwise boundary-mass profile measures an $O(\tau)$-thick union of slabs around the decision surfaces $\{\cS_{k\ell}\}$: the temperature and tolerance only determine the width $\kappa_\varepsilon\tau$ at which the profile is evaluated.
	\end{remark}
	
	For the binary case $K=2$ the definition simplifies considerably and gives a
	concrete geometric picture.
	
	\begin{remark}[Binary case]
		When $K=2$, we have
		\[
		\frac{e^{a_1(x;\phi)/\tau}}{e^{a_1(x;\phi)/\tau}+e^{a_2(x;\phi)/\tau}}
		= \sigma\Bigl(\frac{S_{12}(x)}{\tau}\Bigr),
		\]
		where $\sigma(t)=(1+e^{-t})^{-1}$ is the logistic function. Then
		\[
		\mathrm{BM}^{\mathrm{pair}}_\phi(\kappa_\varepsilon\tau)
		= \PP\bigl( S_{12}(X)
		\in[\underline{s}_\varepsilon(\tau),\overline{s}_\varepsilon(\tau)]\bigr),
		\]
		where $\underline{s}_\varepsilon(\tau)
		= \tau\log(\varepsilon/(1-\varepsilon))$ and
		$\overline{s}_\varepsilon(\tau)
		= \tau\log((1-\varepsilon)/\varepsilon)$. Thus the profile value $\mathrm{BM}^{\mathrm{pair}}_\phi(\kappa_\varepsilon\tau)$ is exactly the probability mass in an
		$O(\tau)$-thick slab around the decision surface~$\cS_{12}$.
	\end{remark}
	
	The coarea calculation needs the pairwise interfaces to be regular in a small neighborhood of the tie surface, not only exactly on the tie surface. We record that requirement next.
	
	\begin{assumption}[Regular router, density, and tubular level sets]\label{ass:router-coarea-regularity}
		For each $k\neq\ell$, the function $S_{k\ell}:\cX\to\RR$ is $C^2$ on a neighborhood of the tube $\{|S_{k\ell}|\le \delta_{\mathrm{reg}}\}$ for some $\delta_{\mathrm{reg}}>0$. There is a constant $c_0>0$ such that
		\begin{equation}\label{eq:grad-lower-tube}
			\norm{\nabla S_{k\ell}(x)}\ge c_0
			\qquad\text{for almost every $x\in\cX$ with }|S_{k\ell}(x)|\le \delta_{\mathrm{reg}},
		\end{equation}
		where ``almost every'' is with respect to Lebesgue measure. The density $p_X$ is continuous and bounded on $\cX$. In addition, the nearby level sets $S_{k\ell}^{-1}(t)$, $|t|\le\delta_{\mathrm{reg}}$, admit a regular tubular parametrization by normal flow away from a set of $\mathcal H^{d-1}$-measure zero, and the corresponding surface Jacobians are locally dominated by an integrable function. For the estimates below, the needed consequence is that the level-set profile
		\begin{equation}\label{eq:level-set-profile}
			F_{k\ell}(t):=
			\int_{\cX\cap S_{k\ell}^{-1}(t)}
			\frac{p_X(x)}{\norm{\nabla S_{k\ell}(x)}}
			\,\mathrm{d}\mathcal{H}^{d-1}(x)
		\end{equation}
		is finite for $|t|\le\delta_{\mathrm{reg}}$ and continuous at $t=0$.
	\end{assumption}

	Assumption~\ref{ass:router-coarea-regularity} combines two familiar requirements. First, tie surfaces should not become flat in a way that kills the transverse gradient. Second, the input law should not pile up on lower-dimensional sets near those surfaces. Together these conditions keep the interfaces nondegenerate in the only region that matters for the tube estimate. For smooth routers the lower bound \eqref{eq:grad-lower-tube} is the expected situation away from a negligible set of critical points, and the density condition covers continuous data laws as well as discrete data after a small amount of smoothing. Once these ingredients are in place, the usual tubular-neighborhood and coarea arguments give linear control of boundary-tube mass.
	
	One concrete way to diagnose failure of this assumption is to examine the empirical margin histogram together with the distribution of $\|\nabla_x S_{k\ell}(X;\phi)\|$ on samples for which $|S_{k\ell}(X;\phi)|$ is small. Persistent mass near zero combined with tiny gradients is the signature of a nearly degenerate interface.
	
	The point of Assumption~\ref{ass:router-coarea-regularity} is that it extends the nonvanishing-gradient condition from the interface itself to an entire tubular neighborhood $\{|S_{k\ell}|\le\delta_{\mathrm{reg}}\}$. That is the scale on which the argument actually runs, because one needs uniform control of $p_X/\|\nabla S_{k\ell}\|$ on nearby level sets rather than only on the zero set. The continuity of the profile in~\eqref{eq:level-set-profile} is automatic in many standard compact $C^2$ situations, and it is the precise regularity needed for the first-order tube expansion below.
	
	Under Assumption~\ref{ass:router-coarea-regularity}, each level set $S_{k\ell}^{-1}(t)$ is a smooth hypersurface for $|t|\le \delta_{\mathrm{reg}}$ up to a null set. The coarea formula then rewrites the mass of a slab by first integrating along these level sets and then integrating over the level value~$t$. In the binary case this gives an especially transparent expression: the leading coefficient is the surface integral of $p_X/\|\nabla S_{12}\|$ over the decision interface.
	
	\begin{proposition}[Coarea expansion for the binary pairwise boundary-mass profile]\label{prop:coarea}
		Suppose Assumption~\ref{ass:router-coarea-regularity} holds and $K=2$. Then for each
		$\varepsilon\in(0,1/2)$ we have
		\[
		\mathrm{BM}^{\mathrm{pair}}_\phi(\kappa_\varepsilon\tau)
		= C_\varepsilon\,\tau + o(\tau),
		\qquad\tau\downarrow0,
		\]
		where
		\[
		C_\varepsilon
		= 2\log\!\Bigl(\frac{1-\varepsilon}{\varepsilon}\Bigr)
		\int_{\cS_{12}}
		\frac{p_X(x)}{\norm{\nabla S_{12}(x)}}
		\,\mathrm{d}\mathcal{H}^{d-1}(x),
		\]
		and $\mathcal{H}^{d-1}$ denotes $(d-1)$-dimensional Hausdorff (surface)
		measure.
	\end{proposition}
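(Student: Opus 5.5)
The plan is to reduce the slab probability to a one-dimensional integral of the level-set profile $F_{12}$ from~\eqref{eq:level-set-profile}. Then the statement follows from continuity of $F_{12}$ at $t=0$.

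First, by the binary-case remark (equivalently~\eqref{eq:pairwise-ambiguity-slab-prob} with $K=2$), $\Delta_{\min}=|S_{12}|$. Writing $w:=\kappa_\varepsilon\tau$, we have
\[
\mathrm{BM}^{\mathrm{pair}}_\phi(\kappa_\varepsilon\tau)=\int_{\cX}\1\{|S_{12}(x)|\le w\}\,p_X(x)\,\mathrm{d}x .
\]
For $\tau$ small enough that $w\le\delta_{\mathrm{reg}}$, the integrand is supported in the tube $\{|S_{12}|\le\delta_{\mathrm{reg}}\}$. There $S_{12}$ is $C^2$, hence locally Lipschitz, and $\|\nabla S_{12}\|\ge c_0>0$ almost everywhere by~\eqref{eq:grad-lower-tube}. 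So we may write $p_X=\bigl(p_X/\|\nabla S_{12}\|\bigr)\|\nabla S_{12}\|$ on the tube and apply the coarea formula (Evans--Gariepy) to the nonnegative measurable function $g:=\1\{|S_{12}|\le w\}\,p_X/\|\nabla S_{12}\|$. This gives
\[
\mathrm{BM}^{\mathrm{pair}}_\phi(\kappa_\varepsilon\tau)=\int_{-w}^{w}\int_{\cX\cap S_{12}^{-1}(t)}\frac{p_X(x)}{\|\nabla S_{12}(x)\|}\,\mathrm{d}\mathcal H^{d-1}(x)\,\mathrm{d}t=\int_{-w}^{w}F_{12}(t)\,\mathrm{d}t .
\]
The set where $\nabla S_{12}=0$ inside the tube is Lebesgue-null, so its contribution to the left side vanishes. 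This is the only place the almost-everywhere gradient bound is used.

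Second, Assumption~\ref{ass:router-coarea-regularity} gives that $F_{12}$ is finite on $[-\delta_{\mathrm{reg}},\delta_{\mathrm{reg}}]$ and continuous at $0$. Hence
\[
\Bigl|\int_{-w}^{w}F_{12}(t)\,\mathrm{d}t-2wF_{12}(0)\Bigr|\le 2w\sup_{|t|\le w}|F_{12}(t)-F_{12}(0)|=o(w).
\]
Substituting $w=\kappa_\varepsilon\tau$ with $\kappa_\varepsilon=\log((1-\varepsilon)/\varepsilon)$ fixed gives $2\kappa_\varepsilon F_{12}(0)\,\tau+o(\tau)$. Since $S_{12}^{-1}(0)=\cS_{12}$, this is exactly $C_\varepsilon\tau+o(\tau)$. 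Measurability of $t\mapsto F_{12}(t)$, needed for the iterated integral, is part of the coarea theorem's conclusion.

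The main obstacle is the justification of the coarea step on a domain $\cX$ that may have boundary and with a gradient bound holding only almost everywhere. I would handle this by extending the integrand by zero outside $\cX$ and applying the coarea formula on the open neighborhood where $S_{12}$ is $C^2$. I would then rely on the tubular-parametrization and domination clause of Assumption~\ref{ass:router-coarea-regularity} to guarantee that $F_{12}$ is genuinely finite and measurable near $0$. Everything beyond that is the elementary continuity estimate above.
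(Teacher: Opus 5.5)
Your proposal is correct and follows the paper's proof essentially step for step. The paper likewise writes the slab probability as $\int q_\tau\|\nabla S_{12}\|\,\mathrm{d}x$ with $q_\tau=\1\{|S_{12}|\le\kappa_\varepsilon\tau\}\,p_X/\|\nabla S_{12}\|$, applies the coarea formula to obtain $\int_{-\kappa_\varepsilon\tau}^{\kappa_\varepsilon\tau}F_{12}(t)\,\mathrm{d}t$, and concludes from continuity of $F_{12}$ at $0$. Your remarks on the Lebesgue-null set where $\nabla S_{12}$ might vanish, and on applying coarea on an open neighborhood of the tube, are small points of rigor that the paper leaves implicit.
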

	
	\begin{proof}
		Let $\underline{s}_\varepsilon(\tau)
		= \tau\log(\varepsilon/(1-\varepsilon))$ and
		$\overline{s}_\varepsilon(\tau)
		= \tau\log((1-\varepsilon)/\varepsilon)$. Then by the binary representation
		in the remark above,
		\[
		\mathrm{BM}^{\mathrm{pair}}_\phi(\kappa_\varepsilon\tau)
		= \PP\bigl(S_{12}(X)\in[\underline{s}_\varepsilon(\tau),\overline{s}_\varepsilon(\tau)]\bigr)
		= \int_{\cX}
		\1_{[\underline{s}_\varepsilon(\tau),\overline{s}_\varepsilon(\tau)]}(S_{12}(x))
		p_X(x)\,\mathrm{d}x.
		\]
		We now introduce the auxiliary function
		\[
		q_\tau(x)
		:= \1_{[\underline{s}_\varepsilon(\tau),\overline{s}_\varepsilon(\tau)]}(S_{12}(x))
		\frac{p_X(x)}{\norm{\nabla S_{12}(x)}},
		\]
		which is measurable and bounded for small~$\tau$ because $p_X$ is bounded and
		$\norm{\nabla S_{12}}$ is bounded away from zero on the relevant level sets by
		\eqref{eq:grad-lower-tube}. Then
		\[
		\mathrm{BM}^{\mathrm{pair}}_\phi(\kappa_\varepsilon\tau)
		= \int_{\cX} q_\tau(x)\,\norm{\nabla S_{12}(x)}\,\mathrm{d}x.
		\]
		By the coarea formula for $C^1$ (indeed Lipschitz) functions
		\citep{EvansGariepy2015},
		\[
		\int_{\cX} q_\tau(x)\,\norm{\nabla S_{12}(x)}\,\mathrm{d}x
		= \int_{\RR}
		\biggl[\int_{\cX\cap S_{12}^{-1}(t)} q_\tau(x)\,\mathrm{d}\mathcal{H}^{d-1}(x)\biggr]
		\mathrm{d}t.
		\]
		By the definition of $q_\tau$, the integrand vanishes unless
		$t\in[\underline{s}_\varepsilon(\tau),\overline{s}_\varepsilon(\tau)]$. Hence
		\[
		\mathrm{BM}^{\mathrm{pair}}_\phi(\kappa_\varepsilon\tau)
		= \int_{\underline{s}_\varepsilon(\tau)}^{\overline{s}_\varepsilon(\tau)}
		\biggl[\int_{\cX\cap S_{12}^{-1}(t)}
		\frac{p_X(x)}{\norm{\nabla S_{12}(x)}}\,
		\mathrm{d}\mathcal{H}^{d-1}(x)\biggr]
		\mathrm{d}t.
		\]
		Let $F(t):=F_{12}(t)$ be the level-set profile from~\eqref{eq:level-set-profile}. By Assumption~\ref{ass:router-coarea-regularity}, $F(t)$ is finite for $|t|\le\delta_{\mathrm{reg}}$ and continuous at $t=0$.
		For all sufficiently small $\tau$, the interval $[\underline{s}_\varepsilon(\tau),\overline{s}_\varepsilon(\tau)]$ is contained in $(-\delta_{\mathrm{reg}},\delta_{\mathrm{reg}})$. Hence
		\[
		\int_{\underline{s}_\varepsilon(\tau)}^{\overline{s}_\varepsilon(\tau)} F(t)\,\mathrm{d}t
		= \bigl(\overline{s}_\varepsilon(\tau)-\underline{s}_\varepsilon(\tau)\bigr)F(0)
		+ \int_{\underline{s}_\varepsilon(\tau)}^{\overline{s}_\varepsilon(\tau)}\bigl(F(t)-F(0)\bigr)\,\mathrm{d}t.
		\]
		The absolute value of the second term is at most
		\[
		\bigl(\overline{s}_\varepsilon(\tau)-\underline{s}_\varepsilon(\tau)\bigr)
		\sup_{|t|\le \kappa_\varepsilon\tau}|F(t)-F(0)|
		= o\bigl(\overline{s}_\varepsilon(\tau)-\underline{s}_\varepsilon(\tau)\bigr),
		\]
		because $F$ is continuous at zero. Therefore
		\[
		\mathrm{BM}^{\mathrm{pair}}_\phi(\kappa_\varepsilon\tau)
		= \bigl(\overline{s}_\varepsilon(\tau)-\underline{s}_\varepsilon(\tau)\bigr)\,F(0)
		+ o\bigl(\overline{s}_\varepsilon(\tau)-\underline{s}_\varepsilon(\tau)\bigr).
		\]
		Finally,
		\[
		\overline{s}_\varepsilon(\tau)-\underline{s}_\varepsilon(\tau)
		= 2\tau\log\Bigl(\frac{1-\varepsilon}{\varepsilon}\Bigr),
		\]
		which yields the desired linear expansion with
		\[
		C_\varepsilon
		= 2\log\Bigl(\frac{1-\varepsilon}{\varepsilon}\Bigr)
		F(0)
		= 2\log\Bigl(\frac{1-\varepsilon}{\varepsilon}\Bigr)
		\int_{\cS_{12}}
		\frac{p_X(x)}{\norm{\nabla S_{12}(x)}}\,
		\mathrm{d}\mathcal{H}^{d-1}(x).
		\]
	\end{proof}
	
	The constant $C_\varepsilon$ has a simple interpretation. The factor $2\log((1-\varepsilon)/\varepsilon)$ is the slab width in score units, while the surface integral measures how much input probability sits near the interface after correcting for the local score slope. The same argument gives the analogous expansion for any sufficiently thin symmetric slab.
	
	\begin{lemma}[General tube formula for a single decision surface]\label{lem:tube-single}
		Suppose Assumption~\ref{ass:router-coarea-regularity} holds for a given pair
		$(k,\ell)$. Then there exists $w_0>0$ such that for all
		$w\in(0,w_0]$,
		\[
		\PP\bigl(|S_{k\ell}(X)|\le w\bigr)
		= 2w
		\int_{\cS_{k\ell}}
		\frac{p_X(x)}{\norm{\nabla S_{k\ell}(x)}}
		\,\mathrm{d}\mathcal{H}^{d-1}(x)
		+ o(w),
		\qquad w\downarrow0.
		\]
	\end{lemma}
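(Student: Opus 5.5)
The argument is the one used for Proposition~\ref{prop:coarea}, with the logistic slab $[\underline{s}_\varepsilon(\tau),\overline{s}_\varepsilon(\tau)]$ replaced by the symmetric slab $[-w,w]$. Write $S=S_{k\ell}$ and $F=F_{k\ell}$ for the level-set profile in~\eqref{eq:level-set-profile}, and take $w_0:=\delta_{\mathrm{reg}}$. For $w\in(0,w_0]$, I would express the slab mass as $\int_{\cX}\1_{[-w,w]}(S(x))\,p_X(x)\,\mathrm{d}x$. I would then insert the factor $\norm{\nabla S}/\norm{\nabla S}$, which is legitimate because $\norm{\nabla S}\ge c_0$ almost everywhere on the tube by~\eqref{eq:grad-lower-tube}. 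This gives the integrand $q_w\,\norm{\nabla S}$, where $q_w:=\1_{[-w,w]}(S)\,p_X/\norm{\nabla S}$ is measurable, nonnegative, and bounded by $\sup p_X/c_0$.

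Next I would apply the coarea formula \citep{EvansGariepy2015} to the $C^1$ (hence locally Lipschitz) function $S$ on the open neighborhood of the tube where it is $C^2$. Since $q_w\ge 0$, the formula holds by monotone convergence even without a priori integrability. The result is
\[
\PP\bigl(|S(X)|\le w\bigr)=\int_{-w}^{w}F(t)\,\mathrm{d}t .
\]
Finiteness of $F$ on $[-\delta_{\mathrm{reg}},\delta_{\mathrm{reg}}]$ is part of Assumption~\ref{ass:router-coarea-regularity}, so this identity is exact for every $w\le w_0$ and not only asymptotically.

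To get the expansion, I would split the integral as $2wF(0)+\int_{-w}^{w}(F(t)-F(0))\,\mathrm{d}t$. The remainder is bounded by $2w\sup_{|t|\le w}|F(t)-F(0)|$, which is $o(w)$ by continuity of $F$ at $0$. Since $F(0)=\int_{\cS_{k\ell}}p_X/\norm{\nabla S}\,\mathrm{d}\mathcal H^{d-1}$, restricted to $\cX$, this is the claimed formula.

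The main obstacle is bookkeeping at the coarea step. The set $\cX$ may have boundary, and the gradient bound holds only almost everywhere, so one must check that the set where $\nabla S=0$ does not contribute. The gradient bound makes that set Lebesgue-null, which handles the left side. On the right side, the coarea formula itself assigns zero $\mathcal H^{d-1}$-mass to critical points on almost every level set. The boundary of $\cX$ is absorbed because all integrals are taken over $\cX\cap S^{-1}(t)$. With these points checked, no further geometry (normal flow, Jacobians) is needed beyond continuity of $F$ at zero, which the assumption supplies.
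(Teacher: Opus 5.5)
Your proposal is correct and follows the paper's proof essentially verbatim. The paper also reruns the coarea argument of Proposition~\ref{prop:coarea} on the symmetric slab $[-w,w]$ to obtain $\PP(|S_{k\ell}(X)|\le w)=\int_{-w}^{w}F_{k\ell}(t)\,\mathrm{d}t$, then splits off $2wF_{k\ell}(0)$ and shows the remainder is $o(w)$ by continuity of $F_{k\ell}$ at zero. Your extra bookkeeping (the explicit choice $w_0=\delta_{\mathrm{reg}}$, monotone convergence for the nonnegative integrand, and the null set of critical points) is harmless and makes explicit what the paper leaves implicit.
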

	
	\begin{proof}
		Apply the argument of Proposition~\ref{prop:coarea} with
		$[\underline{s}_\varepsilon(\tau),\overline{s}_\varepsilon(\tau)]$ replaced by $[-\delta,\delta]$ and
		$S_{12}$ replaced by $S_{k\ell}$. This gives
		\[
		\PP\bigl(|S_{k\ell}(X)|\le\delta\bigr)
		= \int_{-\delta}^{\delta} F_{k\ell}(t)\,\mathrm{d}t,
		\]
		where $F_{k\ell}$ is the profile defined in~\eqref{eq:level-set-profile}. Assumption~\ref{ass:router-coarea-regularity} ensures $F_{k\ell}$ is finite near zero and continuous at zero. Therefore
		\[
		\int_{-\delta}^{\delta} F_{k\ell}(t)\,\mathrm{d}t
		=2\delta F_{k\ell}(0)+\int_{-\delta}^{\delta}\bigl(F_{k\ell}(t)-F_{k\ell}(0)\bigr)\,\mathrm{d}t
		=2\delta F_{k\ell}(0)+o(\delta),
		\]
		which yields the desired expansion.
	\end{proof}
	
	Lemma~\ref{lem:tube-single} is purely geometric. It says that the probability of falling in a slab of width $\delta$ around a smooth hypersurface is proportional, to first order, to $\delta$ times a surface integral determined by $p_X$ and the local slope of the defining function.
	
	For $K>2$ we can obtain a linear upper bound on the pairwise profile by summing the pairwise contributions. Using the events $A_{k\ell}(\varepsilon,\tau)$ from~\eqref{eq:pairwise-ambiguity-event}, we have
	\[
	\mathrm{BM}^{\mathrm{pair}}_\phi(\kappa_\varepsilon\tau)
	= \PP\Bigl(\bigcup_{1\le k<\ell\le K} A_{k\ell}(\varepsilon,\tau)\Bigr)
	\le \sum_{1\le k<\ell\le K}
	\PP\bigl(A_{k\ell}(\varepsilon,\tau)\bigr).
	\]
	Each probability $\PP(A_{k\ell}(\varepsilon,\tau))$ has exactly the same
	representation as in the binary case with $S_{12}$ replaced by $S_{k\ell}$ and hence
	satisfies a linear expansion
	\[
	\PP\bigl(A_{k\ell}(\varepsilon,\tau)\bigr)
	= C_{\varepsilon,k\ell}\tau + o(\tau),
	\]
	with
	\[
	C_{\varepsilon,k\ell}
	= 2\log\!\Bigl(\frac{1-\varepsilon}{\varepsilon}\Bigr)
	\int_{\cS_{k\ell}}
	\frac{p_X(x)}{\norm{\nabla S_{k\ell}(x)}}
	\,\mathrm{d}\mathcal{H}^{d-1}(x),
	\]
	by the same coarea-formula argument. Summing over all pairs we obtain
	\begin{equation}\label{eq:BM-pair-linear}
		\mathrm{BM}^{\mathrm{pair}}_\phi(\kappa_\varepsilon\tau)
		\le \widetilde C_\varepsilon \tau + o(\tau),
		\qquad
		\widetilde C_\varepsilon
		:= \sum_{1\le k<\ell\le K} C_{\varepsilon,k\ell}.
	\end{equation}
	Thus, for general $K$, the leading constant is controlled by the total surface area
	of all pairwise decision boundaries weighted by $p_X/\|\nabla S_{k\ell}\|$.
	
	For more than two experts, the theorem chain below only needs a uniform upper bound rather than a sharp asymptotic. The following proposition packages the pairwise slab estimates into a single margin-tail estimate.
	
	\begin{proposition}[Global margin tail bound]\label{prop:margin-tail-upper}
		Suppose Assumption~\ref{ass:router-coarea-regularity} holds. Then there exist
		constants $\delta_0>0$ and $C_{\mathrm{margin}}>0$ such that
		\[
		\PP\bigl(\Delta_{\min}(X;\phi)\le\delta\bigr)
		\le C_{\mathrm{margin}}\;\delta,
		\qquad 0<\delta\le\delta_0,
		\]
		where $\Delta_{\min}$ is defined in~\eqref{eq:global-margin}.
	\end{proposition}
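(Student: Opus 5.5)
The plan is to reduce the global statement to the single-interface tube estimate of Lemma~\ref{lem:tube-single} via a union bound, and then to convert the asymptotic $o(w)$ statement into a genuine non-asymptotic linear bound on a fixed interval $(0,\delta_0]$. First, since $\Delta_{\min}(x;\phi)=\min_{k\neq\ell}|S_{k\ell}(x;\phi)|$ and $S_{\ell k}=-S_{k\ell}$, we have the set inclusion
\[
\{\Delta_{\min}(X;\phi)\le\delta\}\subseteq\bigcup_{1\le k<\ell\le K}\{|S_{k\ell}(X)|\le\delta\},
\]
so that $\PP(\Delta_{\min}(X;\phi)\le\delta)\le\sum_{k<\ell}\PP(|S_{k\ell}(X)|\le\delta)$. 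It therefore suffices to produce, for each of the finitely many pairs, constants $\delta_{k\ell}>0$ and $C_{k\ell}<\infty$ with $\PP(|S_{k\ell}(X)|\le\delta)\le C_{k\ell}\delta$ for $0<\delta\le\delta_{k\ell}$. We can then take $\delta_0:=\min_{k<\ell}\delta_{k\ell}$ and $C_{\mathrm{margin}}:=\sum_{k<\ell}C_{k\ell}$.

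For a single pair I would use the exact identity already obtained in the proofs of Proposition~\ref{prop:coarea} and Lemma~\ref{lem:tube-single}. The coarea formula applies on the tube, since $\|\nabla S_{k\ell}\|\ge c_0>0$ a.e.\ there by~\eqref{eq:grad-lower-tube}. It gives, for $\delta\le\delta_{\mathrm{reg}}$,
\[
\PP\bigl(|S_{k\ell}(X)|\le\delta\bigr)=\int_{-\delta}^{\delta}F_{k\ell}(t)\,\mathrm{d}t ,
\]
with $F_{k\ell}$ as in~\eqref{eq:level-set-profile}. By Assumption~\ref{ass:router-coarea-regularity}, $F_{k\ell}$ is continuous at $0$ and $F_{k\ell}(0)<\infty$. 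Hence there is $\delta_{k\ell}\in(0,\delta_{\mathrm{reg}}]$ with $F_{k\ell}(t)\le F_{k\ell}(0)+1$ for $|t|\le\delta_{k\ell}$. This yields $\PP(|S_{k\ell}(X)|\le\delta)\le 2\bigl(F_{k\ell}(0)+1\bigr)\delta$ on $(0,\delta_{k\ell}]$, so we may take $C_{k\ell}=2(F_{k\ell}(0)+1)$. Equivalently, one can argue from Lemma~\ref{lem:tube-single} directly: the $o(w)$ remainder is at most $w$ for $w$ small, giving the same constant.

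The only step needing care is the passage from the asymptotic expansion to a uniform bound on a whole interval, rather than only as $\delta\downarrow0$. Bounding $F_{k\ell}$ near zero via continuity at $0$ handles this cleanly, and nothing more is needed. Two things should be checked. The first is that the coarea identity is legitimate in that form, i.e.\ that $p_X/\|\nabla S_{k\ell}\|$ is measurable and bounded on the tube; this follows from the bounded density and the gradient lower bound. The second is that the constants depend only on the fixed $\phi$, consistent with the statement (no uniformity in $\phi$ is claimed). Positivity of $C_{\mathrm{margin}}$ is automatic from $C_{k\ell}\ge2$.
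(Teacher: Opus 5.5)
Your proposal is correct and follows the paper's proof essentially step for step. Both arguments take a union bound over the $\binom{K}{2}$ pairwise slabs, use the coarea identity $\PP(|S_{k\ell}(X)|\le w)=\int_{-w}^{w}F_{k\ell}(t)\,\mathrm{d}t$, bound $F_{k\ell}$ near $0$, and set $C_{\mathrm{margin}}$ to the sum of the pairwise constants. Your explicit choice $C_{k\ell}=2(F_{k\ell}(0)+1)$ spells out the paper's $\sup_{|t|\le w_0}F_{k\ell}(t)<\infty$ step and makes clear that continuity at $t=0$, which is all Assumption~\ref{ass:router-coarea-regularity} provides, suffices.
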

	
	\begin{proof}
		By definition of $\Delta_{\min}$ and a union bound,
		\[
		\PP\bigl(\Delta_{\min}(X;\phi)\le w\bigr)
		\le \sum_{1\le k<\ell\le K}
		\PP\bigl(|S_{k\ell}(X)|\le w\bigr).
		\]
		Fix a pair $(k,\ell)$. By the coarea formula as in
		Lemma~\ref{lem:tube-single}, for $w$ sufficiently small we have
		\[
		\PP\bigl(|S_{k\ell}(X)|\le w\bigr)
		= \int_{-w}^{w} F_{k\ell}(t)\,\mathrm{d}t,
		\]
		where $F_{k\ell}(t)$ is finite and continuous on a neighborhood of $0$ by
		Assumption~\ref{ass:router-coarea-regularity}. Hence there exists $w_0>0$ such
		that $\sup_{|t|\le w_0} F_{k\ell}(t) < \infty$, and for all
		$w\in(0,w_0]$,
		\[
		\PP\bigl(|S_{k\ell}(X)|\le w\bigr)
		\le 2w \sup_{|t|\le w_0} F_{k\ell}(t)
		=: C_{k\ell}\,w.
		\]
		Summing over all $\binom{K}{2}$ pairs gives
		\[
		\PP\bigl(\Delta_{\min}(X;\phi)\le w\bigr)
		\le \Bigl(\sum_{1\le k<\ell\le K} C_{k\ell}\Bigr)w,
		\]
		so we can take
		$C_{\mathrm{margin}} := \sum_{1\le k<\ell\le K} C_{k\ell}$.
	\end{proof}
	
	Proposition~\ref{prop:margin-tail-upper} says that, under mild regularity, the global margin distribution puts at most linear mass near zero. So inputs that sit close to a routing boundary become rare at rate $O(\delta)$ as the threshold $\delta$ shrinks. Later we will use exactly this fact to treat the $O(\tau)$ boundary layer as a perturbative part of the risk.
	
	The following elementary example makes the constant in the coarea expansion explicit. It is also a useful local model for a smooth router, because near a regular interface the score difference is well approximated by its first-order linearization.
	
	\begin{example}[Binary linear router under a Gaussian input]\label{ex:gaussian-linear-router}
		Let $K=2$, let $\cX=\RR^d$, and suppose $X\sim \mathcal{N}(0,\Sigma)$ with $\Sigma$ positive definite. Consider a linear score difference
		\begin{equation}\label{eq:gaussian-linear-score}
			S_{12}(x;\phi)=\langle \nu,x\rangle+b,
			\qquad \nu\in\RR^d\setminus\{0\},\quad b\in\RR.
		\end{equation}
		Set $\sigma_\nu^2:=\nu^\top\Sigma \nu$ and write $Z:=S_{12}(X;\phi)$. Then $Z\sim N(b,\sigma_\nu^2)$. If $\Phi_{\mathrm N}$ and $\varphi_{\mathrm N}$ denote the standard normal distribution function and density, respectively, then for every $r>0$,
		\begin{equation}\label{eq:gaussian-linear-slab-exact}
			\PP(|S_{12}(X;\phi)|\le r)
			=
			\Phi_{\mathrm N}\!\left(\frac{r-b}{\sigma_\nu}\right)
			-
			\Phi_{\mathrm N}\!\left(\frac{-r-b}{\sigma_\nu}\right).
		\end{equation}
		Consequently, as $r\downarrow0$,
		\begin{equation}\label{eq:gaussian-linear-slab-asymptotic}
			\PP(|S_{12}(X;\phi)|\le r)
			=
			2r\,\frac{\varphi_{\mathrm N}(b/\sigma_\nu)}{\sigma_\nu}
			+o(r).
		\end{equation}
		Taking $r=\kappa_\varepsilon\tau$ gives the binary boundary-mass asymptotic
		\begin{equation}\label{eq:gaussian-linear-bm-asymptotic}
			\mathrm{BM}^{\mathrm{pair}}_\phi(\kappa_\varepsilon\tau)
			=
			2\kappa_\varepsilon\,
			\frac{\varphi_{\mathrm N}(b/\sigma_\nu)}{\sigma_\nu}\,\tau
			+o(\tau),
			\qquad \tau\downarrow0.
		\end{equation}
		This agrees exactly with Proposition~\ref{prop:coarea}. Indeed, the decision surface is the hyperplane $\{x:\langle \nu,x\rangle+b=0\}$, $\nabla S_{12}=\nu$, and the coarea identity gives
		\begin{equation}\label{eq:gaussian-linear-coarea-constant}
			\int_{\{S_{12}=0\}}
			\frac{p_X(x)}{\|\nu\|}
			\,\mathrm d\mathcal H^{d-1}(x)
			=
			\frac{\varphi_{\mathrm N}(b/\sigma_\nu)}{\sigma_\nu}.
		\end{equation}
		Thus the abstract surface integral is simply the density at zero of the projected score $S_{12}(X;\phi)$. The example also shows how the coefficient changes with the position and scale of the router, since moving the hyperplane into a low-density region or scaling the logit difference upward reduces the amount of mass in an $O(\tau)$ score slab.
	\end{example}
	
	For the main results, the linear upper bound in Proposition~\ref{prop:margin-tail-upper} is the only geometric input that is needed. With stronger transversality assumptions one can derive a full first-order asymptotic for the multi-class boundary mass, including corrections for intersections of interfaces. That refinement would not change the soft-to-hard theorem below, so we do not pursue it here.
	
	\section{The variational zero-temperature limit and \texorpdfstring{$\Gamma$}{Gamma}-convergence}\label{sec:gamma}
	
	The preceding section controls boundary mass for a fixed router. We now turn that geometric estimate into a statement about objectives. Pointwise convergence of $L_\tau(\theta,\phi)$ to $L_0(\theta,\phi)$ is not enough for optimization: one wants to know what happens to minimizers, or almost minimizers, as the temperature is lowered. $\Gamma$-convergence is the standard framework for this passage from approximation of functionals to convergence of minimization problems \citep{Braides2002}. In the present setting it follows from a uniform soft-to-hard comparison and compactness of the parameter space.
	
	Recall the logit map, softmax weights, soft predictor, and soft risk from
	\eqref{eq:logit-map}--\eqref{eq:soft-risk-def}. For convenience we keep the label
	\begin{equation}\label{eq:Ltau}
		L_\tau(\theta,\phi)
		= \EE\bigl[(Y-h_{\theta,\phi,\tau}(X))^2\bigr]
	\end{equation}
	for the temperature-$\tau$ risk functional.
	
	\subsection{Hard routing and limiting risk}
	
	Given router scores $a(\cdot;\phi)$, the hard winner $k^\star$, hard predictor $h_{\theta,\phi,0}$, and hard risk $L_0$ are defined in
	\eqref{eq:hard-winner}--\eqref{eq:hard-risk-def}. Equivalently, the hard routing weights are
	\begin{equation}\label{eq:hard-router-weights}
		p_k^{(0)}(x;\phi):=\1_{k = k^\star(x;\phi)},
		\qquad k=1,\dots,K.
	\end{equation}
	Under the tail/margin condition below we will have $P_X(\text{tie})=0$, so the deterministic tie-breaking convention in~\eqref{eq:hard-winner} does not affect the risk. For later reference we also write
	\begin{equation}\label{eq:L0}
		L_0(\theta,\phi)
		= \EE\bigl[(Y-h_{\theta,\phi,0}(X))^2\bigr].
	\end{equation}
	
	The section has two steps. First we prove a uniform quantitative estimate,
		\[
		\sup_{\theta,\phi}\abs{L_\tau(\theta,\phi)-L_0(\theta,\phi)}\lesssim\tau^\alpha,
		\]
		under a margin-tail condition. We then use this estimate to obtain $\Gamma$-convergence of $L_\tau$ to $L_0$ as $\tau\to0$.
	
	\subsection{Softmax tail bounds}
	
	The only softmax fact needed for the comparison is the following elementary tail bound. It says that once the top logit is separated from the runner-up, the total mass assigned to nonwinning experts is exponentially small in the margin divided by temperature.
	
	\begin{lemma}[Softmax tail, pointwise]\label{lem:softmax-tail}
		Let $z=(z_1,\dots,z_K)\in\RR^K$. Let $z_{(1)}\ge\cdots\ge z_{(K)}$ be the
		sorted values and let $p_k(z;\tau)$ be the softmax weights with
		temperature~$\tau$, i.e.\ $p_k(z;\tau)=\exp(z_k/\tau)/\sum_i \exp(z_i/\tau)$.
		Then for any $j\ge2$,
		\[
		p_{(j)}(z;\tau)
		\le \exp\bigl( -(z_{(1)}-z_{(j)})/\tau \bigr).
		\]
		In particular,
		\[
		\sum_{j=2}^K p_{(j)}(z;\tau)
		\le (K-1)\exp\bigl( -\Delta(z)/\tau \bigr),
		\]
		where $\Delta(z):=z_{(1)}-z_{(2)}$ is the top-two margin.
	\end{lemma}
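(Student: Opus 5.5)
The plan is to prove the pointwise bound directly from the definition of the softmax, by comparing each nonwinning weight with the weight of a single top coordinate. Fix an index $i_1$ attaining $z_{(1)}$, so that $z_{i_1}=z_{(1)}$. For any index $i$ whose value is $z_{(j)}$ with $j\ge2$, the denominator $\sum_{m=1}^K \exp(z_m/\tau)$ is at least the single term $\exp(z_{i_1}/\tau)=\exp(z_{(1)}/\tau)$, because all terms are positive. We then have
\[
p_{(j)}(z;\tau)=\frac{\exp(z_{(j)}/\tau)}{\sum_{m}\exp(z_m/\tau)}\le \frac{\exp(z_{(j)}/\tau)}{\exp(z_{(1)}/\tau)}=\exp\bigl(-(z_{(1)}-z_{(j)})/\tau\bigr),
\]
which is the first claim. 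The only bookkeeping point is the meaning of $p_{(j)}$ when there are ties. We interpret it as the weight of the coordinate sitting in position $j$ of a fixed sorting, with the coordinate in position $1$ being $i_1$. Then the positions $j\ge2$ range over coordinates distinct from $i_1$, so the term $\exp(z_{i_1}/\tau)$ is genuinely present in the denominator and distinct from the numerator. If $z_{(j)}=z_{(1)}$ because of a tie, the bound reads $p_{(j)}\le 1$, which is trivially true.

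For the second claim, sorting gives $z_{(j)}\le z_{(2)}$ for every $j\ge2$. Hence $z_{(1)}-z_{(j)}\ge \Delta(z)$, and each of the $K-1$ terms is at most $\exp(-\Delta(z)/\tau)$. Summing over $j=2,\dots,K$ yields $(K-1)\exp(-\Delta(z)/\tau)$.

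There is no real obstacle here. The only step needing care is the tie convention just described, namely making sure that position $1$ is a single coordinate excluded from positions $2,\dots,K$. With that convention fixed, the argument applies verbatim to $z=a(x;\phi)$ with $i_1=k^\star(x;\phi)$, and it recovers \eqref{eq:setup-softmax-tail-preview}.
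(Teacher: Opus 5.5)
Your proposal is correct and follows the paper's proof: bound the softmax denominator below by the single top term $\exp(z_{(1)}/\tau)$ to get the pointwise bound, then use $z_{(1)}-z_{(j)}\ge\Delta(z)$ for $j\ge2$ and sum the $K-1$ terms. Your tie-breaking remark is harmless extra care, since the single-term lower bound on the denominator holds whatever sorting convention is used.
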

	
	\begin{proof}
		For each $j\ge2$,
		\[
		p_{(j)}(z;\tau)
		= \frac{\exp(z_{(j)}/\tau)}{\sum_{i=1}^K\exp(z_{(i)}/\tau)}
		\le \frac{\exp(z_{(j)}/\tau)}{\exp(z_{(1)}/\tau)}
		= \exp\bigl(-(z_{(1)}-z_{(j)})/\tau\bigr).
		\]
		Summing over $j\ge2$ yields the second claim with $\Delta(z)$ defined as
		above.
	\end{proof}
	
	\subsection{A quantitative soft--hard comparison}
	
	Recall the top--two margin $\Delta(x;\phi)$ from~\eqref{eq:routing-margin}. A positive margin $\Delta(x;\phi)>0$ is equivalent to uniqueness of the argmax of $a(x;\phi)$ at~$x$.
	The rate comes from a uniform margin-tail condition. Informally, it rules out parameter values for which a non-negligible amount of probability mass remains arbitrarily close to a routing tie.
	
	\begin{assumption}[Uniform margin-tail control]\label{ass:uniform-tail}
		There exist constants $C_{\mathrm{mt}}>0$ and $\alpha>0$ such that, for all admissible router parameters $\phi$ and all $t>0$,
		\begin{equation}\label{eq:uniform-margin-tail}
			\PP\bigl(\Delta(X;\phi) \le t\bigr)
			\le C_{\mathrm{mt}} t^\alpha.
		\end{equation}
	\end{assumption}
	
	Assumption~\ref{ass:uniform-tail} gives uniform control of the margin distribution near zero. This is the only distributional input needed for the bounded-output soft--hard theorem below. If outputs are unbounded, a corresponding result would require additional uniform integrability or localized moment assumptions; we do not use such a variant in the main theorem.
	
	To obtain a uniform rate, we impose a global boundedness assumption on the response and expert families. This is not conceptually central; it is simply a convenient way to keep the soft--hard comparison from carrying additional tail terms.
	
	\begin{assumption}[Uniform boundedness]\label{ass:bounded-Y-f}
		There exist finite constants $B_Y,B_f>0$ such that
		$\abs{Y}\le B_Y$ almost surely and
		$\abs{f_k(x;\theta_k)}\le B_f$ for all $k$, all $x\in\cX$ and all admissible
		$\theta_k$.
	\end{assumption}
	
	Assumption~\ref{ass:bounded-Y-f} is included for convenience. It lets the quantitative soft--hard bounds go through without carrying a separate layer of tail bookkeeping. The condition is natural in classification after label smoothing or clipping, and in regression when labels and predictions are truncated to a fixed range. If the outputs are unbounded, one can usually replace it by moment assumptions plus concentration estimates, but the constants and the proof become noticeably less clean.
	Before stating the uniform theorem, it is useful to record the direct fixed-router consequence of the coarea control from Section~\ref{sec:global-geometry}. Under Assumption~\ref{ass:router-coarea-regularity}, the near-tie layer has linear mass, and that already forces an explicit linear soft-to-hard risk bound for bounded outputs.
	
	\begin{corollary}[Coarea-based $O(\tau)$ soft-to-hard bound]\label{cor:coarea-soft-hard-linear}
		Suppose Assumption~\ref{ass:router-coarea-regularity} holds for a fixed router parameter $\phi$, the argmax of $a(\cdot;\phi)$ is unique $P_X$-almost surely, and Assumption~\ref{ass:bounded-Y-f} holds. Then there exist constants $\tau_0>0$ and $C_{\mathrm{co}}(\phi)>0$ such that for every admissible expert parameter $\theta$ and every $\tau\in(0,\tau_0]$,
		\[
		\abs{L_\tau(\theta,\phi)-L_0(\theta,\phi)}
		\le C_{\mathrm{co}}(\phi)\,\tau.
		\]
	\end{corollary}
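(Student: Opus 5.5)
The plan is to bound the pointwise difference of squared losses by the total softmax mass on nonwinning experts, then integrate using the softmax tail (Lemma~\ref{lem:softmax-tail}) together with the linear margin-tail bound (Proposition~\ref{prop:margin-tail-upper}). For each $x$ with a unique argmax, writing $k^\star=k^\star(x;\phi)$,
\[
h_{\theta,\phi,\tau}(x)-h_{\theta,\phi,0}(x)=\sum_{j\neq k^\star}p_j^{(\tau)}(x;\phi)\bigl(f_j(x;\theta_j)-f_{k^\star}(x;\theta_{k^\star})\bigr),
\]
so $|h_\tau-h_0|\le 2B_f\,r_\tau(x)$, with $r_\tau(x):=\sum_{j\neq k^\star}p_j^{(\tau)}(x;\phi)\le\min\{1,(K-1)e^{-\Delta(x;\phi)/\tau}\}$. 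Using $|(Y-h_\tau)^2-(Y-h_0)^2|=|h_0-h_\tau|\,|2Y-h_\tau-h_0|\le 2(B_Y+B_f)|h_\tau-h_0|$, and the fact that ties are $P_X$-null by hypothesis, I get
\[
|L_\tau-L_0|\le 4B_f(B_Y+B_f)\,\EE\bigl[\min\{1,(K-1)e^{-\Delta(X;\phi)/\tau}\}\bigr],
\]
with a constant uniform in $\theta$.

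The remaining task is to show that $\EE[\min\{1,(K-1)e^{-\Delta/\tau}\}]=O(\tau)$. The key point is that $\Delta(x;\phi)\ge\Delta_{\min}(x;\phi)$, so $\PP(\Delta\le t)\le\PP(\Delta_{\min}\le t)\le C_{\mathrm{margin}}t$ for $t\le\delta_0$, by Proposition~\ref{prop:margin-tail-upper}. I would write $G(t)=\PP(\Delta\le t)$ and use the layer-cake (Stieltjes) representation
\[
\EE[(K-1)e^{-\Delta/\tau}\wedge 1]\le \int_0^\infty (K-1)e^{-t/\tau}\,\mathrm{d}G(t)=\frac{K-1}{\tau}\int_0^\infty e^{-t/\tau}G(t)\,\mathrm{d}t.
\]
The second equality follows by integration by parts, using $G(0)=0$ from almost-sure uniqueness. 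I would split this integral at $\delta_0$. On $[0,\delta_0]$, the bound $G(t)\le C_{\mathrm{margin}}t$ gives at most $(K-1)C_{\mathrm{margin}}\tau^{-1}\int_0^\infty te^{-t/\tau}\,\mathrm{d}t=(K-1)C_{\mathrm{margin}}\tau$. On $[\delta_0,\infty)$, I use $G\le1$, which gives at most $(K-1)e^{-\delta_0/\tau}$. This is $\le\tau$ once $\tau\le\tau_0$ for a suitable $\tau_0$ depending on $\delta_0$ and $K$.

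Combining the two pieces yields $C_{\mathrm{co}}(\phi)=4B_f(B_Y+B_f)(K-1)(C_{\mathrm{margin}}+1)$, and this constant depends on $\phi$ only through $C_{\mathrm{margin}}$ and $\delta_0$.

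The main obstacle is the passage from the pairwise tube estimate to the top-two margin, and in particular the requirement $G(0)=0$. The comparison $\Delta\ge\Delta_{\min}$ handles the first issue directly, and the almost-sure uniqueness hypothesis supplies the second. The other point needing care is that Proposition~\ref{prop:margin-tail-upper} only controls small $t$, which is why the integral is split at $\delta_0$. If the Stieltjes integration by parts is awkward to justify, I would instead use the dyadic decomposition $\{2^{m}\tau<\Delta\le 2^{m+1}\tau\}$ and sum the resulting geometric series.
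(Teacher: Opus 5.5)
Your proposal is correct and follows the paper's proof essentially step for step: the reduction $|L_\tau-L_0|\le 2(B_Y+B_f)\,\EE|h_\tau-h_0|$, the softmax-tail bound $|h_\tau-h_0|\le 2B_f(K-1)e^{-\Delta/\tau}$, the comparison $\Delta_{\min}\le\Delta$ feeding Proposition~\ref{prop:margin-tail-upper}, and the integration-by-parts identity split at $\delta_0$, with $\tau_0$ chosen to absorb the $e^{-\delta_0/\tau}$ term. The $\min\{1,\cdot\}$ truncation and the extra factor $K-1$ on the tail term are only bookkeeping differences, and $G(0)=0$ is not actually needed for the identity $\EE[e^{-\Delta/\tau}]=\tau^{-1}\int_0^\infty e^{-t/\tau}\PP(\Delta\le t)\,\mathrm{d}t$, which holds for every nonnegative $\Delta$ by Tonelli.
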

	
	\begin{proof}
		Let $\Delta(x;\phi)=a_{(1)}(x;\phi)-a_{(2)}(x;\phi)$ denote the top-two margin. Since $\Delta_{\min}(x;\phi)\le \Delta(x;\phi)$ for every $x$, the event $\{\Delta(X;\phi)\le t\}$ is contained in $\{\Delta_{\min}(X;\phi)\le t\}$. Proposition~\ref{prop:margin-tail-upper} therefore yields constants $\delta_0>0$ and $C_{\mathrm{margin}}(\phi)>0$ such that
		\[
		\PP\bigl(\Delta(X;\phi)\le t\bigr)\le C_{\mathrm{margin}}(\phi)\,t,
		\qquad 0<t\le \delta_0.
		\]
		Fix $\theta$ and abbreviate $h_\tau:=h_{\theta,\phi,\tau}$ and $h_0:=h_{\theta,\phi,0}$. Using
		\[
		(Y-h_\tau(X))^2-(Y-h_0(X))^2=(h_0(X)-h_\tau(X))(2Y-h_\tau(X)-h_0(X)),
		\]
		together with the bounds $\abs{Y}\le B_Y$ and $\abs{h_\tau(X)},\abs{h_0(X)}\le B_f$, we obtain
		\[
		\abs{L_\tau(\theta,\phi)-L_0(\theta,\phi)}
		\le (2B_Y+2B_f)\,\EE\bigl[\abs{h_\tau(X)-h_0(X)}\bigr].
		\]
		The softmax tail bound then gives
		\[
		\abs{h_\tau(x)-h_0(x)}
		\le 2B_f(K-1)e^{-\Delta(x;\phi)/\tau}.
		\]
		Hence
		\[
		\abs{L_\tau(\theta,\phi)-L_0(\theta,\phi)}
		\le 4B_f(B_Y+B_f)(K-1)\,\EE\bigl[e^{-\Delta(X;\phi)/\tau}\bigr].
		\]
		Using integration by parts with the distribution function of $\Delta(X;\phi)$,
		\[
		\EE\bigl[e^{-\Delta(X;\phi)/\tau}\bigr]
		=\frac{1}{\tau}\int_0^\infty \PP\bigl(\Delta(X;\phi)\le t\bigr)e^{-t/\tau}\,dt.
		\]
		Split the integral at $\delta_0$. On $(0,\delta_0]$ we use the linear bound above, while on $[\delta_0,\infty)$ we use $\PP(\Delta\le t)\le 1$. This gives
		\[
		\EE\bigl[e^{-\Delta(X;\phi)/\tau}\bigr]
		\le \frac{C_{\mathrm{margin}}(\phi)}{\tau}\int_0^{\delta_0} t e^{-t/\tau}\,dt + e^{-\delta_0/\tau}
		\le C_{\mathrm{margin}}(\phi)\tau + e^{-\delta_0/\tau}.
		\]
		Choose $\tau_0>0$ so that $e^{-\delta_0/\tau}\le \tau$ for all $\tau\in(0,\tau_0]$. Then
		\[
		\EE\bigl[e^{-\Delta(X;\phi)/\tau}\bigr]\le \bigl(C_{\mathrm{margin}}(\phi)+1\bigr)\tau,
		\qquad 0<\tau\le \tau_0,
		\]
		and the claimed bound follows after absorbing constants into $C_{\mathrm{co}}(\phi)$.
	\end{proof}
	
	We can now state the quantitative comparison. The theorem says that the soft objective approaches the hard objective at the same polynomial order as the margin-tail bound. Thus the rate is not an intrinsic property of softmax alone; it is set by how much input mass the router places near ties.
	
	\begin{theorem}[Uniform soft-to-hard approximation]\label{thm:soft-hard-quant}
		Suppose Assumptions~\ref{ass:uniform-tail} and~\ref{ass:bounded-Y-f} hold. Since the margin-tail condition
		implies that the hard argmax is unique $P_X$-almost surely, the hard router is
		well defined up to a null set. Then there exists a constant
		$C_{\mathrm{sh}}>0$, depending only on $K$, $C_{\mathrm{mt}}$, $\alpha$, $B_Y$, and $B_f$, such that
		for all $\tau\in(0,1]$ and all $(\theta,\phi)$,
		\[
		\abs{L_\tau(\theta,\phi)-L_0(\theta,\phi)}
		\;\le\;
		C_{\mathrm{sh}}\;\tau^{\alpha}.
		\]
		In particular, $L_\tau\to L_0$ uniformly on compact parameter sets, with rate $O(\tau^{\alpha})$.
	\end{theorem}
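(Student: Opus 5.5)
The plan is to repeat the argument of Corollary~\ref{cor:coarea-soft-hard-linear} with the fixed-router linear bound replaced by the uniform margin-tail condition of Assumption~\ref{ass:uniform-tail}. First, ties are negligible. Letting $t\downarrow0$ in \eqref{eq:uniform-margin-tail} gives $\PP(\Delta(X;\phi)=0)\le\lim_{t\downarrow0}C_{\mathrm{mt}}t^\alpha=0$ for every admissible $\phi$, since $\alpha>0$. Hence $\mathcal M(X;\phi)$ is a singleton $P_X$-almost surely, the tie-breaking rule \eqref{eq:hard-winner} is irrelevant, and $h_{\theta,\phi,0}$ is determined up to a null set.

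Second, the risk difference is reduced pointwise. Fix $(\theta,\phi)$, write $h_\tau,h_0$, and use the factorization
\[
(Y-h_\tau)^2-(Y-h_0)^2=(h_0-h_\tau)(2Y-h_\tau-h_0).
\]
Since $h_\tau$ is a convex combination of the $f_k$, we have $|h_\tau|,|h_0|\le B_f$ by Assumption~\ref{ass:bounded-Y-f}, so the second factor is at most $2B_Y+2B_f$. On the full-measure event $\{\Delta>0\}$,
\[
h_\tau-h_0=\sum_{j\ne k^\star}p_j^{(\tau)}\bigl(f_j-f_{k^\star}\bigr),
\]
and Lemma~\ref{lem:softmax-tail} gives $|h_\tau-h_0|\le 2B_f(K-1)e^{-\Delta/\tau}$. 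Therefore
\[
|L_\tau-L_0|\le 4B_f(B_Y+B_f)(K-1)\,\EE\bigl[e^{-\Delta(X;\phi)/\tau}\bigr].
\]
The prefactor does not depend on $(\theta,\phi)$.

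Third, the exponential moment is bounded uniformly. By the layer-cake (integration by parts) identity already used in the corollary,
\[
\EE\bigl[e^{-\Delta/\tau}\bigr]=\tau^{-1}\int_0^\infty\PP(\Delta\le t)\,e^{-t/\tau}\,dt.
\]
Because \eqref{eq:uniform-margin-tail} holds for all $t>0$, no splitting is needed:
\[
\EE\bigl[e^{-\Delta/\tau}\bigr]\le C_{\mathrm{mt}}\,\tau^{-1}\int_0^\infty t^\alpha e^{-t/\tau}\,dt=C_{\mathrm{mt}}\,\Gamma(\alpha+1)\,\tau^\alpha.
\]
Combining the last two steps gives
\[
C_{\mathrm{sh}}=4(K-1)B_f(B_Y+B_f)\,C_{\mathrm{mt}}\,\Gamma(\alpha+1),
\]
which depends only on the listed quantities. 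The bound holds for all $\tau>0$, and in particular on $(0,1]$. If one prefers to avoid the Gamma function, one can instead split at $t=1$ and use $\PP\le1$ beyond it; this gives an extra term $e^{-1/\tau}\le c_\alpha\tau^\alpha$ on $(0,1]$, which explains the restriction $\tau\le1$. Uniform convergence on compact sets, indeed on the whole parameter space, then follows immediately.

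The main point requiring care is the layer-cake step. The identity should be justified via Fubini: write
\[
e^{-\Delta/\tau}=\int_\Delta^\infty\tau^{-1}e^{-t/\tau}\,dt,
\]
then swap expectation and integral, which yields the integral against $\PP(\Delta\le t)$ directly. This avoids any issue with atoms of the distribution of $\Delta$. A secondary point is measurability of $\Delta(X;\phi)$ and $h_0$, which follows from measurability of the logits (Assumption~\ref{ass:meas-int}) because order statistics of measurable functions are measurable.
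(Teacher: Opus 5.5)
Your proposal is correct and follows essentially the same route as the paper. Both factor the squared-loss difference, bound $|h_\tau-h_0|\le 2B_f(K-1)e^{-\Delta/\tau}$ via Lemma~\ref{lem:softmax-tail}, and integrate the uniform margin tail against the exponential kernel to obtain $C_{\mathrm{mt}}\Gamma(\alpha+1)\tau^\alpha$, giving the same constant. Your Fubini derivation of the layer-cake identity is a slightly cleaner justification than the paper's integration by parts, since it does not need $\PP(\Delta(X;\phi)=0)=0$.
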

	
	\begin{proof}
		We argue in three short steps. First we reduce the risk difference to the predictor gap $h_\tau-h_0$. Then we control that gap with the softmax tail. Finally we integrate against the margin distribution.
		
		We first reduce the risk difference to an $L^1$ predictor gap. Fix $(\theta,\phi)$ and abbreviate $h_\tau:=h_{\theta,\phi,\tau}$ and $h_0:=h_{\theta,\phi,0}$. Then
		\[
		(Y-h_\tau(X))^2 - (Y-h_0(X))^2
		= (h_0(X)-h_\tau(X))\bigl(2Y-h_\tau(X)-h_0(X)\bigr).
		\]
		Taking absolute values and expectations gives
		\begin{equation}\label{eq:soft-hard-start}
			\abs{L_\tau(\theta,\phi)-L_0(\theta,\phi)}
			\le \EE\Bigl[
			\abs{h_\tau(X)-h_0(X)}
			\bigl(2\abs{Y}+\abs{h_\tau(X)}+\abs{h_0(X)}\bigr)
			\Bigr].
		\end{equation}
		By Assumption~\ref{ass:bounded-Y-f}, $|Y|\le B_Y$ and $|h_\tau(X)|,|h_0(X)|\le B_f$ almost surely. Hence
		\[
		2\abs{Y}+\abs{h_\tau(X)}+\abs{h_0(X)}
		\le 2B_Y+2B_f
		=: C_0,
		\]
		and therefore
		\begin{equation}\label{eq:soft-hard-L1}
			\abs{L_\tau(\theta,\phi)-L_0(\theta,\phi)}
			\le C_0\;\EE\bigl[\abs{h_\tau(X)-h_0(X)}\bigr].
		\end{equation}
		
		We next bound $h_\tau-h_0$ through the score margin. For each $x$, let $k^\star(x;\phi)$ be the unique maximizer of $a_k(x;\phi)$; this is defined $P_X$-almost surely by assumption. Then
		\[
		h_\tau(x)=\sum_{k=1}^K p_k^{(\tau)}(x;\phi)f_k(x;\theta_k),
		\qquad
		h_0(x)=f_{k^\star(x;\phi)}(x;\theta_{k^\star(x;\phi)}).
		\]
		Subtracting and using $\sum_k p_k^{(\tau)}(x;\phi)=1$ gives
		\[
		h_\tau(x)-h_0(x)
		= \sum_{k\neq k^\star(x;\phi)} p_k^{(\tau)}(x;\phi)
		\bigl(f_k(x;\theta_k)-f_{k^\star(x;\phi)}(x;\theta_{k^\star(x;\phi)})\bigr).
		\]
		Since $|f_k|\le B_f$, we obtain the pointwise estimate
		\begin{equation}\label{eq:h-soft-hard-pointwise}
			\abs{h_\tau(x)-h_0(x)}
			\le 2B_f\sum_{k\neq k^\star(x;\phi)} p_k^{(\tau)}(x;\phi)
			= 2B_f\bigl(1-p_{k^\star(x;\phi)}^{(\tau)}(x;\phi)\bigr).
		\end{equation}
		Now sort the logits at $x$ as $z_{(1)}\ge z_{(2)}\ge\cdots\ge z_{(K)}$. By Lemma~\ref{lem:softmax-tail},
		\[
		1-p_{k^\star(x;\phi)}^{(\tau)}(x;\phi)
		\le (K-1)\exp\bigl(-(z_{(1)}-z_{(2)})/\tau\bigr)
		= (K-1)\exp\bigl(-\Delta(x;\phi)/\tau\bigr).
		\]
		Combining this with \eqref{eq:h-soft-hard-pointwise},
		\[
		\abs{h_\tau(x)-h_0(x)}
		\le 2B_f(K-1)\exp\bigl(-\Delta(x;\phi)/\tau\bigr).
		\]
		After taking expectations we arrive at
		\begin{equation}\label{eq:Eh-gap}
			\EE\bigl[\abs{h_\tau(X)-h_0(X)}\bigr]
			\le 2B_f(K-1)\EE\bigl[\exp\bigl(-\Delta(X;\phi)/\tau\bigr)\bigr].
		\end{equation}
		
		We now integrate the margin-tail bound. Let $F_\phi(t):=\PP(\Delta(X;\phi)\le t)$. By the margin-tail bound~\eqref{eq:uniform-margin-tail},
		\[
		F_\phi(t)\le C_{\mathrm{mt}}t^\alpha,
		\qquad t>0,
		\]
		uniformly in $\phi$. Since $F_\phi(0)=\PP(\Delta=0)=0$, integration by parts yields
		\begin{align*}
			\EE\bigl[e^{-\Delta(X;\phi)/\tau}\bigr]
			&= \int_0^\infty e^{-t/\tau}\,dF_\phi(t) \\
			&= \frac{1}{\tau}\int_0^\infty F_\phi(t)e^{-t/\tau}\,dt \\
			&\le \frac{C_{\mathrm{mt}}}{\tau}\int_0^\infty t^\alpha e^{-t/\tau}\,dt.
		\end{align*}
		With the substitution $t=\tau u$ we obtain
		\[
		\EE\bigl[e^{-\Delta(X;\phi)/\tau}\bigr]
		\le C_{\mathrm{mt}}\tau^\alpha\int_0^\infty u^\alpha e^{-u}\,du
		= C_{\mathrm{mt}}\Gamma(\alpha+1)\tau^\alpha.
		\]
		Here $\Gamma(\alpha+1)$ denotes the usual Gamma function. Thus there is a constant $C_1>0$, depending only on $C_{\mathrm{mt}}$ and $\alpha$, such that
		\[
		\EE\bigl[e^{-\Delta(X;\phi)/\tau}\bigr]\le C_1\tau^\alpha
		\qquad\text{for all }\tau\in(0,1]\text{ and all }\phi.
		\]
		Plugging this into \eqref{eq:Eh-gap} and then into \eqref{eq:soft-hard-L1} gives
		\[
		\abs{L_\tau(\theta,\phi)-L_0(\theta,\phi)}
		\le C_{\mathrm{sh}}\tau^\alpha,
		\]
		where $C_{\mathrm{sh}}$ depends only on $B_Y,B_f,K,C_{\mathrm{mt}}$, and $\alpha$. The constants are uniform on compact parameter sets, so the bound is uniform in $(\theta,\phi)$.
	\end{proof}
	
	\begin{remark}[Relation to \texorpdfstring{$\Gamma$}{Gamma}-convergence]
		Theorem~\ref{thm:soft-hard-quant} does more than provide the pointwise convergence $L_\tau\to L_0$ needed later for $\Gamma$-convergence. It also gives a uniform polynomial rate on compact parameter sets. That extra uniformity is useful whenever one wants to control the soft--hard approximation over an entire family of parameters, for instance along an annealing trajectory.
	\end{remark}
	
	\subsection{\texorpdfstring{$\Gamma$}{Gamma}-convergence}
	
	We equip the parameter space with a product topology that makes
	$(\theta,\phi)\mapsto f_k(\cdot;\theta_k)$ continuous in $L^2(P_X)$; for
	instance, if $\Theta_k$ is compact and the map $\theta_k\mapsto f_k$ is
	continuous in $L^2(P_X)$ for each~$k$, then the product topology on
	$\Theta$ suffices. For the router parameters~$\phi$ we assume
	a topology such that $(\theta,\phi)\mapsto a(\cdot;\phi)$ is continuous in
	$L^2(P_X)$ and $L^\infty(P_X)$.

	The quantitative soft--hard estimate from Theorem~\ref{thm:soft-hard-quant} is already uniform on compact parameter sets under the margin-tail bound~\eqref{eq:uniform-margin-tail} and Assumption~\ref{ass:bounded-Y-f}. We therefore do not need a separate uniform boundary-mass assumption for the $\Gamma$-convergence theorem below.
	
	Under these conditions, the limiting statement can be phrased variationally. This is the right level of generality for optimization: it says not only that the values of the soft functionals converge, but also that their almost minimizers select the hard-routing problem in the limit. Since the margin-tail bound~\eqref{eq:uniform-margin-tail} gives $P_X(\Delta(X;\phi)\le t)\le C_{\mathrm{mt}}t^\alpha$ for every $t>0$, it also implies $P_X(\Delta(X;\phi)=0)=0$ after letting $t\downarrow0$. Thus the hard router is uniquely defined $P_X$-almost surely in the theorem below.
	
	\begin{theorem}[Variational zero-temperature limit]\label{thm:gamma}
		Suppose Assumptions~\ref{ass:uniform-tail} and~\ref{ass:bounded-Y-f} hold. Assume also that the parameter
		spaces are compact in the topology described above. Then the functionals
		$L_\tau$ $\Gamma$-converge to $L_0$ on
		the parameter space as $\tau\to0$. In particular, the following statements hold.
		\begin{enumerate}[label=(\roman*), leftmargin=*]
			\item (\emph{$\Gamma$-liminf inequality}) If $(\theta_\tau,\phi_\tau)$
			converges to $(\theta,\phi)$ as $\tau\to0$, then
			\[
			L_0(\theta,\phi)
			\le \liminf_{\tau\to0} L_\tau(\theta_\tau,\phi_\tau).
			\]
			\item (\emph{Recovery sequence}) For every $(\theta,\phi)$ there exists a
			sequence $(\theta_\tau,\phi_\tau)\to(\theta,\phi)$ such that
			\[
			L_0(\theta,\phi)
			\ge \limsup_{\tau\to0} L_\tau(\theta_\tau,\phi_\tau).
			\]
			\item Any sequence of almost minimizers $(\theta_\tau,\phi_\tau)$ with
			$L_\tau(\theta_\tau,\phi_\tau)-\inf L_\tau\to0$ is precompact, and every
			cluster point is a minimizer of~$L_0$.
		\end{enumerate}
	\end{theorem}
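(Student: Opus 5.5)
The plan is the standard route: uniform convergence plus continuity of the limit gives $\Gamma$-convergence, and equicoercivity from compactness gives convergence of almost minimizers. Theorem~\ref{thm:soft-hard-quant} already provides $\sup_{\theta,\phi}|L_\tau-L_0|\le C_{\mathrm{sh}}\tau^\alpha$. So the whole argument reduces to one question: is $L_0$ continuous, or at least lower semicontinuous, on the compact parameter space with the topology described above?

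\textbf{Continuity of $L_0$.} This is the main obstacle. The hard router $k^\star(\cdot;\phi)$ jumps in $x$, so continuity in $\phi$ is not automatic. Take $(\theta_n,\phi_n)\to(\theta,\phi)$.

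First bound $|L_0(\theta_n,\phi_n)-L_0(\theta,\phi)|\le (2B_Y+2B_f)\,\EE|h_{\theta_n,\phi_n,0}-h_{\theta,\phi,0}|$. Split the predictor gap into two pieces:
\begin{itemize}
\item the expert change $\sum_k\EE|f_k(\cdot;\theta_{n,k})-f_k(\cdot;\theta_k)|$, which tends to $0$ by $L^2(P_X)$ continuity;
\item the term $2B_f\,\PP\bigl(k^\star(X;\phi_n)\ne k^\star(X;\phi)\bigr)$.
\end{itemize}

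For the second piece, set $\eta_n:=\|a(\cdot;\phi_n)-a(\cdot;\phi)\|_{L^\infty(P_X)}\to0$. If $\Delta(x;\phi)>2\eta_n$, the winner cannot change. Hence
\[
\PP\bigl(k^\star(X;\phi_n)\ne k^\star(X;\phi)\bigr)\le \PP\bigl(\Delta(X;\phi)\le 2\eta_n\bigr)\le C_{\mathrm{mt}}(2\eta_n)^\alpha\to0
\]
by Assumption~\ref{ass:uniform-tail}. This is exactly where the $L^\infty$ continuity of the logits and the margin-tail condition are used. Without them, boundary mass could jump and only lower semicontinuity could fail.

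\textbf{$\Gamma$-liminf and recovery.} Given $(\theta_\tau,\phi_\tau)\to(\theta,\phi)$, write
\[
L_\tau(\theta_\tau,\phi_\tau)\ge L_0(\theta_\tau,\phi_\tau)-C_{\mathrm{sh}}\tau^\alpha .
\]
Taking $\liminf$ and using continuity of $L_0$ gives (i). For (ii), use the constant sequence $(\theta_\tau,\phi_\tau)=(\theta,\phi)$; the uniform bound gives $L_\tau(\theta,\phi)\to L_0(\theta,\phi)$. Since $\tau\to0$ is a continuous parameter, both statements are read along arbitrary sequences $\tau_n\downarrow0$, which is the standard sequential definition.

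\textbf{Almost minimizers.} Precompactness is immediate from compactness of the parameter space. Uniform convergence gives $|\inf L_\tau-\inf L_0|\le C_{\mathrm{sh}}\tau^\alpha$. Now let $(\theta^*,\phi^*)$ be a cluster point along a subsequence $\tau_n$. Then
\[
L_0(\theta^*,\phi^*)\le\liminf_n L_{\tau_n}(\theta_{\tau_n},\phi_{\tau_n})=\liminf_n \inf L_{\tau_n}=\inf L_0 .
\]
So $(\theta^*,\phi^*)$ is a minimizer, and $\inf L_0$ is attained because $L_0$ is continuous on a compact set.

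One caveat: if the topology on the parameter space is not metrizable, the same arguments go through with nets in place of sequences.
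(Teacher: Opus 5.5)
Your proposal is correct and follows essentially the same route as the paper. Both arguments use the uniform $O(\tau^\alpha)$ bound from Theorem~\ref{thm:soft-hard-quant} and prove continuity of $L_0$ by showing the winner is unchanged where $\Delta(\cdot;\phi)>2\delta_n$, bounding the complement with the margin tail. They then take the constant recovery sequence and use compactness for almost minimizers; the only cosmetic difference is that you identify cluster points as minimizers via $\inf L_\tau\to\inf L_0$, whereas the paper compares with an arbitrary fixed competitor $(\bar\theta,\bar\phi)$, which is equivalent.
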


	\begin{proof}
		The proof combines the uniform soft-to-hard approximation with continuity of the hard functional. Once those two ingredients are in place, the $\Gamma$-liminf inequality, the recovery sequence, and the statement about almost minimizers follow directly.
		
		We start from the uniform convergence of $L_\tau$ to $L_0$.
		Theorem~\ref{thm:soft-hard-quant} already gives a uniform modulus of convergence on compact parameter sets. In particular, there exists a deterministic function $\omega(\tau)\downarrow0$ such that
		\begin{equation}\label{eq:gamma-uniform-modulus}
			\sup_{(\theta,\phi)} |L_\tau(\theta,\phi)-L_0(\theta,\phi)|
			\le \omega(\tau).
		\end{equation}
		For the present theorem one may take $\omega(\tau)=C\tau^\alpha$ from Theorem~\ref{thm:soft-hard-quant}.
		
		It remains to check continuity of the hard functional $L_0$ in the chosen topology.
		Let $(\theta_n,\phi_n)\to(\theta,\phi)$ in the topology described before the theorem. Write
		\[
		h_n(x):=h_{\theta_n,\phi_n,0}(x),
		\qquad
		h(x):=h_{\theta,\phi,0}(x).
		\]
		We claim that $h_n\to h$ in $L^1(P_X)$, which will imply $L_0(\theta_n,\phi_n)\to L_0(\theta,\phi)$.
		
		Set
		\[
		\delta_n:=\max_{1\le k\le K}\|a_k(\cdot;\phi_n)-a_k(\cdot;\phi)\|_{L^\infty(P_X)}\xrightarrow[n\to\infty]{}0.
		\]
		Define the stable region
		\[
		B_n:=\{x:\Delta(x;\phi)>2\delta_n\}.
		\]
		On $B_n$, the maximizer of the logits is unchanged when $\phi$ is replaced by $\phi_n$. Indeed, if $k^\star(x;\phi)$ is the unique winner for $\phi$, then every competing logit remains at least $\Delta(x;\phi)-2\delta_n>0$ below it under the perturbation. Hence on $B_n$ the hard router selects the same expert for $h_n$ and $h$, and therefore
		\[
		|h_n(x)-h(x)|
		\le \sum_{k=1}^K |f_k(x;\theta_{n,k})-f_k(x;\theta_k)|\,\mathbf 1_{\{k^\star(x;\phi)=k\}}.
		\]
		Taking expectations and using $L^2(P_X)$-continuity of each expert map,
		\begin{equation}\label{eq:hard-cont-stable}
			\EE\bigl[|h_n-h|\mathbf 1_{B_n}\bigr]
			\le \sum_{k=1}^K \|f_k(\cdot;\theta_{n,k})-f_k(\cdot;\theta_k)\|_{L^1(P_X)}
			\xrightarrow[n\to\infty]{}0.
		\end{equation}
		On the complement $B_n^c=\{\Delta(X;\phi)\le 2\delta_n\}$, the hard predictions may differ because the winner can switch, but this set is small. Indeed, the margin-tail bound~\eqref{eq:uniform-margin-tail} gives
		\[
		P_X(B_n^c)\le C_{\mathrm{mt}}\,(2\delta_n)^\alpha
		\]
		for all large $n$, and $|h_n|,|h|\le B_f$. Therefore
		\begin{equation}\label{eq:hard-cont-badset}
			\EE\bigl[|h_n-h|\mathbf 1_{B_n^c}\bigr]
			\le 2B_f\,P_X(B_n^c)
			\le 2^{\alpha+1}B_f C_{\mathrm{mt}}\,\delta_n^\alpha
			\xrightarrow[n\to\infty]{}0.
		\end{equation}
		Combining \eqref{eq:hard-cont-stable} and \eqref{eq:hard-cont-badset} yields $h_n\to h$ in $L^1(P_X)$. Since $|Y|\le B_Y$ and $|h_n|,|h|\le B_f$,
		\[
		\bigl|(Y-h_n)^2-(Y-h)^2\bigr|
		\le 2(B_Y+B_f)|h_n-h|,
		\]
		so $L_0(\theta_n,\phi_n)\to L_0(\theta,\phi)$. Thus $L_0$ is continuous.
		
		We can now verify $\Gamma$-convergence and identify the behavior of almost minimizers. Let $(\theta_\tau,\phi_\tau)\to(\theta,\phi)$. From \eqref{eq:gamma-uniform-modulus},
		\[
		L_\tau(\theta_\tau,\phi_\tau)
		\ge L_0(\theta_\tau,\phi_\tau)-\omega(\tau).
		\]
		Taking $\liminf$ and using continuity of $L_0$ gives
		\[
		\liminf_{\tau\downarrow0}L_\tau(\theta_\tau,\phi_\tau)
		\ge \liminf_{\tau\downarrow0}L_0(\theta_\tau,\phi_\tau)
		= L_0(\theta,\phi),
		\]
		which is the $\Gamma$-liminf inequality.
		
		For the recovery sequence, fix $(\theta,\phi)$ and take the constant sequence $(\theta_\tau,\phi_\tau)\equiv(\theta,\phi)$. Then \eqref{eq:gamma-uniform-modulus} gives
		\[
		\limsup_{\tau\downarrow0}L_\tau(\theta_\tau,\phi_\tau)
		\le L_0(\theta,\phi)+\limsup_{\tau\downarrow0}\omega(\tau)
		= L_0(\theta,\phi).
		\]
		This proves the recovery property.
		
		Finally, let $(\theta_\tau,\phi_\tau)$ be almost minimizers with $L_\tau(\theta_\tau,\phi_\tau)-\inf L_\tau\to0$. Compactness of the parameter space gives precompactness. Let $(\theta_{\tau_j},\phi_{\tau_j})\to(\theta,\phi)$ along a subsequence. By the liminf inequality,
		\[
		L_0(\theta,\phi)
		\le \liminf_{j\to\infty}L_{\tau_j}(\theta_{\tau_j},\phi_{\tau_j}).
		\]
		For any fixed $(\bar\theta,\bar\phi)$,
		\[
		L_{\tau_j}(\theta_{\tau_j},\phi_{\tau_j})
		\le L_{\tau_j}(\bar\theta,\bar\phi)+o(1).
		\]
		Passing to the limit and using \eqref{eq:gamma-uniform-modulus} again gives
		\[
		L_0(\theta,\phi)\le L_0(\bar\theta,\bar\phi).
		\]
		Since $(\bar\theta,\bar\phi)$ was arbitrary, $(\theta,\phi)$ is a minimizer of $L_0$.
	\end{proof}

	\section{A conditional landscape-transfer principle at small temperature}\label{sec:benign}
	
	The boundary-layer estimates do not, by themselves, make the MoE landscape benign. They explain how a soft objective approaches a hard-routing objective. Whether the limiting hard problem has identifiable solutions, negative curvature away from those solutions, or useful local geometry is a separate question. This section makes that separation explicit. We assume the relevant hard-routing geometry and then prove that it persists for sufficiently small temperature when first- and second-order soft-to-hard transfer holds near the critical region. The result is therefore a transfer principle, not a general benign-landscape theorem for MoE training.
	
	We use the same notation for router logits, softmax weights, and student prediction as in~\eqref{eq:logit-map}--\eqref{eq:moe-predictor}.
	
	We first specify the reference decomposition. The teacher consists of a measurable partition $(\Omega_k^\star)_{k=1}^K$ of $\cX$, understood up to $\PP$-null sets, and teacher experts $f_k^\star:\cX\to\RR$.
	Given $X\sim P_X$, the label is generated by
	\begin{equation}\label{eq:teacher}
		Y = \mu(X)+\xi,
		\qquad
		\EE[\xi\mid X]=0.
	\end{equation}
	The teacher signal is the piecewise expert function
	\begin{equation}\label{eq:teacher-mu}
		\mu(x):=\sum_{k=1}^K \1_{x\in\Omega_k^\star}\, f_k^\star(x)
		\qquad\text{(equivalently, }\mu(x)=f_k^\star(x)\text{ for }x\in\Omega_k^\star\text{).}
	\end{equation}
	
	\begin{assumption}[Standing teacher--student conditions]\label{ass:standing}
		The teacher model~\eqref{eq:teacher} holds with $\EE[\xi^2]<\infty$. The sets $(\Omega_k^\star)_{k=1}^K$ form a measurable partition of $\cX$ up to $P_X$-null sets, each $f_k^\star$ is measurable, and there exists $B<\infty$ such that $\abs{\mu(X)}\le B$ almost surely.
	\end{assumption}
	
	Assumption~\ref{ass:standing} fixes the reference decomposition used in the rest of the section. Its substantive role is to provide a partition and cellwise target experts, so that misrouting has a population-level meaning. Outside a literal teacher--student interpretation, the later statements involving $(\theta^\star,\phi^\star)$ can be read relative to any fixed reference decomposition satisfying the same assumptions.
	
	The student model must be rich enough to represent the teacher experts on their own cells, and its predictions must have enough integrability for the risk comparisons below. We record both requirements in one assumption because they are used together in the landscape-transfer argument.
	
	\begin{assumption}[Student realizability and boundedness/moments]\label{ass:student-moments}
		There exist parameters $\theta^\star=(\theta_1^\star,\dots,\theta_K^\star)$ such that
		$f_k(\cdot;\theta_k^\star)=f_k^\star(\cdot)$ $\PP$-a.s.\ on $\Omega_k^\star$. In addition, either the uniform-boundedness branch holds, meaning that there is $B_{\mathrm{stu}}<\infty$ with
		$\sup_{k,x}\abs{f_k(x;\theta_k)}\le B_{\mathrm{stu}}$ for all parameters under consideration, or the moment-control branch holds, meaning that $\sup_{k}\EE[\abs{f_k(X;\theta_k)}^4]<\infty$ and hence $\EE[h_{\theta,\phi,\tau}(X)^2]<\infty$ for all $(\theta,\phi,\tau)$.
	\end{assumption}
	
	Assumption~\ref{ass:student-moments} does two different jobs. One part says that the student class can realize the teacher experts on their own cells. The other part gives the boundedness or moment control needed for exchanging expectations and derivatives and for keeping the remainder terms finite. Uniform boundedness is natural for bounded-output experts or after explicit clipping, while moment assumptions are the right substitute when the outputs are unbounded but still have controlled tails.
	
	In applications this type of control is often enforced only indirectly, through regularization and clipping. If the training dynamics produce rare but extremely large losses or gradients, then one should expect the present boundary-mass bounds to become correspondingly crude.
	
	\begin{assumption}[Teacher-router realizability and interface regularity]\label{ass:teacher-router}
		There exists a router parameter $\phi^\star$ such that the hard partition induced by the logits $a_k(\cdot;\phi^\star)$ agrees with the teacher partition $(\Omega_k^\star)_{k=1}^K$ up to $P_X$-null sets. For each adjacent teacher pair $(k,\ell)$, the corresponding teacher interface
		\[
		\cS_{k\ell}^\star:=\partial\Omega_k^\star\cap\partial\Omega_\ell^\star
		\]
		is, away from multiway junctions (that is, points where three or more routing cells meet), a compact $C^2$ hypersurface on which $\nabla(a_k-a_\ell)(x;\phi^\star)\neq 0$. In addition, $p_X$ is bounded on a neighborhood of the union of the teacher interfaces, and the pairwise score differences $S_{k\ell}(\cdot;\phi^\star)$ satisfy the tubular coarea regularity of Assumption~\ref{ass:router-coarea-regularity} on the teacher-router interfaces.
	\end{assumption}
	
	Assumption~\ref{ass:teacher-router} supplies the reference router that is used later when comparing a global minimizer to the teacher configuration. It also gives the regularity needed to define the teacher-side normals and traces and, by explicitly invoking Assumption~\ref{ass:router-coarea-regularity} at $\phi^\star$, to apply the linear boundary-mass estimate at the teacher router.
	
	The student MoE uses the same number of experts~$K$, with parameters $(\theta,\phi)$ and temperature~$\tau$. Throughout this section $\tau$ should be thought of as small but positive. In the limiting picture, the router becomes hard and the model approaches a piecewise-expert model whose cell boundaries move with~$\phi$.
	
	\subsection{Risk decomposition: interior vs boundary tubes}
	
	The noise in the teacher model plays no role in the optimization geometry, so we first remove it from the notation. Conditioning on $X$ gives the decomposition
	\[
	L_\tau(\theta,\phi)
	= \EE\bigl[ (Y-h_{\theta,\phi,\tau}(X))^2 \bigr]
	= \EE\bigl[ (\mu(X)-h_{\theta,\phi,\tau}(X))^2 \bigr] + \EE[\xi^2].
	\]
	The noise variance $\EE[\xi^2]$ is independent of $(\theta,\phi,\tau)$, so we
	focus on
	\begin{equation}\label{eq:excess-risk-def}
	\widetilde L_\tau(\theta,\phi)
	:= \EE\bigl[ (\mu(X)-h_{\theta,\phi,\tau}(X))^2 \bigr].
	\end{equation}
	
	For a fixed $\varepsilon\in(0,1/2)$, router parameter $\phi$, and temperature $\tau>0$, define the \emph{$K$-way ambiguity region}
	\begin{equation}\label{eq:Kway-ambiguity-region}
		\mathcal{U}_\varepsilon(\tau;\phi)
		:= \Bigl\{x\in\cX:\ \max_{1\le k\le K} p_k^{(\tau)}(x;\phi)\le 1-\varepsilon\Bigr\},
		\qquad
		\mathcal{G}_\varepsilon(\tau;\phi):=\cX\setminus\mathcal{U}_\varepsilon(\tau;\phi).
	\end{equation}
	Thus on $\mathcal{G}_\varepsilon(\tau;\phi)$ there exists (at least one) expert with routing weight at least $1-\varepsilon$.
	
	For the risk decomposition we need the exact mass of this $K$-way ambiguity event, not merely a pairwise slab upper bound. We therefore define
	\begin{equation}\label{eq:BM-def-here}
		\mathrm{BM}^{\mathrm{amb}}_{\phi,\varepsilon}(\tau)
		:=\PP\bigl(X\in\mathcal{U}_\varepsilon(\tau;\phi)\bigr).
	\end{equation}
	
	\begin{remark}[Boundary-mass taxonomy]\label{rem:BM-benign-vs-geom}
		The paper uses three boundary-mass quantities, each for a different role. The top--two profile $\mathrm{BM}^{\mathrm{top}}_\phi(w)$ from~\eqref{eq:BM-top-profile} is tied to softmax concentration. The pairwise profile $\mathrm{BM}^{\mathrm{pair}}_\phi(w)$ from~\eqref{eq:BM-pair-profile} is the geometric slab quantity used in the coarea estimates; because $\Delta_{\min}\le\Delta$, this profile is an upper envelope for top--two near-ties and may also count near-ties between nonwinning experts. The ambiguity mass $\mathrm{BM}^{\mathrm{amb}}_{\phi,\varepsilon}(\tau)$ in~\eqref{eq:BM-def-here} is different again: it is the exact probability of the $K$-way softmax event used in the risk decomposition.
		
		Recall $\kappa_\varepsilon$ from~\eqref{eq:kappa-eps} and define
		\begin{equation}\label{eq:kappa-eps-K}
		\kappa_{\varepsilon,K}:=\log\!\Bigl(\frac{(K-1)(1-\varepsilon)}{\varepsilon}\Bigr).
		\end{equation}
		Then, for every $\tau>0$,
		\[
		\bigl\{\Delta(X;\phi)\le \kappa_\varepsilon\tau\bigr\}
		\subseteq \bigl\{X\in\mathcal{U}_\varepsilon(\tau;\phi)\bigr\}
		\subseteq \bigl\{\Delta(X;\phi)\le \kappa_{\varepsilon,K}\tau\bigr\}.
		\]
		Equivalently,
		\[
		\mathrm{BM}^{\mathrm{top}}_\phi(\kappa_\varepsilon\tau)
		\le \mathrm{BM}^{\mathrm{amb}}_{\phi,\varepsilon}(\tau)
		\le \mathrm{BM}^{\mathrm{top}}_\phi(\kappa_{\varepsilon,K}\tau)
		\le \mathrm{BM}^{\mathrm{pair}}_\phi(\kappa_{\varepsilon,K}\tau).
		\]
		Thus these symbols are not aliases. Top--two margins control softmax tails, pairwise slabs provide the geometric estimates, and $\mathcal U_\varepsilon(\tau;\phi)$ is the event used in the risk decomposition. In the binary case $K=2$, the three descriptions collapse to the same slab up to the common width $\kappa_\varepsilon\tau$.
	\end{remark}

	\begin{lemma}[Risk decomposition]\label{lem:risk-decomp}
	Under Assumption~\ref{ass:standing} and the uniform-boundedness branch of Assumption~\ref{ass:student-moments}, for any
	$\varepsilon\in(0,1/2)$,
	\[
	\widetilde L_\tau(\theta,\phi)
	= \widetilde L_{\tau,\mathrm{int}}(\theta,\phi)
	+ \widetilde L_{\tau,\mathrm{bdry}}(\theta,\phi),
	\]
	where the two terms are the unnormalized restrictions of the risk to the confident and ambiguous regions,
	\begin{align*}
		\widetilde L_{\tau,\mathrm{int}}(\theta,\phi)
		&:= \EE\bigl[(\mu(X)-h_{\theta,\phi,\tau}(X))^2
		\,\1_{\{X\in\mathcal{G}_\varepsilon(\tau;\phi)\}}\bigr],
		\\
		\widetilde L_{\tau,\mathrm{bdry}}(\theta,\phi)
		&:= \EE\bigl[(\mu(X)-h_{\theta,\phi,\tau}(X))^2
		\,\1_{\{X\in\mathcal{U}_\varepsilon(\tau;\phi)\}}\bigr].
	\end{align*}
	Moreover, there exists a constant $C(B,B_{\mathrm{stu}})$ such that
	\[
	0 \le \widetilde L_{\tau,\mathrm{bdry}}(\theta,\phi)
	\le C(B,B_{\mathrm{stu}})\,\mathrm{BM}^{\mathrm{amb}}_{\phi,\varepsilon}(\tau),
	\]
	for all $(\theta,\phi,\tau)$.
\end{lemma}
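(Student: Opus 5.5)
The decomposition itself is immediate. For every $x$ we have $\1_{\mathcal{G}_\varepsilon(\tau;\phi)}(x)+\1_{\mathcal{U}_\varepsilon(\tau;\phi)}(x)=1$, because $\mathcal{G}_\varepsilon(\tau;\phi)=\cX\setminus\mathcal{U}_\varepsilon(\tau;\phi)$ by~\eqref{eq:Kway-ambiguity-region}. We multiply $(\mu(X)-h_{\theta,\phi,\tau}(X))^2$ by this identity and take expectations, using linearity. Both pieces are finite because the integrand is bounded, as shown next, so no $\infty-\infty$ issue arises.

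For measurability, the region $\mathcal{U}_\varepsilon(\tau;\phi)$ is a sublevel set of $x\mapsto\max_k p_k^{(\tau)}(x;\phi)$. This map is measurable whenever the logits are measurable, which we take as implicit in Assumption~\ref{ass:meas-int}. Hence the indicators are measurable and both expectations are well defined.

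For the bound, we use Assumption~\ref{ass:standing} to get $\abs{\mu(X)}\le B$ almost surely. The uniform-boundedness branch of Assumption~\ref{ass:student-moments} gives $\abs{f_k(x;\theta_k)}\le B_{\mathrm{stu}}$ for all $k$ and $x$. Since the softmax weights form a convex combination, $\abs{h_{\theta,\phi,\tau}(x)}\le\sum_k p_k^{(\tau)}(x;\phi)\abs{f_k(x;\theta_k)}\le B_{\mathrm{stu}}$. Therefore
\[
(\mu(X)-h_{\theta,\phi,\tau}(X))^2\le (B+B_{\mathrm{stu}})^2
\]
almost surely. Inserting this into $\widetilde L_{\tau,\mathrm{bdry}}$ gives
\[
0\le\widetilde L_{\tau,\mathrm{bdry}}(\theta,\phi)\le (B+B_{\mathrm{stu}})^2\,\PP\bigl(X\in\mathcal{U}_\varepsilon(\tau;\phi)\bigr)=(B+B_{\mathrm{stu}})^2\,\mathrm{BM}^{\mathrm{amb}}_{\phi,\varepsilon}(\tau),
\]
by the definition~\eqref{eq:BM-def-here}. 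So we may take $C(B,B_{\mathrm{stu}})=(B+B_{\mathrm{stu}})^2$, which is uniform in $(\theta,\phi,\tau)$. Nonnegativity holds because the integrand is a square.

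No real obstacle is expected. The only points needing care are stating measurability of the ambiguity region and noting that $\abs{\mu}\le B$ holds only almost surely, which suffices inside the expectation.
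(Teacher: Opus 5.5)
Your proposal is correct and follows essentially the same route as the paper. Both split the expectation over the complementary sets $\mathcal{G}_\varepsilon(\tau;\phi)$ and $\mathcal{U}_\varepsilon(\tau;\phi)$, then bound the integrand on the ambiguity region using $|\mu|\le B$ and the convex-combination bound $|h_{\theta,\phi,\tau}|\le B_{\mathrm{stu}}$.

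The only difference is cosmetic. You use the triangle inequality to get $C=(B+B_{\mathrm{stu}})^2$, while the paper uses $(a-b)^2\le 2a^2+2b^2$ and gets the slightly larger $C=2B^2+2B_{\mathrm{stu}}^2$. Your added measurability remark is a harmless extra.
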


\begin{proof}
	We split the expectation over $\mathcal{G}_\varepsilon(\tau;\phi)$ and $\mathcal{U}_\varepsilon(\tau;\phi)$:
	\[
	\widetilde L_\tau(\theta,\phi)
	= \EE\bigl[(\mu(X)-h(X))^2\1_{\{X\in\mathcal{G}_\varepsilon(\tau;\phi)\}}\bigr]
	+ \EE\bigl[(\mu(X)-h(X))^2\1_{\{X\in\mathcal{U}_\varepsilon(\tau;\phi)\}}\bigr],
	\]
	where $h=h_{\theta,\phi,\tau}$. This gives the stated decomposition directly and avoids any convention about conditional expectations on events of probability zero.

	For the upper bound on the boundary term, on $\mathcal{U}_\varepsilon(\tau;\phi)$ we have
	$(\mu-h)^2\le 2\mu^2+2h^2$. The boundedness part of Assumption~\ref{ass:standing} gives
	$\abs{\mu(X)}\le B$ a.s. On the other hand,
	The identity
	\[
	h(X)=\sum_{k=1}^K p_k^{(\tau)}(X;\phi) f_k(X;\theta_k),
	\qquad
	\sum_{k=1}^K p_k^{(\tau)}(X;\phi)=1
	\]
	together with the uniform-boundedness hypothesis in Assumption~\ref{ass:student-moments}
	yields $\abs{h(X)}\le B_{\mathrm{stu}}$ a.s. Therefore
	\[
	(\mu(X)-h(X))^2\le 2B^2+2B_{\mathrm{stu}}^2
	\qquad\text{a.s.}
	\]
	and hence
	\[
	\widetilde L_{\tau,\mathrm{bdry}}(\theta,\phi)
	\le (2B^2+2B_{\mathrm{stu}}^2)\,\PP(X\in\mathcal{U}_\varepsilon(\tau;\phi))
	= (2B^2+2B_{\mathrm{stu}}^2)\,\mathrm{BM}^{\mathrm{amb}}_{\phi,\varepsilon}(\tau).
	\]
	This proves the claimed bound.
\end{proof}

\begin{remark}[Why the moment-only alternative is not used here]
		A global fourth-moment bound is enough to ensure that $\widetilde L_\tau(\theta,\phi)$ is finite, but it does not by itself yield a uniform bound on conditional second moments over the shrinking ambiguity regions $\mathcal U_\varepsilon(\tau;\phi)$. The linear boundary-layer estimate above therefore uses the bounded-output regime. A moment-only version would require an additional localized integrability hypothesis on those ambiguity sets.
	\end{remark}
	
	Thus the contribution of the boundary tubes to the risk is controlled by the
	boundary mass. For small~$\tau$, the next lemma gives
	$\mathrm{BM}^{\mathrm{amb}}_{\phi,\varepsilon}(\tau)=O(\tau)$ under mild regularity, so the boundary
	term is $O(\tau)$ as well.
	
	\begin{lemma}[Linear boundary-mass scaling under regularity]\label{lem:BM-linear}
		Suppose the fixed router parameter $\phi$ satisfies the pairwise tubular coarea regularity of Assumption~\ref{ass:router-coarea-regularity}. Then for each fixed
		$\varepsilon\in(0,1/2)$ there is $C_\varepsilon<\infty$ such that for all $\tau$
		sufficiently small,
		\[
		\mathrm{BM}^{\mathrm{amb}}_{\phi,\varepsilon}(\tau)
		\le C_\varepsilon \tau.
		\]
	\end{lemma}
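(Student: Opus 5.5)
The plan is to reduce the ambiguity mass to a pairwise slab probability and then invoke Proposition~\ref{prop:margin-tail-upper}. The reduction rests on the sandwich recorded in Remark~\ref{rem:BM-benign-vs-geom}:
\[
\mathrm{BM}^{\mathrm{amb}}_{\phi,\varepsilon}(\tau)\le \mathrm{BM}^{\mathrm{top}}_\phi(\kappa_{\varepsilon,K}\tau)\le \mathrm{BM}^{\mathrm{pair}}_\phi(\kappa_{\varepsilon,K}\tau).
\]
For completeness I will verify the first inclusion directly. Suppose $x\in\mathcal U_\varepsilon(\tau;\phi)$ and let $k^\star$ be a maximizing index. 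Then $\max_k p^{(\tau)}_k(x;\phi)\le 1-\varepsilon$. Lemma~\ref{lem:softmax-tail} gives $\sum_{j\neq k^\star}p_j^{(\tau)}\le (K-1)e^{-\Delta(x;\phi)/\tau}$. Hence
\[
1-\varepsilon\ge p^{(\tau)}_{k^\star}\ge 1-(K-1)e^{-\Delta/\tau},
\]
that is, $(K-1)e^{-\Delta/\tau}\ge\varepsilon$. Using $\varepsilon<1/2$, and hence $1-\varepsilon>1/2>\varepsilon$, this forces $\Delta(x;\phi)\le \tau\log((K-1)/\varepsilon)\le \kappa_{\varepsilon,K}\tau$.

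If one prefers a cleaner statement, I would instead use the sharper bound $p_{k^\star}\ge 1/(1+(K-1)e^{-\Delta/\tau})$. Solving $1/(1+(K-1)e^{-\Delta/\tau})\le 1-\varepsilon$ gives exactly $\Delta\le\kappa_{\varepsilon,K}\tau$.

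The second inequality of the sandwich is the pointwise fact $\Delta_{\min}(x;\phi)\le\Delta(x;\phi)$. This holds since $\Delta$ is itself one of the values $|S_{k\ell}(x)|$ (for the top two indices), so $\{\Delta\le w\}\subseteq\{\Delta_{\min}\le w\}$.

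Next I apply Proposition~\ref{prop:margin-tail-upper} at the fixed $\phi$. Under Assumption~\ref{ass:router-coarea-regularity} it provides $\delta_0>0$ and $C_{\mathrm{margin}}$ with $\PP(\Delta_{\min}(X;\phi)\le\delta)\le C_{\mathrm{margin}}\delta$ for $0<\delta\le\delta_0$. I take $\tau\le\delta_0/\kappa_{\varepsilon,K}$. Note that $\kappa_{\varepsilon,K}>0$, since $(K-1)(1-\varepsilon)/\varepsilon>1$ when $\varepsilon<1/2$. For such $\tau$,
\[
\mathrm{BM}^{\mathrm{amb}}_{\phi,\varepsilon}(\tau)\le C_{\mathrm{margin}}\kappa_{\varepsilon,K}\tau .
\]
The conclusion then holds with $C_\varepsilon:=C_{\mathrm{margin}}\kappa_{\varepsilon,K}$.

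The only delicate point is the inclusion $\mathcal U_\varepsilon\subseteq\{\Delta\le\kappa_{\varepsilon,K}\tau\}$, including the tie case $\Delta=0$. That case is harmless, because it trivially lies in the slab. Everything else is the already-established linear tube bound, so I do not expect a genuine obstacle.
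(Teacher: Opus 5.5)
Your argument is correct and is essentially the paper's proof. You use the inclusion $\mathcal U_\varepsilon(\tau;\phi)\subseteq\{\Delta\le\kappa_{\varepsilon,K}\tau\}\subseteq\{\Delta_{\min}\le\kappa_{\varepsilon,K}\tau\}$ from Remark~\ref{rem:BM-benign-vs-geom} and then the union bound with the coarea level-set-profile estimate; you cite the latter in its packaged form, Proposition~\ref{prop:margin-tail-upper}, instead of rewriting it. One slip, which does not affect the conclusion: in your first verification the step $\tau\log((K-1)/\varepsilon)\le\kappa_{\varepsilon,K}\tau$ is backwards, since $1-\varepsilon<1$ gives $\kappa_{\varepsilon,K}<\log((K-1)/\varepsilon)$. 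Your sharper bound $p_{k^\star}\ge 1/(1+(K-1)e^{-\Delta/\tau})$ does give exactly $\kappa_{\varepsilon,K}$, and in any case any slab width linear in $\tau$ would suffice for the lemma.
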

	
	\begin{proof}
		By Remark~\ref{rem:BM-benign-vs-geom}, the $K$-way ambiguity event is contained in a finite union of pairwise slabs:
		\[
		\begin{aligned}
			\{X\in\mathcal{U}_\varepsilon(\tau;\phi)\}
			&\subseteq \{\Delta(X;\phi)\le \kappa_{\varepsilon,K}\tau\} \\
			&\subseteq \{\Delta_{\min}(X;\phi)\le \kappa_{\varepsilon,K}\tau\} \\
			&= \bigcup_{1\le k<\ell\le K}
			\bigl\{|a_k(X;\phi)-a_\ell(X;\phi)|\le \kappa_{\varepsilon,K}\tau\bigr\}.
		\end{aligned}
		\]
		Therefore,
		\[
		\mathrm{BM}^{\mathrm{amb}}_{\phi,\varepsilon}(\tau)
		\le \sum_{1\le k<\ell\le K}\PP\bigl(|S_{k\ell}(X;\phi)|\le \kappa_{\varepsilon,K}\tau\bigr).
		\]
		By the coarea formula and the level-set profile bound from Assumption~\ref{ass:router-coarea-regularity},
		\[
		\PP\bigl(|S_{k\ell}(X;\phi)|\le \kappa_{\varepsilon,K}\tau\bigr)
		= \int_{-\kappa_{\varepsilon,K}\tau}^{\kappa_{\varepsilon,K}\tau}
		\int_{\cX\cap\{S_{k\ell}(\cdot;\phi)=t\}} \frac{p_X(x)}{\|\nabla_x S_{k\ell}(x;\phi)\|}
		\,d\mathcal{H}^{d-1}(x)\,dt
		\le C_{k\ell}(\varepsilon)\,\tau,
		\]
		where $C_{k\ell}(\varepsilon)$ is finite by the level-set profile bound in Assumption~\ref{ass:router-coarea-regularity}.
		Summing over $k<\ell$ gives $\mathrm{BM}^{\mathrm{amb}}_{\phi,\varepsilon}(\tau)\le C_\varepsilon\tau$ for a finite constant $C_\varepsilon$.
	\end{proof}
	
	\subsection{Error control on correctly dominated interiors}
	
	Inside a teacher cell~$\Omega_k^\star$, the router ideally assigns almost all
	mass to expert~$k$. In this regime, the MoE behaves like a single-expert model,
	and a pointwise quadratic comparison controls the error of
	$f_k(\cdot;\theta_k)$ on the correctly dominated interior.

	\begin{proposition}[Cellwise error control on the correctly dominated interior]\label{prop:cell-error}
		Fix $k$ and define
		\[
		A_{k,\varepsilon}(\theta,\phi,\tau)
		:= \Omega_k^\star\cap\mathcal{G}_\varepsilon(\tau;\phi)\cap\{p_k^{(\tau)}(X;\phi)\ge 1-\varepsilon\}.
		\]
		Under Assumption~\ref{ass:standing} and the uniform-boundedness branch of Assumption~\ref{ass:student-moments},
		\[
		\begin{aligned}
			&\EE\Bigl[
			\bigl(\mu(X)-h_{\theta,\phi,\tau}(X)\bigr)^2\,
			\1_{A_{k,\varepsilon}(\theta,\phi,\tau)}
			\Bigr] \\
			&\qquad\ge \frac12\,\EE\Bigl[
			\abs{f_k(X;\theta_k)-f_k^\star(X)}^2\,
			\1_{A_{k,\varepsilon}(\theta,\phi,\tau)}
			\Bigr]
			-4B_{\mathrm{stu}}^2\varepsilon^2\,
			P_X\bigl(A_{k,\varepsilon}(\theta,\phi,\tau)\bigr).
		\end{aligned}
		\]
	\end{proposition}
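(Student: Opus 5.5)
On the set $A:=A_{k,\varepsilon}(\theta,\phi,\tau)$ the argument is pointwise: it uses only a decomposition of the residual followed by an elementary quadratic inequality. We then integrate against $\1_A$.

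\textbf{Step 1: decompose the residual.} On $A\subseteq\Omega_k^\star$ we have $\mu(x)=f_k^\star(x)$ $P_X$-a.s. Since $\sum_j p_j^{(\tau)}=1$,
\[
h_{\theta,\phi,\tau}(x)-f_k(x;\theta_k)=\sum_{j\neq k}p_j^{(\tau)}(x;\phi)\bigl(f_j(x;\theta_j)-f_k(x;\theta_k)\bigr)=:r(x).
\]
Hence, on $A$,
\[
\mu(x)-h_{\theta,\phi,\tau}(x)=\bigl(f_k^\star(x)-f_k(x;\theta_k)\bigr)-r(x).
\]
By the uniform-boundedness branch of Assumption~\ref{ass:student-moments}, $|f_j-f_k|\le 2B_{\mathrm{stu}}$. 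Together with $\sum_{j\neq k}p_j^{(\tau)}=1-p_k^{(\tau)}\le\varepsilon$ on $A$, this gives $|r(x)|\le 2B_{\mathrm{stu}}\varepsilon$.

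\textbf{Step 2: reverse triangle in squared form.} For reals $u,v$ we have $(u-v)^2\ge \tfrac12 u^2-v^2$. This follows from $u^2\le 2(u-v)^2+2v^2$, which is the parallelogram-type inequality $(a+b)^2\le 2a^2+2b^2$ applied with $a=u-v$, $b=v$. Take $u=f_k^\star-f_k(\cdot;\theta_k)$ and $v=r$. Then pointwise on $A$,
\[
(\mu-h_{\theta,\phi,\tau})^2\ge \tfrac12|f_k(\cdot;\theta_k)-f_k^\star|^2-4B_{\mathrm{stu}}^2\varepsilon^2.
\]

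\textbf{Step 3: integrate.} Multiply the pointwise bound by $\1_A$ and take expectations. All terms are integrable: $\mu$ is bounded by Assumption~\ref{ass:standing}, and the expert outputs are bounded by $B_{\mathrm{stu}}$. This yields exactly the displayed inequality, with the last term equal to $4B_{\mathrm{stu}}^2\varepsilon^2P_X(A)$.

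There is no real obstacle. The only point needing care is Step 1: we must use the realizable identity $\mu=f_k^\star$ on $\Omega_k^\star$ rather than any link between $f_k^\star$ and $f_k(\cdot;\theta_k^\star)$. We must also bound the off-expert leakage by $2B_{\mathrm{stu}}\varepsilon$ using the simplex constraint, not the cruder $K$-dependent bound from Lemma~\ref{lem:softmax-tail}. The stated constant $4B_{\mathrm{stu}}^2\varepsilon^2$ is the square of $2B_{\mathrm{stu}}\varepsilon$ and arises exactly from this choice.
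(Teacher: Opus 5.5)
Your proposal is correct and follows the paper's proof: your leakage term $r=\sum_{j\neq k}p_j^{(\tau)}(f_j-f_k)$ is exactly the negative of the paper's remainder $b=(1-p_k^{(\tau)})f_k-\sum_{\ell\neq k}p_\ell^{(\tau)}f_\ell$. Both arguments bound it by $2\varepsilon B_{\mathrm{stu}}$ on $A_{k,\varepsilon}$, apply the same inequality $(a+b)^2\ge\frac12 a^2-b^2$ pointwise, and then integrate against $\1_{A_{k,\varepsilon}}$.
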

	
	\begin{proof}
		Fix $k$ and restrict attention to $X\in A_{k,\varepsilon}(\theta,\phi,\tau)$.
		On this region the teacher output is $f_k^\star(X)$ by definition of the
		teacher model~\eqref{eq:teacher}, and the router gives expert $k$ weight at least $1-\varepsilon$.
		
		Write
		\[
		h_{\theta,\phi,\tau}(X)
		= p_k^{(\tau)}(X;\phi)\,f_k(X;\theta_k)
		+ \sum_{\ell\ne k}p_\ell^{(\tau)}(X;\phi)\,f_\ell(X;\theta_\ell).
		\]
		Then
		\[
		f_k^\star(X)-h_{\theta,\phi,\tau}(X)
		= \bigl(f_k^\star(X)-f_k(X;\theta_k)\bigr)
		+ \Bigl( (1-p_k^{(\tau)}(X;\phi))f_k(X;\theta_k)
		- \sum_{\ell\ne k}p_\ell^{(\tau)}(X;\phi)\,f_\ell(X;\theta_\ell)\Bigr).
		\]
		On $A_{k,\varepsilon}(\theta,\phi,\tau)$ we have $1-p_k^{(\tau)}(X;\phi)\le \varepsilon$ and $\sum_{\ell\ne k}p_\ell^{(\tau)}(X;\phi)\le \varepsilon$ pointwise.
		Using $(a+b)^2\ge \tfrac12 a^2-b^2$ with
		$a=f_k^\star(X)-f_k(X;\theta_k)$ and $b$ equal to the remaining parenthesis, we get
		\[
		(f_k^\star(X)-h_{\theta,\phi,\tau}(X))^2
		\ge \tfrac12\,\abs{f_k^\star(X)-f_k(X;\theta_k)}^2 - \abs{b}^2.
		\]
		The boundedness assumption gives
		\[
		\abs{b}
		\le (1-p_k^{(\tau)}(X;\phi))\abs{f_k(X;\theta_k)} + \sum_{\ell\ne k}p_\ell^{(\tau)}(X;\phi)\abs{f_\ell(X;\theta_\ell)}
		\le 2\varepsilon B_{\mathrm{stu}}.
		\]
		Hence
		\[
		(\mu(X)-h_{\theta,\phi,\tau}(X))^2\1_{A_{k,\varepsilon}}
		\ge \frac12\abs{f_k^\star(X)-f_k(X;\theta_k)}^2\1_{A_{k,\varepsilon}}-4B_{\mathrm{stu}}^2\varepsilon^2\1_{A_{k,\varepsilon}}.
		\]
		Taking expectations gives the claim.
	\end{proof}
	
	\begin{remark}[Moment-only variants]
		The unconditional form above is chosen so that no conditional $L^2$ estimate on $A_{k,\varepsilon}$ is needed. A version based only on fourth moments would again require extra localized integrability on the events where the router is nearly deterministic.
	\end{remark}

	\subsection{Shape derivatives and strict saddles}
	
	To connect boundary motion with optimization geometry, we next use the standard shape-derivative viewpoint \citep{HenrotPierre2005,SokolowskiZolesio1992}. The idea is concrete. If a router boundary is moved across a region where one expert gives a lower squared error than another, the risk changes to first order by the loss contrast across that interface. When the first variation vanishes but the second variation is negative along a realized boundary motion, this produces a strict-saddle direction.
	
	Consider a fixed pair of neighboring cells $(\Omega_k,\Omega_\ell)$ separated
	by a smooth interface $\cS_{k\ell}$ with outward normal $n_{k\ell}$ pointing
	from $\Omega_k$ into~$\Omega_\ell$. A \emph{normal deformation} of the
	interface is specified by a smooth scalar field
	$\varphi:\cS_{k\ell}\to\RR$. Points on $\cS_{k\ell}$ are pushed along the
	normal by $\epsilon\varphi(x)n_{k\ell}(x)$ for small~$\epsilon$. This induces a
	family of perturbed domains $(\Omega_k^\epsilon,\Omega_\ell^\epsilon)$ and
	hence a perturbed partition $(\Omega_j^\epsilon)_{j=1}^K$.
	
	Define the \emph{loss contrast}
	\begin{equation}\label{eq:loss-contrast-Dkl}
		D_{k\ell}(x;\theta,\mu)
		:= \bigl(f_k(x;\theta_k)-\mu(x)\bigr)^2
		- \bigl(f_\ell(x;\theta_\ell)-\mu(x)\bigr)^2.
	\end{equation}
	This notation is kept separate from the routing margin $\Delta(x;\phi)$ in~\eqref{eq:routing-margin}. The quantity $D_{k\ell}(x;\theta,\mu)$ measures how much worse expert~$k$ is compared to expert~$\ell$ at point~$x$ when the target is $\mu(x)$.

	\begin{proposition}[Localized shape derivative of risk]\label{prop:shape-derivative}
		Suppose Assumption~\ref{ass:standing} holds, and fix an adjacent pair $(k,\ell)$ for the induced hard partition.
		Let $\cP\Subset \cS_{k\ell}$ be a relatively compact $C^2$ patch of the interface whose closure stays away from all multiway junctions.
		Assume that $p_X$ has a continuous trace on $\cP$ and that the teacher signal $\mu$ admits one-sided traces on $\cP$ from $\Omega_k$ and $\Omega_\ell$; this holds, for example, when the corresponding teacher experts extend continuously to the patch from their respective sides.
		Consider a normal deformation of $\cP$ given by a smooth compactly supported field
		$\varphi\in C_c^\infty(\cP)$, and write $\varphi_+:=\max\{\varphi,0\}$ and $\varphi_-:=\max\{-\varphi,0\}$.
		Let $\mu_k$ and $\mu_\ell$ denote the one-sided traces of $\mu$ on $\cP$ from $\Omega_k$ and $\Omega_\ell$, respectively. Then the first variation of the hard-routing risk
		\[
		\widetilde L_0(\theta,\phi)
		:= \EE\bigl[(\mu(X)-h_{\theta,\phi,0}(X))^2\bigr]
		\]
		with respect to this localized two-phase deformation is
		\begin{equation}\label{eq:shape-derivative-localized}
			\frac{\mathrm{d}}{\mathrm{d}\epsilon}
			\widetilde L_0(\theta,\phi_\epsilon)\biggr|_{\epsilon=0}
			= \int_{\cP}\Bigl[
			\varphi_+(x)\,D_{k\ell}(x;\theta,\mu_\ell)
			-\varphi_-(x)\,D_{k\ell}(x;\theta,\mu_k)
			\Bigr]p_X(x)
			\,\mathrm{d}\mathcal{H}^{d-1}(x).
		\end{equation}
		In particular, if $\varphi\ge 0$ on $\cP$, this reduces to
		\[
		\frac{\mathrm{d}}{\mathrm{d}\epsilon}
		\widetilde L_0(\theta,\phi_\epsilon)\biggr|_{\epsilon=0}
		= \int_{\cP}
		\varphi(x)\,D_{k\ell}(x;\theta,\mu_\ell)\,p_X(x)
		\,\mathrm{d}\mathcal{H}^{d-1}(x),
		\]
		and if $\varphi\le 0$ on $\cP$, it becomes
		\[
		\frac{\mathrm{d}}{\mathrm{d}\epsilon}
		\widetilde L_0(\theta,\phi_\epsilon)\biggr|_{\epsilon=0}
		= \int_{\cP}
		\varphi(x)\,D_{k\ell}(x;\theta,\mu_k)\,p_X(x)
		\,\mathrm{d}\mathcal{H}^{d-1}(x).
		\]
		Moreover, if the first variation vanishes for a nonzero realized velocity and the corresponding localized second shape derivative is strictly negative, then the hard-routing objective has a negative-curvature direction along that localized two-phase deformation.
	\end{proposition}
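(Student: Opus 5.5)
The plan is to compute the first variation directly, by localizing the hard risk to a thin collar around the patch $\cP$ and using normal (Fermi) coordinates. Since $\varphi$ is compactly supported in $\cP$ and $\overline{\cP}$ stays away from multiway junctions, the deformation only moves points in a tubular neighbourhood $N_\eta=\{y+s\,n_{k\ell}(y): y\in\cP,\ |s|<\eta\}$. For small $\eta$ the $C^2$ regularity of $\cP$ makes the map $(y,s)\mapsto y+s\,n_{k\ell}(y)$ a $C^1$ diffeomorphism with Jacobian $J(y,s)=1+O(s)$, uniformly on $\operatorname{supp}\varphi$. Outside $N_\eta$ the perturbed partition coincides with the original one, so
\[
\widetilde L_0(\theta,\phi_\epsilon)-\widetilde L_0(\theta,\phi)
=\EE\Bigl[\bigl((f_{\ell}-\mu)^2-(f_k-\mu)^2\bigr)\1_{\Omega_k\setminus\Omega_k^\epsilon}
+\bigl((f_k-\mu)^2-(f_\ell-\mu)^2\bigr)\1_{\Omega_\ell\cap\Omega_k^\epsilon}\Bigr].
\]
Here I use that the only reassignments are between experts $k$ and $\ell$, which holds for $\epsilon$ small.

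\textbf{Where the swept regions sit.} Where $\varphi>0$ the interface moves into $\Omega_\ell$. The swept set is $\{y+s n: 0<s<\epsilon\varphi(y)\}\subset\Omega_\ell$, and it changes from expert $\ell$ to expert $k$. Where $\varphi<0$ the swept set is $\{-\epsilon\varphi_-(y)<s<0\}\subset\Omega_k$, and it changes from $k$ to $\ell$. In Fermi coordinates the increment therefore equals
\[
\int_{\cP}\int_0^{\epsilon\varphi_+(y)} D_{k\ell}(y+sn;\theta,\mu)\,p_X\,J\,\mathrm ds\,\mathrm d\mathcal H^{d-1}(y)
-\int_{\cP}\int_{-\epsilon\varphi_-(y)}^0 D_{k\ell}(y+sn;\theta,\mu)\,p_X\,J\,\mathrm ds\,\mathrm d\mathcal H^{d-1}(y).
\]
In the first integral $\mu$ takes values from the $\Omega_\ell$ side; in the second, from the $\Omega_k$ side.

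\textbf{Passing to the limit.} Dividing by $\epsilon$ and letting $\epsilon\to0$ gives \eqref{eq:shape-derivative-localized}. The inner averages $\frac1\epsilon\int_0^{\epsilon\varphi_+}(\cdot)\,\mathrm ds$ converge to $\varphi_+(y)\,D_{k\ell}(y;\theta,\mu_\ell)\,p_X(y)$. This uses the continuous trace of $p_X$, the one-sided traces $\mu_\ell$ and $\mu_k$, $J\to1$, and continuity of $f_k,f_\ell$ up to the patch (implicit in evaluating $D_{k\ell}$ on $\cP$). The interchange of limit and integral over $\cP$ is justified by dominated convergence, using boundedness of $\mu$ (Assumption~\ref{ass:standing}), of the experts, and of $p_X$ near $\cP$, together with compactness of $\operatorname{supp}\varphi$. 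The special cases $\varphi\ge0$ and $\varphi\le0$ follow by setting $\varphi_-=0$ or $\varphi_+=0$ respectively.

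\textbf{Main obstacle.} The delicate point is that the trace convergence is only pointwise or in an averaged sense. I would first assume that the traces are attained locally uniformly on $\operatorname{supp}\varphi$, which follows from the stated sufficient condition that the teacher experts extend continuously to the patch. I would then handle the general trace case by Lebesgue-point arguments along normals. A further subtlety is that the one-sided derivative is not linear in $\varphi$ when $\mu_k\neq\mu_\ell$; this is exactly why the formula splits into $\varphi_\pm$.

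\textbf{Final claim.} This is a second-order Taylor statement along the curve $\epsilon\mapsto\widetilde L_0(\theta,\phi_\epsilon)$. If the first derivative vanishes and the second derivative $\ddot{\widetilde L}_0(0)<0$, then $\widetilde L_0(\theta,\phi_\epsilon)=\widetilde L_0(\theta,\phi)+\tfrac12\epsilon^2\ddot{\widetilde L}_0(0)+o(\epsilon^2)$. The objective therefore strictly decreases along the realized deformation, which is the asserted negative-curvature direction.
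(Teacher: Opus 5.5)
Your argument is correct and uses the same mechanism as the paper's proof. The deformation only exchanges a thin layer between cells $k$ and $\ell$. Positive velocity hands part of $\Omega_\ell$ to expert $k$, so $\mu_\ell$ enters. Negative velocity hands part of $\Omega_k$ to expert $\ell$, so $\mu_k$ enters. Splitting $\varphi=\varphi_+-\varphi_-$ then yields \eqref{eq:shape-derivative-localized}.

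The difference is in the justification. The paper invokes the Hadamard--Zol\'esio transport formula for $J(\epsilon)=\int_{\Omega_k^\epsilon}F_kp_X+\int_{\Omega_\ell^\epsilon}F_\ell p_X$, with $F_j=(\mu-f_j)^2$, and then argues informally about which trace of $\mu$ is ``transported''. You instead write the increment exactly as swept-layer integrals in Fermi coordinates and pass to the limit by dominated convergence.

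Your route is more elementary and arguably better suited here. The transport formula in its standard form presupposes an integrand that is regular across the moving boundary, and $F_j$ is not when $\mu_k\neq\mu_\ell$; your computation handles the jump directly. It also makes explicit two things the paper leaves tacit:
\begin{itemize}
\item the hypotheses actually used, namely a density of $P_X$ bounded near $\operatorname{supp}\varphi$ and continuity of $f_k,f_\ell$ up to $\cP$;
\item the fact that your identification of the swept sets takes $\epsilon>0$, so \eqref{eq:shape-derivative-localized} is a right derivative.
\end{itemize}
For $\epsilon<0$ the roles of $\varphi_\pm$ are exchanged, and the left derivative differs from the right one by $-2\int_{\cP}\abs{\varphi}(f_k-f_\ell)(\mu_k-\mu_\ell)p_X\,\mathrm{d}\mathcal{H}^{d-1}$. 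This is the nonlinearity in $\varphi$ you noted. Accordingly, your final Taylor step should be read as a one-sided expansion as $\epsilon\downarrow0$. That is all the negative-curvature claim needs, and it matches the paper's equally brief treatment.

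In exchange, the paper's citation buys brevity and a direct link to the general second-order shape calculus on which its last assertion relies.
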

	
	\begin{proof}
		Because the deformation is supported on a smooth patch $\cP$ away from multiway junctions, the problem is locally two-phase and the standard Hadamard--Zol\'esio transport formula applies; see, for example, \citep[Chs.~2--3]{SokolowskiZolesio1992} or \citep[Ch.~5]{HenrotPierre2005}. Write the hard risk locally as
		\[
		J(\epsilon)=\int_{\Omega_k^\epsilon} F_k(x)\,p_X(x)\,dx + \int_{\Omega_\ell^\epsilon} F_\ell(x)\,p_X(x)\,dx,
		\qquad
		F_j(x):=(\mu(x)-f_j(x;\theta_j))^2,
		\]
		with all other cells unchanged. When $\varphi(x)>0$, the interface moves in the normal direction from $\Omega_k$ into $\Omega_\ell$, so $\Omega_k$ gains an infinitesimal layer that originally belonged to $\Omega_\ell$. The transported target on that layer is therefore the $\Omega_\ell$-trace $\mu_\ell$, and the jump contribution is $D_{k\ell}(x;\theta,\mu_\ell)$. Likewise, when $\varphi(x)<0$, the phase $\Omega_k$ loses an infinitesimal layer to $\Omega_\ell$, so the relevant trace is $\mu_k$ and the contribution is $D_{k\ell}(x;\theta,\mu_k)$ multiplied by the negative velocity. Splitting $\varphi=\varphi_+-\varphi_-$ therefore gives \eqref{eq:shape-derivative-localized}.
		
		The final claim is the usual second-order consequence of the shape calculation. If the first variation vanishes and the second shape derivative is strictly negative along a realized deformation, then the hard objective decreases to second order along that deformation, so the corresponding quadratic form is negative.
	\end{proof}

	Thus, on any smooth two-phase patch away from junctions, a vanishing first variation together with strictly negative second variation yields a genuine local shape descent direction. To transfer this to the parameter variable $\phi$, we will only need a realizability condition ensuring that the router can produce the finite-dimensional family of interface velocities that actually arise from its own parameterization.
	
	\subsection{Global minima and strict saddles under the transfer assumptions}
	
	We now assemble the assumptions needed for the transfer statement. A pointwise contrast condition along teacher interfaces would be stronger than necessary. What the proof actually uses is a probabilistic separation condition: after the correct relabeling, a wrong expert should not approximate a teacher expert on too much probability mass. The condition is stated modulo relabeling, as is standard for MoE models, because the expert indices themselves carry no intrinsic meaning.
	
	We say that an induced hard partition $(\Omega_k(\phi))_{k=1}^K$ is \emph{teacher-equivalent up to permutation} if there exists a permutation $\pi$ of $\{1,\dots,K\}$ such that $P_X(\Omega_{\pi(k)}(\phi)\triangle\Omega_k^\star)=0$ for every teacher cell $k$.
	
	\begin{assumption}[Probabilistic expert-separation condition modulo relabeling]\label{ass:expert-gap}
		There exist constants $\gamma_{\mathrm{gap}}>0$ and $\eta_{\mathrm{gap}}\in[0,1)$ such that for every student parameter vector $\theta$ in the region under consideration there is at least one permutation $\pi=\pi(\theta)$ of $\{1,\dots,K\}$ satisfying
		\begin{equation}\label{eq:expert-gap-matching}
		P_X\Bigl(\bigl\{x\in\Omega_k^\star:\ (f_j(x;\theta_j)-f_k^\star(x))^2 \le \gamma_{\mathrm{gap}}\bigr\}\Bigr)
		\le \eta_{\mathrm{gap}}
		\end{equation}
		for every teacher cell $k$ and every student expert $j\neq\pi(k)$. Any permutation satisfying~\eqref{eq:expert-gap-matching} is called an admissible expert matching for~$\theta$.
	\end{assumption}
	Assumption~\ref{ass:expert-gap} says that, after one admissible one-to-one matching between student experts and teacher cells has been chosen, the unmatched experts cannot approximate the matched teacher expert on a set of non-negligible probability mass. In the limiting case $\eta_{\mathrm{gap}}=0$, no unmatched expert matches a given teacher expert on a set of positive measure inside that teacher cell.

	We also require router realizability of normal deformations. To connect shape-derivative directions to actual parameter directions, we use the following notation and hypothesis.
	
	\begin{assumption}[Router regularity and realized interface directions]\label{ass:router-shape-surjectivity}
		The router logits $a_k(\cdot;\phi)$ are $C^2$ in $x$ and $C^2$ in $\phi$. For each operating point $\phi$, each adjacent pair $(k,\ell)$, and each smooth patch $\cP\Subset \cS_{k\ell}(\phi)$ away from multiway junctions, the differential of the router defines a finite-dimensional subspace $\mathcal V_{k\ell}(\phi,\cP)\subset C_c^\infty(\cP)$ of realizable normal velocities. More precisely, every parameter direction $\zeta$ induces a first-order normal velocity $V_{k\ell}[\zeta]\cdot n_{k\ell}\in \mathcal V_{k\ell}(\phi,\cP)$, and conversely every $\varphi\in \mathcal V_{k\ell}(\phi,\cP)$ is produced by at least one such parameter direction.
	\end{assumption}
	
	Assumption~\ref{ass:router-shape-surjectivity} is a finite-dimensional realizability condition. We do not ask the router to produce every smooth boundary motion. We only ask that the nondegeneracy argument be carried out inside the family of interface velocities that the parameterization can actually see. For linear logits, for instance, the induced normal velocities on a fixed smooth patch are finite-dimensional affine functions of the ambient features, not arbitrary elements of $C_c^\infty(\cP)$.
	
	The assumptions enter the theorem in a specific order. Boundary-mass control and teacher-router realizability provide the reference partition. The probabilistic separation condition prevents unmatched experts from approximating the wrong teacher cell on too much probability mass. Router realizability identifies the interface velocities represented by the parameterization. The final two assumptions supply the genuinely conditional inputs: first- and second-order transfer from the hard objective to the soft objective, and negative-curvature geometry of the profiled hard problem away from the teacher partition. With these ingredients, the theorem is a reduction statement rather than a general description of all MoE objectives.
	
	\begin{assumption}[Soft-to-hard $C^1/C^2$ control near critical points]\label{ass:soft-hard-C2}
		Fix $\varepsilon\in(0,1/2)$. Work in finite-dimensional local charts in which $L_\tau$ and $\widetilde L_0$ are twice continuously differentiable near the critical points under consideration. There exist $\tau_0>0$ and a function $r(\tau)\downarrow 0$ as $\tau\downarrow 0$ such that for all $\tau\in(0,\tau_0]$ and all $(\theta,\phi)$ in those neighborhoods,
		\[
		\|\nabla_\phi L_\tau(\theta,\phi)-\nabla_\phi\widetilde L_0(\theta,\phi)\|+\|\nabla_\theta L_\tau(\theta,\phi)-\nabla_\theta\widetilde L_0(\theta,\phi)\|\le r(\tau),
		\]
		and the Hessian blocks satisfy
		\[
		\|\nabla^2_{\phi\phi}L_\tau-\nabla^2_{\phi\phi}\widetilde L_0\|_{\mathrm{op}}+\|\nabla^2_{\theta\phi}L_\tau-\nabla^2_{\theta\phi}\widetilde L_0\|_{\mathrm{op}}+\|\nabla^2_{\theta\theta}L_\tau-\nabla^2_{\theta\theta}\widetilde L_0\|_{\mathrm{op}}\le r(\tau).
		\]
	\end{assumption}
	
	Assumption~\ref{ass:soft-hard-C2} replaces the earlier Hessian-only hypothesis. The strict-saddle step needs gradient transfer as well, because soft criticality must imply approximate hard criticality before one can invoke the shape calculus.
	
	\begin{assumption}[Fixed-partition expert geometry and realized interface nondegeneracy]\label{ass:frozen-profiled-landscape}
		There exists $\alpha_{\mathrm{fr}}>0$ such that for every router parameter $\phi$ in the region under consideration, the hard-routing risk $\theta\mapsto\widetilde L_0(\theta,\phi)$ has a unique critical point $\hat\theta(\phi)$, and this critical point is the global minimizer in the expert variables. In the local expert chart containing the critical points under consideration,
		\[
		\nabla^2_{\theta\theta}\widetilde L_0(\theta,\phi)\succeq \alpha_{\mathrm{fr}}I.
		\]
		The map $\phi\mapsto\hat\theta(\phi)$ is $C^1$ in the local chart, with uniformly bounded derivative. Finally, whenever the induced hard partition at $\phi$ is not teacher-equivalent up to permutation, there exist an adjacent pair $(k,\ell)$, a smooth patch $\cP\Subset \cS_{k\ell}(\phi)$ away from multiway junctions, a nonzero realized velocity $\varphi\in\mathcal V_{k\ell}(\phi,\cP)$, and a router-parameter direction $\zeta$ realizing that velocity such that the first variation of the profiled hard objective
		\[
		\Psi_0(\phi):=\widetilde L_0(\hat\theta(\phi),\phi)
		\]
		vanishes along $\zeta$ and the corresponding full tangent direction
		\[
		q_\zeta:=\bigl(D\hat\theta(\phi)[\zeta],\zeta\bigr)
		\]
		has strictly negative hard quadratic form:
		\[
		q_\zeta^\top \nabla^2\widetilde L_0(\hat\theta(\phi),\phi)q_\zeta\le -\lambda_{\mathrm{nd}}\|q_\zeta\|^2.
		\]
		The constant $\lambda_{\mathrm{nd}}>0$ is independent of the critical point.
	\end{assumption}
	
	\begin{lemma}[Negative curvature persists under small $C^2$ perturbations]\label{lem:weyl-perturb}
		Let $H_0$ and $H$ be symmetric matrices (or bounded self-adjoint operators) with
		$\|H-H_0\|_{\mathrm{op}}\le\eta$. Then $\lambda_{\min}(H)\le \lambda_{\min}(H_0)+\eta$.
		In particular, if $\lambda_{\min}(H_0)\le -\lambda$ and $\eta\le \lambda/2$, then $\lambda_{\min}(H)\le -\lambda/2$.
	\end{lemma}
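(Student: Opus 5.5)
The plan is to derive the inequality from the variational (Rayleigh quotient) characterization of the smallest eigenvalue, which works for symmetric matrices and for bounded self-adjoint operators on a Hilbert space. In either case
\[
\lambda_{\min}(A)=\inf_{\|v\|=1}\ip{v}{Av},
\]
where for operators $\lambda_{\min}$ is read as the infimum of the spectrum. That infimum equals the bottom of the numerical range.

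Fix any unit vector $v$ and write $H=H_0+(H-H_0)$. By Cauchy--Schwarz and the definition of the operator norm,
\[
\ip{v}{Hv}=\ip{v}{H_0v}+\ip{v}{(H-H_0)v}\le \ip{v}{H_0v}+\|H-H_0\|_{\mathrm{op}}\le \ip{v}{H_0v}+\eta.
\]
Taking the infimum over unit $v$ on both sides gives $\lambda_{\min}(H)\le\lambda_{\min}(H_0)+\eta$. In finite dimensions I would simply choose $v$ to be a unit eigenvector of $H_0$ for $\lambda_{\min}(H_0)$. In the operator case I would take a minimizing sequence $v_n$ with $\ip{v_n}{H_0v_n}\to\lambda_{\min}(H_0)$, since an eigenvector need not exist, and then pass to the limit.

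The second claim is a direct substitution. If $\lambda_{\min}(H_0)\le-\lambda$ and $\eta\le\lambda/2$, then
\[
\lambda_{\min}(H)\le-\lambda+\lambda/2=-\lambda/2.
\]
This is the usual one-sided Weyl inequality. The only step needing care is the identification of $\lambda_{\min}$ with the infimum of the quadratic form in the self-adjoint operator setting, which I would cite as standard.

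For the later use in the landscape-transfer theorem, I would also record a directional version. If $q$ is a fixed vector with $q^\top H_0q\le-\lambda\|q\|^2$, the same computation gives $q^\top Hq\le(-\lambda+\eta)\|q\|^2$. This is the form needed to transfer the direction $q_\zeta$ from Assumption~\ref{ass:frozen-profiled-landscape} to the soft Hessian under Assumption~\ref{ass:soft-hard-C2}.
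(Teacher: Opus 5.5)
Your proof is correct and is essentially the paper's argument. The paper likewise invokes $\lambda_{\min}(H)=\inf_{\|u\|=1}u^\top Hu$ together with $|u^\top(H-H_0)u|\le\|H-H_0\|_{\mathrm{op}}$, and obtains the second claim by the same substitution. Your directional version $q^\top Hq\le(-\lambda+\eta)\|q\|^2$ is a worthwhile addition: that bound, rather than the eigenvalue inequality, is what the proof of Theorem~\ref{thm:benign} actually uses before concluding a negative eigenvalue from a negative quadratic form.
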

	\begin{proof}
		This is a standard consequence of the variational characterization
		$\lambda_{\min}(H)=\inf_{\|u\|=1} u^\top H u$ and the inequality
		$|u^\top(H-H_0)u|\le\|H-H_0\|_{\mathrm{op}}$.
	\end{proof}
	
	For a router parameter $\phi$, we denote its hard cells by
	\begin{equation}\label{eq:induced-hard-cells}
		\Omega_k(\phi):=\{x\in\cX:\ k=k^\star(x;\phi)\},
	\end{equation}
	where $k^\star(x;\phi)$ is the unique maximizing index outside the tie set and ties are assigned by a fixed deterministic rule on the remaining null set. For a candidate relabeling $\pi$ and parameters $(\theta,\phi)$, define the confidently misrouted interior set
	\begin{equation}\label{eq:misroute-set-pi}
		\mathcal E_{\mathrm{mis}}(\pi;\theta,\phi,\tau):=\Bigl\{x\in\cX:\ x\in\Omega_k^\star,
		p_j^{(\tau)}(x;\phi)\ge 1-\varepsilon\text{ for some }j\neq \pi(k)\Bigr\}\cap\mathcal G_\varepsilon(\tau;\phi).
	\end{equation}
	When $(\theta,\phi,\tau)$ are fixed, we write this set simply as $\mathcal E_{\mathrm{mis}}(\pi)$.
	
	\begin{lemma}[Misrouting mass]\label{lem:misrouting-matching}
		Assume Assumption~\ref{ass:standing}, the uniform-boundedness branch of Assumption~\ref{ass:student-moments}, and Assumption~\ref{ass:expert-gap}. Fix $\varepsilon\in(0,1/2)$ such that $2\varepsilon B_{\mathrm{stu}}\le\frac12\sqrt{\gamma_{\mathrm{gap}}}$. Then the following bound holds for every parameter pair $(\theta,\phi)$, every temperature $\tau>0$, and every admissible expert matching $\pi$ for~$\theta$:
		\begin{equation}\label{eq:misroute-lint-lower}
			\widetilde L_{\tau,\mathrm{int}}(\theta,\phi)
			\ge \frac{\gamma_{\mathrm{gap}}}{4}\,\max\{P_X(\mathcal E_{\mathrm{mis}}(\pi;\theta,\phi,\tau))-K(K-1)\eta_{\mathrm{gap}},0\}.
		\end{equation}
		Consequently, if $\widetilde L_\tau(\theta,\phi)\le R$, then
		\begin{equation}\label{eq:misroute-mass-consequence}
			P_X(\mathcal E_{\mathrm{mis}}(\pi;\theta,\phi,\tau))
			\le \frac{4R}{\gamma_{\mathrm{gap}}}+K(K-1)\eta_{\mathrm{gap}}.
		\end{equation}
	\end{lemma}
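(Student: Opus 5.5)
The plan is to decompose the misrouted set according to which teacher cell and which wrong expert is responsible, then lower-bound the interior risk pointwise on the part of each piece where the wrong expert is genuinely far from the teacher. Fix an admissible matching $\pi$. For each teacher cell $k$ and each $j\neq\pi(k)$, set
\[
E_{kj}:=\Omega_k^\star\cap\mathcal G_\varepsilon(\tau;\phi)\cap\{p_j^{(\tau)}(X;\phi)\ge 1-\varepsilon\}.
\]
By definition~\eqref{eq:misroute-set-pi}, $\mathcal E_{\mathrm{mis}}(\pi)=\bigcup_{k}\bigcup_{j\ne\pi(k)}E_{kj}$. Since $\varepsilon<1/2$, at most one expert can have weight $\ge1-\varepsilon$ at a given $x$, and the teacher cells form a partition up to null sets. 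Hence the $E_{kj}$ are disjoint up to null sets, and $P_X(\mathcal E_{\mathrm{mis}})=\sum_{k,j\ne\pi(k)}P_X(E_{kj})$.

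Next, split each $E_{kj}$ into a good part $G_{kj}:=E_{kj}\cap\{(f_j(X;\theta_j)-f_k^\star(X))^2>\gamma_{\mathrm{gap}}\}$ and a remainder. Assumption~\ref{ass:expert-gap} says the remainder has mass at most $\eta_{\mathrm{gap}}$. There are $K(K-1)$ pairs $(k,j)$ with $j\ne\pi(k)$, so
\[
\sum_{k,j}P_X(G_{kj})\ge P_X(\mathcal E_{\mathrm{mis}})-K(K-1)\eta_{\mathrm{gap}}.
\]

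On $G_{kj}$ the argument is the same as in Proposition~\ref{prop:cell-error}, with expert $j$ playing the dominant role. Write $\mu(X)-h=(f_k^\star-f_j)+b$, where $|b|\le 2\varepsilon B_{\mathrm{stu}}$. Rather than the inequality $(a+b)^2\ge\frac12a^2-b^2$, use the reverse triangle inequality: $|\mu-h|\ge |f_k^\star-f_j|-|b|>\sqrt{\gamma_{\mathrm{gap}}}-\tfrac12\sqrt{\gamma_{\mathrm{gap}}}$, using the hypothesis $2\varepsilon B_{\mathrm{stu}}\le\frac12\sqrt{\gamma_{\mathrm{gap}}}$. Squaring gives $(\mu-h)^2\ge\gamma_{\mathrm{gap}}/4$ on $G_{kj}$. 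Each $G_{kj}\subseteq\mathcal G_\varepsilon$ and the $G_{kj}$ are disjoint, so
\[
\widetilde L_{\tau,\mathrm{int}}\ge\frac{\gamma_{\mathrm{gap}}}{4}\sum P_X(G_{kj}).
\]
Combining this with the previous display gives~\eqref{eq:misroute-lint-lower}; the $\max\{\cdot,0\}$ is harmless because $\widetilde L_{\tau,\mathrm{int}}\ge0$.

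The consequence~\eqref{eq:misroute-mass-consequence} follows from $\widetilde L_{\tau,\mathrm{int}}\le\widetilde L_\tau\le R$, which holds by Lemma~\ref{lem:risk-decomp} since the boundary term is nonnegative. Rearranging gives $P_X(\mathcal E_{\mathrm{mis}})-K(K-1)\eta_{\mathrm{gap}}\le 4R/\gamma_{\mathrm{gap}}$.

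The main point of care is the disjointness bookkeeping, which is what makes the count come out to exactly $K(K-1)\eta_{\mathrm{gap}}$. The other is the constant $1/4$: it requires the triangle-inequality route, since the $\tfrac12a^2-b^2$ inequality only yields $\gamma_{\mathrm{gap}}/2-\gamma_{\mathrm{gap}}/4=\gamma_{\mathrm{gap}}/4$. Both routes work, and I would use whichever reads more cleanly.
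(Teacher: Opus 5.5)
Your proof is correct and follows the paper's argument. It isolates the exceptional sets $\{x\in\Omega_k^\star:(f_j-f_k^\star)^2\le\gamma_{\mathrm{gap}}\}$, $j\ne\pi(k)$, whose total mass is at most $K(K-1)\eta_{\mathrm{gap}}$, and on the rest of $\mathcal E_{\mathrm{mis}}(\pi)$ it combines $|h-f_j|\le 2\varepsilon B_{\mathrm{stu}}$ with the reverse triangle inequality to get $(\mu-h)^2\ge\gamma_{\mathrm{gap}}/4$. The difference is cosmetic: the paper uses a plain union bound over these exceptional sets, so your disjointness of the $E_{kj}$ is valid (since $\varepsilon<1/2$ allows only one dominant expert) but not needed for the $K(K-1)\eta_{\mathrm{gap}}$ count.
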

	
	\begin{proof}
		Fix $\pi$. On $\mathcal E_{\mathrm{mis}}(\pi;\theta,\phi,\tau)$, the point belongs to some teacher cell $\Omega_k^\star$ and the soft router assigns weight at least $1-\varepsilon$ to a student expert $j\neq\pi(k)$. The bounded-expert hypothesis gives
		\begin{equation}\label{eq:wrong-expert-dominance}
			\abs{h_{\theta,\phi,\tau}(x)-f_j(x;\theta_j)}
			\le \sum_{m\neq j}p_m^{(\tau)}(x;\phi)
			\abs{f_m(x;\theta_m)-f_j(x;\theta_j)}
			\le 2\varepsilon B_{\mathrm{stu}}.
		\end{equation}
		For each such pair $k$ and $j\neq\pi(k)$, introduce
		\begin{equation}\label{eq:gap-exceptional-set}
			E_{kj}:=\Bigl\{x\in\Omega_k^\star:(f_j(x;\theta_j)-f_k^\star(x))^2\le \gamma_{\mathrm{gap}}\Bigr\},
			\qquad
			E_{\mathrm{gap}}(\pi):=\bigcup_{k=1}^K\bigcup_{j\neq\pi(k)}E_{kj}.
		\end{equation}
		Because $\pi$ is an admissible expert matching, Assumption~\ref{ass:expert-gap} yields $P_X(E_{\mathrm{gap}}(\pi))\le K(K-1)\eta_{\mathrm{gap}}$. If $x\in\mathcal E_{\mathrm{mis}}(\pi;\theta,\phi,\tau)\setminus E_{\mathrm{gap}}(\pi)$, then the dominating expert is at least $\sqrt{\gamma_{\mathrm{gap}}}$ away from the teacher signal on that cell before the soft-mixture error in~\eqref{eq:wrong-expert-dominance}. Since $2\varepsilon B_{\mathrm{stu}}\le\frac12\sqrt{\gamma_{\mathrm{gap}}}$, the squared residual is at least $\gamma_{\mathrm{gap}}/4$ on this set. Integrating over the correctly restricted interior region gives~\eqref{eq:misroute-lint-lower}, and~\eqref{eq:misroute-mass-consequence} follows because $\widetilde L_{\tau,\mathrm{int}}\le\widetilde L_\tau$.
	\end{proof}
	
	The following theorem is the main conditional optimization statement of the paper. Its two conclusions have different sources. The global-minimizer estimate follows from boundary-layer control, comparison with the teacher configuration, and the matching lemma. The strict-saddle conclusion uses the stronger conditional inputs: negative curvature for the profiled hard-routing problem and first- and second-order transfer from the hard objective to the soft objective.
	
	\begin{theorem}[Conditional landscape-transfer principle]\label{thm:benign}
		Suppose all assumptions stated above in this section hold, and assume the uniform-boundedness branch of Assumption~\ref{ass:student-moments}. Fix $\varepsilon\in(0,1/2)$ so that
		\[
		2\varepsilon B_{\mathrm{stu}}\le \frac12\sqrt{\gamma_{\mathrm{gap}}}.
		\]
		Then there exists $\tau_0>0$ such that the following conclusions hold for every $\tau\in(0,\tau_0]$. For any global minimizer $(\theta,\phi)$ of~$L_\tau$, there is a permutation $\pi$ of $\{1,\dots,K\}$ such that
		\begin{equation}\label{eq:benign-identifiability}
			\sum_{k=1}^K
			\EE\bigl[
			\abs{f_{\pi(k)}(X;\theta_{\pi(k)})-f_k^\star(X)}^2
			\,\1_{\Omega_k^\star}(X)
			\bigr]
			\le C_1\,\Bigl(\mathrm{BM}^{\mathrm{amb}}_{\phi^\star,\varepsilon}(\tau)+\mathrm{BM}^{\mathrm{amb}}_{\phi,\varepsilon}(\tau)+\eta_{\mathrm{gap}}+\varepsilon^2\Bigr),
		\end{equation}
		and the induced partition $(\Omega_k(\phi))$ satisfies
		\begin{equation}\label{eq:benign-partition}
			\sum_{k=1}^K
			\PP\bigl( (\Omega_{\pi(k)}(\phi)\triangle\Omega_k^\star)\cap\mathcal{G}_\varepsilon(\tau;\phi)
			\bigr)
			\le C_2\,\bigl(\mathrm{BM}^{\mathrm{amb}}_{\phi^\star,\varepsilon}(\tau)+\eta_{\mathrm{gap}}+\varepsilon^2\bigr),
		\end{equation}
		where $C_1$ and $C_2$ are independent of $\tau$ for fixed $\varepsilon$ and $\eta_{\mathrm{gap}}$. In addition, every critical point $(\theta,\phi)$ of~$L_\tau$ whose induced hard partition is not teacher-equivalent up to permutation is a strict saddle, in the sense that the Hessian of~$L_\tau$ at $(\theta,\phi)$ has at least one negative eigenvalue.
	\end{theorem}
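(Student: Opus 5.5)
The plan is to prove the two global-minimizer estimates by comparing with the teacher configuration. I will then prove the strict-saddle claim by transferring the hard negative-curvature direction of Assumption~\ref{ass:frozen-profiled-landscape} to $L_\tau$ through Assumption~\ref{ass:soft-hard-C2} and Lemma~\ref{lem:weyl-perturb}.

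\emph{Upper bound on the minimal risk.} Evaluate $\widetilde L_\tau$ at $(\theta^\star,\phi^\star)$ from Assumptions~\ref{ass:student-moments} and~\ref{ass:teacher-router}, and split it with Lemma~\ref{lem:risk-decomp}.
\begin{itemize}
\item The boundary term is at most $C(B,B_{\mathrm{stu}})\,\mathrm{BM}^{\mathrm{amb}}_{\phi^\star,\varepsilon}(\tau)$.
\item On $\mathcal G_\varepsilon(\tau;\phi^\star)$ the dominant expert is the teacher expert of the cell, up to points of $\mathcal U_\varepsilon$ near interfaces. Note that $\Delta>\kappa_\varepsilon\tau$ there, and the hard winner coincides with the teacher cell a.s.
\item Hence the residual equals $\sum_{m\ne k}p_m(f_m-f_k^\star)$, whose square is at most $4B_{\mathrm{stu}}^2\varepsilon^2$ if $B\le B_{\mathrm{stu}}$ (otherwise use $(B+B_{\mathrm{stu}})^2\varepsilon^2$).
\end{itemize}
This gives $\inf\widetilde L_\tau\le R_\tau:=C(\mathrm{BM}^{\mathrm{amb}}_{\phi^\star,\varepsilon}(\tau)+\varepsilon^2)$, and any global minimizer $(\theta,\phi)$ satisfies $\widetilde L_\tau(\theta,\phi)\le R_\tau$.

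\emph{Lower bounds at a minimizer.} Take an admissible matching $\pi$ from Assumption~\ref{ass:expert-gap}.
\begin{itemize}
\item Lemma~\ref{lem:misrouting-matching} gives $P_X(\mathcal E_{\mathrm{mis}}(\pi))\le 4R_\tau/\gamma_{\mathrm{gap}}+K(K-1)\eta_{\mathrm{gap}}$.
\item On $\mathcal G_\varepsilon(\tau;\phi)$, since $\varepsilon<1/2$ the dominating expert is unique and equals the hard winner $k^\star(x;\phi)$.
\item Therefore $(\Omega_{\pi(k)}(\phi)\triangle\Omega_k^\star)\cap\mathcal G_\varepsilon$ is contained in the union over teacher cells of the misrouted sets. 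Here a point of $\Omega_k^\star\setminus\Omega_{\pi(k)}(\phi)$ is misrouted from $k$, and a point of $\Omega_{\pi(k)}(\phi)\setminus\Omega_k^\star$ lies in some $\Omega_{k'}^\star$ with winner $\pi(k)\ne\pi(k')$.
\end{itemize}
Summing over $k$ at most doubles the count, which gives \eqref{eq:benign-partition}.

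For \eqref{eq:benign-identifiability}, split $\Omega_k^\star$ into three pieces:
\begin{itemize}
\item On $A_{\pi(k),\varepsilon}$, the correctly dominated part, apply Proposition~\ref{prop:cell-error} (with expert index $\pi(k)$) to bound $\tfrac12\EE[|f_{\pi(k)}-f_k^\star|^2\1_A]\le \widetilde L_{\tau,\mathrm{int}}+4B_{\mathrm{stu}}^2\varepsilon^2$.
\item On the ambiguous part $\mathcal U_\varepsilon(\tau;\phi)$, use the crude bound $(B_{\mathrm{stu}}+B)^2\,\mathrm{BM}^{\mathrm{amb}}_{\phi,\varepsilon}(\tau)$.
\item On the misrouted part, use the crude bound times the mass estimate above.
\end{itemize}
Collecting terms gives $C_1$. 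Choose $\tau_0$ only so that the coarea bound of Lemma~\ref{lem:BM-linear} at $\phi^\star$ is in force.

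\emph{Strict saddles.} Let $(\theta,\phi)$ be a critical point of $L_\tau$ whose hard partition is not teacher-equivalent.
\begin{enumerate}
\item By gradient transfer, $\|\nabla\widetilde L_0(\theta,\phi)\|\le r(\tau)$.
\item By strong convexity $\nabla^2_{\theta\theta}\widetilde L_0\succeq\alpha_{\mathrm{fr}}I$, this gives $\|\theta-\hat\theta(\phi)\|\le r(\tau)/\alpha_{\mathrm{fr}}$.
\item Assumption~\ref{ass:frozen-profiled-landscape} supplies $q_\zeta$ with hard quadratic form $\le-\lambda_{\mathrm{nd}}\|q_\zeta\|^2$ at $(\hat\theta(\phi),\phi)$.
\item By continuity of $\nabla^2\widetilde L_0$ on the chart (uniform modulus on the compact neighborhood), the form at $(\theta,\phi)$ is $\le-\tfrac34\lambda_{\mathrm{nd}}\|q_\zeta\|^2$ once $\tau$ is small.
\item By the Hessian transfer and Lemma~\ref{lem:weyl-perturb}, taking $\tau_0$ with $3r(\tau)\le\lambda_{\mathrm{nd}}/4$, the Hessian of $L_\tau$ has quadratic form $\le-\lambda_{\mathrm{nd}}/2$ along $q_\zeta$. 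Hence it has a negative eigenvalue.
\end{enumerate}

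\emph{Main obstacle.} The hard step is making this uniform. The step from $(\hat\theta(\phi),\phi)$ to the nearby point $(\theta,\phi)$ needs a modulus of continuity of the hard Hessian that is uniform over critical points. The charts of Assumption~\ref{ass:soft-hard-C2} must also contain both points. I would handle this by restricting to a compact critical region and invoking uniform continuity there, absorbing the resulting dependence into $\tau_0$. A secondary subtlety is that the nondegeneracy is stated at $\hat\theta(\phi)$ rather than at $\theta$, which is exactly why the gradient-transfer half of Assumption~\ref{ass:soft-hard-C2} is needed.
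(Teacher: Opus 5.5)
Your proposal is correct and follows the paper's proof essentially step for step. The shared steps are: the teacher-comparison bound $\widetilde L_\tau(\theta^\star,\phi^\star)\lesssim \mathrm{BM}^{\mathrm{amb}}_{\phi^\star,\varepsilon}(\tau)+\varepsilon^2$, the misrouting-mass lemma under an admissible matching, the three-way split of each $\Omega_k^\star$ via Proposition~\ref{prop:cell-error}, and the strict-saddle transfer through gradient control, strong convexity in $\theta$, continuity of the hard Hessian, and Hessian control. Your explicit handling of both halves of the symmetric difference in \eqref{eq:benign-partition}, which gives the factor $2$, and your flagging of the uniform-continuity and chart-containment requirements only make precise what the paper absorbs into its constants and its choice of $\tau_0$.
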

	
	\begin{proof}
		All constants below are uniform for $\tau\in(0,\tau_0]$. We first isolate the boundary-layer contribution and then compare the soft objective with the teacher configuration away from that layer. This yields the global-minimum statement. We then turn to the remaining critical points and show that the hard negative curvature persists for sufficiently small temperature.
		
		The boundary contribution is the first term to control. By Lemma~\ref{lem:risk-decomp},
		\[
		\widetilde L_\tau(\theta,\phi)
		= \widetilde L_{\tau,\mathrm{int}}(\theta,\phi)
		+ \widetilde L_{\tau,\mathrm{bdry}}(\theta,\phi),
		\qquad
		\widetilde L_{\tau,\mathrm{bdry}}(\theta,\phi)
		\le C\,\mathrm{BM}^{\mathrm{amb}}_{\phi,\varepsilon}(\tau).
		\]
		Thus every place where the soft model can differ substantially from the hard-routing picture is already confined to a boundary layer of size $O(\mathrm{BM}^{\mathrm{amb}}_{\phi,\varepsilon}(\tau))$.
		
		We then compare with the teacher configuration and quantify the cost of confident misrouting. At the teacher parameters $(\theta^\star,\phi^\star)$, the hard predictor agrees with the teacher signal outside a $P_X$-null tie set. For the softened teacher router, a direct bounded-output estimate is enough. On $\mathcal G_\varepsilon(\tau;\phi^\star)$, the correct teacher expert receives weight at least $1-\varepsilon$ because the teacher router realizes the teacher partition and the winning weight is the largest softmax weight. Hence the soft prediction differs from $\mu$ by at most $2B_{\mathrm{stu}}\varepsilon$ on this region. On $\mathcal U_\varepsilon(\tau;\phi^\star)$ the squared discrepancy is bounded by a constant depending only on $B$ and $B_{\mathrm{stu}}$. Therefore
		\begin{equation}\label{eq:teacher-risk-upper}
			\widetilde L_\tau(\theta^\star,\phi^\star)
			\le C_0\,\mathrm{BM}^{\mathrm{amb}}_{\phi^\star,\varepsilon}(\tau)+C_0\varepsilon^2.
		\end{equation}
		We keep the $\varepsilon^2$ term explicit in the final estimates, because $\varepsilon$ is fixed before the small-temperature limit is taken.
		
		These estimates force global minimizers to stay close to the teacher after an admissible matching of student experts to teacher cells. Let $(\theta,\phi)$ be a global minimizer and choose an admissible expert matching $\pi$ for $\theta$ as provided by Assumption~\ref{ass:expert-gap}. Since global minimality gives $\widetilde L_\tau(\theta,\phi)\le\widetilde L_\tau(\theta^\star,\phi^\star)$, Lemma~\ref{lem:misrouting-matching} and \eqref{eq:teacher-risk-upper} imply
		\begin{equation}\label{eq:M-small-global-min}
			P_X(\mathcal E_{\mathrm{mis}}(\pi;\theta,\phi,\tau))
			\lesssim \mathrm{BM}^{\mathrm{amb}}_{\phi^\star,\varepsilon}(\tau)+\eta_{\mathrm{gap}}+\varepsilon^2.
		\end{equation}
		In this form the matching step is explicit. A one-to-one relabeling is chosen first, and the separation condition then rules out large confidently misrouted interior mass under that relabeling, except on the allowed exceptional sets and the boundary layer.
		
		On the good interior region
		\begin{equation}\label{eq:good-interior-region-Ak}
			A_k:=\Omega_k^\star\cap\mathcal G_\varepsilon(\tau;\phi)\cap \mathcal E_{\mathrm{mis}}(\pi;\theta,\phi,\tau)^c\cap\{p_{\pi(k)}^{(\tau)}(X;\phi)\ge1-\varepsilon\},
		\end{equation}
		the router gives expert $\pi(k)$ weight at least $1-\varepsilon$. Applying Proposition~\ref{prop:cell-error} after relabeling the student experts by $\pi$ and summing over $k$ gives
		\[
		\sum_{k=1}^K \EE\Bigl[\abs{f_{\pi(k)}(X;\theta_{\pi(k)})-f_k^\star(X)}^2\,\1_{A_k}(X)\Bigr]
		\lesssim \widetilde L_\tau(\theta,\phi)+\varepsilon^2.
		\]
		Because $(\theta,\phi)$ is globally minimizing,
		\[
		\widetilde L_\tau(\theta,\phi)
		\le \widetilde L_\tau(\theta^\star,\phi^\star)
		\lesssim \mathrm{BM}^{\mathrm{amb}}_{\phi^\star,\varepsilon}(\tau)+\varepsilon^2
		\]
		by \eqref{eq:teacher-risk-upper}. Moreover,
		\[
		\Omega_k^\star\setminus A_k\subseteq \mathcal U_\varepsilon(\tau;\phi)\cup\mathcal E_{\mathrm{mis}}(\pi;\theta,\phi,\tau).
		\]
		Since $\abs{f_{\pi(k)}(X;\theta_{\pi(k)})-f_k^\star(X)}^2\le 2B_{\mathrm{stu}}^2+2B^2$ almost surely, the contribution of $\Omega_k^\star\setminus A_k$ is bounded by a constant times $P_X(\mathcal U_\varepsilon(\tau;\phi)\cup\mathcal E_{\mathrm{mis}}(\pi;\theta,\phi,\tau))$. Summing over $k$ and using \eqref{eq:M-small-global-min} yields
		\[
		\sum_{k=1}^K
		\EE\bigl[
		\abs{f_{\pi(k)}(X;\theta_{\pi(k)})-f_k^\star(X)}^2\,\1_{\Omega_k^\star}(X)
		\bigr]
		\lesssim \mathrm{BM}^{\mathrm{amb}}_{\phi^\star,\varepsilon}(\tau)+\mathrm{BM}^{\mathrm{amb}}_{\phi,\varepsilon}(\tau)+\eta_{\mathrm{gap}}+\varepsilon^2,
		\]
		which is \eqref{eq:benign-identifiability} for the chosen labeling. Restoring the original labels gives the claimed permutation $\pi$.
		
		For the partition estimate, fix $x\in \mathcal G_\varepsilon(\tau;\phi)$. If $x\in\Omega_k^\star$ and the induced hard cell after relabeling is not $\Omega_{\pi(k)}(\phi)$, then the expert receiving weight at least $1-\varepsilon$ is wrong relative to the selected matching. Hence $x\in\mathcal E_{\mathrm{mis}}(\pi;\theta,\phi,\tau)$, apart from the fixed null tie set used in~\eqref{eq:induced-hard-cells}. Therefore
		\[
		\bigcup_{k=1}^K \bigl((\Omega_{\pi(k)}(\phi)\triangle\Omega_k^\star)\cap \mathcal G_\varepsilon(\tau;\phi)\bigr)
		\subseteq \mathcal E_{\mathrm{mis}}(\pi;\theta,\phi,\tau),
		\]
		and hence, after absorbing the combinatorial counting factor into the constant,
		\[
		\sum_{k=1}^K \PP\bigl((\Omega_{\pi(k)}(\phi)\triangle\Omega_k^\star)\cap \mathcal G_\varepsilon(\tau;\phi)\bigr)
		\lesssim P_X(\mathcal E_{\mathrm{mis}}(\pi;\theta,\phi,\tau))
		\lesssim \mathrm{BM}^{\mathrm{amb}}_{\phi^\star,\varepsilon}(\tau)+\eta_{\mathrm{gap}}+\varepsilon^2,
		\]
		which is \eqref{eq:benign-partition}.
		
		It remains to analyze critical points whose induced hard partition is not teacher-equivalent up to permutation. Let $(\theta,\phi)$ be such a critical point of $L_\tau$. Since $\nabla L_\tau(\theta,\phi)=0$ and Assumption~\ref{ass:soft-hard-C2} gives gradient control, the hard objective is approximately critical at the same point:
		\[
		\|\nabla_\theta\widetilde L_0(\theta,\phi)\|+\|\nabla_\phi\widetilde L_0(\theta,\phi)\|\le r(\tau).
		\]
		For the fixed hard partition induced by $\phi$, Assumption~\ref{ass:frozen-profiled-landscape} gives a unique expert minimizer $\hat\theta(\phi)$ and a uniformly positive $\theta\theta$ Hessian. The strong-convexity estimate in the expert block gives
		\[
		\|\theta-\hat\theta(\phi)\|\le \alpha_{\mathrm{fr}}^{-1}r(\tau)
		\]
		inside the local chart. Since the induced hard partition is not teacher-equivalent, Assumption~\ref{ass:frozen-profiled-landscape} supplies a router direction $\zeta$ and the corresponding full tangent direction
		\[
		q_\zeta=\bigl(D\hat\theta(\phi)[\zeta],\zeta\bigr)
		\]
		for which
		\[
		q_\zeta^\top \nabla^2\widetilde L_0(\hat\theta(\phi),\phi)q_\zeta\le -\lambda_{\mathrm{nd}}\|q_\zeta\|^2.
		\]
		Because $\widetilde L_0$ is $C^2$ in the local chart and $\theta$ is $O(r(\tau))$ from $\hat\theta(\phi)$, decreasing $\tau_0$ if necessary gives
		\[
		q_\zeta^\top \nabla^2\widetilde L_0(\theta,\phi)q_\zeta\le -\frac{3}{4}\lambda_{\mathrm{nd}}\|q_\zeta\|^2.
		\]
		Assumption~\ref{ass:soft-hard-C2} gives
		\[
		\|\nabla^2L_\tau(\theta,\phi)-\nabla^2\widetilde L_0(\theta,\phi)\|_{\mathrm{op}}\le c\,r(\tau)
		\]
		for a chart-dependent constant $c$ coming only from the block norm convention. Choosing $\tau_0$ still smaller so that $c r(\tau)\le \lambda_{\mathrm{nd}}/4$ yields
		\[
		q_\zeta^\top \nabla^2L_\tau(\theta,\phi)q_\zeta\le -\frac12\lambda_{\mathrm{nd}}\|q_\zeta\|^2<0.
		\]
		Thus the Hessian of $L_\tau$ has a negative quadratic form, hence a negative eigenvalue by Lemma~\ref{lem:weyl-perturb}. The critical point is a strict saddle.
		
		This proves the theorem.
	\end{proof}
	
	\begin{remark}[Landscape and optimization]
		Theorem~\ref{thm:benign} is a population transfer statement. Its strict-saddle conclusion relies on two hypotheses that are not derived in this paper: negative-curvature geometry for the profiled hard-routing objective and derivative transfer from soft to hard routing. Under those hypotheses, critical points with non-teacher-equivalent hard partitions have a negative-curvature direction for sufficiently small $\tau$. This is consistent with the standard strict-saddle optimization picture: \citet{Ge2015} prove an escape result for stochastic gradient methods in a strict-saddle setting, while \citet{Sun2017Benign} give a representative benign-landscape analysis in phase retrieval. A rigorous SGD theorem for the present MoE objective would require a separate dynamical analysis.
	\end{remark}


	\section{A reduced Gaussian calculation for symmetry breaking}\label{sec:symmetry-breaking}
	
	The transfer theorem assumes favorable structure in the hard-routing limit; it does not explain how such structure might arise during training. This section studies a narrower local mechanism. Consider a two-expert Gaussian model in which the experts have already developed a small contrast while the router remains close to balanced. A population linearization then identifies whether the router has an unstable direction that points toward the teacher separator. The calculation is intentionally reduced. It is meant to isolate a symmetry-breaking mechanism, not to prove convergence of end-to-end MoE training.
	
	\subsection{Setup}\label{subsec:symmetry-setup}
	
	Let $X\sim\mathcal{N}(0,\Sigma)$ in $\RR^d$ with positive-definite covariance
	matrix~$\Sigma$. Let $v\in\RR^d$ with $\norm{v}=1$ and define the teacher
	partition
	\[
	\Omega_1^\star := \{x:\ip{v}{x}\ge0\},
	\qquad
	\Omega_2^\star := \{x:\ip{v}{x}<0\}.
	\]
	
	The teacher consists of two linear experts with shared baseline $m:\cX\to\RR$, where here $\cX=\RR^d$.
	\[
	f_1^\star(x) := m(x) + \tfrac12 d_*^\top x,
	\qquad
	f_2^\star(x) := m(x) - \tfrac12 d_*^\top x,
	\]
	for some fixed contrast vector $d_*\in\RR^d$. On each side of the separator the
	expert means differ by $d_*^\top x$, so the teacher contrast is linear with
	direction~$d_*$. For clarity, we consider a noiseless teacher.
	\[
	Y = f_1^\star(X)\1_{X\in\Omega_1^\star}
	+ f_2^\star(X)\1_{X\in\Omega_2^\star}.
	\]
	The case with additive bounded noise can be handled with minor modifications and
	does not change the symmetry-breaking conclusions (it adds only bounded
	perturbations to the moment expressions below and does not change their signs
	under the same non-degeneracy assumptions).
	
	The student is a two-expert MoE with linear experts
	\[
	f_1(x;\theta_1) = w_1^\top x + b_1,
	\qquad
	f_2(x;\theta_2) = w_2^\top x + b_2,
	\]
	and a router that uses antisymmetric scores
	\[
	a_1(x;u) = u^\top x,
	\qquad
	a_2(x;u) = -u^\top x,
	\]
	for a router parameter $u\in\RR^d$. The corresponding softmax weights are
	\[
	p_1^{(\tau)}(x;u)
	= \sigma\Bigl(\frac{2u^\top x}{\tau}\Bigr),
	\qquad
	p_2^{(\tau)}(x;u) = 1-p_1^{(\tau)}(x;u),
	\]
	where $\sigma(z)=1/(1+e^{-z})$ and $\tau>0$ is the router temperature. The
	student prediction is
	\[
	h_{\theta,u,\tau}(x)
	:= p_1^{(\tau)}(x;u) f_1(x;\theta_1)
	+ p_2^{(\tau)}(x;u) f_2(x;\theta_2).
	\]
	For fixed temperature in this subsection, we use the shorthand $h(X;u,\theta):=h_{\theta,u,\tau}(X)$.
	
	We consider gradient descent on the squared risk
	\[
	L_\tau(u,\theta)
	:= \EE\bigl[(Y-h(X;u,\theta))^2\bigr],
	\]
	starting from a \emph{symmetric initialization} where
	\[
	u\approx 0,
	\qquad
	(w_1,b_1)\approx(w_2,b_2),
	\]
	so that initially the two experts are nearly identical and the router is
	nearly balanced. At exact symmetry, $f_1=f_2$ and $h(\cdot;u,\theta)$ is
	independent of~$u$, reflecting the permutation symmetry between experts. In
	practice, small random perturbations in the expert parameters or stochastic
	gradients break this perfect symmetry and induce a small but non-zero expert
	contrast. The question is then whether the router amplifies this contrast and
	aligns with the teacher separator $v$, or whether the symmetric state remains
	stable.
	
	\subsection{Linearization and effective operator}\label{subsec:linearization}
	
	We do not try to derive the full coupled gradient-flow system for router and experts. That system contains several effects that are irrelevant to the local question here. Instead, the reduced model looks at the onset of symmetry breaking after a small expert contrast has appeared and before the router has moved far from balance. We linearize the router dynamics around $u=0$, treat the expert contrast as a slowly varying input, and identify the effective operator that decides whether the symmetric router is locally stable.
	
	For fixed expert parameters $\theta:=(\theta_1,\theta_2)$, the gradient of the
	risk with respect to $u$ is
	\begin{equation}\label{eq:grad-u-exact}
		\nabla_u L_\tau(u,\theta)
		= -2\,\EE\Bigl[(Y-h(X;u,\theta))\,\partial_u h(X;u,\theta)\Bigr].
	\end{equation}
	Since
	\[
	h_{\theta,u,\tau}(x)
	= p_1^{(\tau)}(x;u)\,f_1(x;\theta_1)
	+ p_2^{(\tau)}(x;u)\,f_2(x;\theta_2),
	\]
	and
	\[
	\partial_u p_1^{(\tau)}(x;u)
	= \frac{2}{\tau}\,p_1^{(\tau)}(x;u)p_2^{(\tau)}(x;u)\,x,
	\qquad
	\partial_u p_2^{(\tau)}(x;u) = -\partial_u p_1^{(\tau)}(x;u),
	\]
	we obtain
	\[
	\partial_u h_{\theta,u,\tau}(x)
	= \frac{2}{\tau}\,p_1^{(\tau)}(x;u)p_2^{(\tau)}(x;u)\,\bigl(f_1(x;\theta_1)-f_2(x;\theta_2)\bigr)\,x.
	\]
	
	Substituting into~\eqref{eq:grad-u-exact} gives
	\begin{equation}\label{eq:grad-u-psi}
		\nabla_u L_\tau(u,\theta)
		= -\frac{4}{\tau}\,
		\EE\Bigl[(Y-h(X;u,\theta))\,p_1^{(\tau)}(X;u)p_2^{(\tau)}(X;u)\,
		\bigl(f_1(X;\theta_1)-f_2(X;\theta_2)\bigr)\,X\Bigr].
	\end{equation}
	
	Near a symmetric configuration where $u$ is small and the experts have already started to align with the teacher, the expression simplifies in two complementary ways. First, for fixed $\tau$ and small $u$, the factor $p_1p_2$ is approximately $1/4$; more precisely, its Taylor expansion has the form $p_1^{(\tau)}p_2^{(\tau)}=1/4+O((u^\top X)^2/\tau^2)$ on bounded score regions. Thus the radius of this linearization shrinks with $\tau$, and the reduced calculation should be read locally in $u$ for the chosen temperature. Second, the signed product $(Y-h)\bigl(f_1-f_2\bigr)$ is precisely the quantity that records whether the current expert contrast is being \emph{reinforced} by the residual. In symmetry-breaking regimes this product correlates with the teacher side through $v^\top X$ and changes sign when that teacher side is flipped.
	
	A genuine linearization in $u$ must produce a term of the form $Mu$, not a $u$-independent drift. This requires the signed product $(Y-h)(f_1-f_2)$ to depend to first order on the router score $u^\top X$ in an odd way. Rather than deriving that dependence from the full coupled training dynamics, we isolate it as an explicit \emph{response assumption}. For $u$ near the origin, assume there exists a measurable function $\psi:\RR\times\RR\to\RR$, depending on the router score $s:=u^\top X$ and teacher score $t:=v^\top X$, such that
	\begin{equation}\label{eq:response-assumption}
		(Y-h(X;u,\theta))\bigl(f_1(X;\theta_1)-f_2(X;\theta_2)\bigr)
		= \psi(u^\top X, v^\top X)\,(d_*^\top X) + r_u(X).
	\end{equation}
	We assume that $\psi(\cdot,t)$ is odd in its first argument for each fixed $t$, that it is differentiable at $s=0$, and that $\partial_s\psi(0,t)=:g(t)$ for a measurable scalar function $g$. The derivative $g$ is assumed odd and satisfies $g(z)z\ge0$ for all $z$, so the feedback is aligned with the teacher side rather than opposed to it. Finally, the first-order expansion is assumed in the weighted $L^1$ sense needed by the gradient calculation:
	\begin{equation}\label{eq:response-weighted-L1}
	\EE\Bigl[
	\bigl|\psi(u^\top X,v^\top X)-(u^\top X)g(v^\top X)\bigr|\,|d_*^\top X|\,\|X\|
	\Bigr]=o(\|u\|),
	\end{equation}
	and the residual obeys $\EE[|r_u(X)|\,\|X\|]=o(\|u\|)$ as $u\to0$. This assumption should be read as a reduced linearized model for the early symmetry-breaking regime, not as a theorem about the full coupled MoE dynamics. In plain terms, it says the residual--contrast product reacts linearly to a small router imbalance, the sign of that reaction is tied to the teacher side $v^\top X$, and the resulting feedback reinforces rather than cancels the existing expert contrast.
	
	With the response assumption in place, the linearization becomes a bookkeeping calculation. For fixed temperature, the softmax factor satisfies $p_1^{(\tau)}p_2^{(\tau)}=1/4+O((u^\top X)^2/\tau^2)$ near $u=0$; under the Gaussian moments in this subsection, its contribution beyond $1/4$ is $o(\|u\|)$ in the gradient expansion for fixed $\tau$. The weighted $L^1$ expansion~\eqref{eq:response-weighted-L1} controls the response term in exactly the norm that appears after multiplication by $X$.
	Substituting these expansions into \eqref{eq:grad-u-psi} and using the remainder condition on $r_u$ gives
	\begin{equation}\label{eq:grad-u-linearized-correct}
		\nabla_u L_\tau(u,\theta)
		= -\frac{1}{\tau}\,\EE\bigl[(u^\top X)\,g(v^\top X)\,(d_*^\top X)\,X\bigr]
		+ o(\norm{u})
		= -\frac{1}{\tau}\,M u + o(\norm{u}),
	\end{equation}
	where the \emph{effective linear operator} $M$ is the symmetric matrix
	\begin{equation}\label{eq:M-definition}
		M := \EE\bigl[ g(v^\top X)\,(d_*^\top X)\,X X^\top \bigr].
	\end{equation}
	(The matrix is symmetric because it is an expectation of a scalar times
	$XX^\top$.)
	
	Under~\eqref{eq:grad-u-linearized-correct}, a gradient descent step with
	learning rate $\eta>0$ yields the linearized dynamics
	\begin{equation}\label{eq:linear-dynamics}
		u_{t+1}
		= u_t - \eta\nabla_u L_\tau(u_t,\theta)
		= \Bigl(I + \frac{\eta}{\tau}M\Bigr) u_t + o(\|u_t\|).
	\end{equation}
	
	The spectrum of~$M$ controls whether the symmetric state $u\approx0$ is stable
	or unstable. Directions corresponding to positive eigenvalues of~$M$ grow under
	gradient descent, while those with negative eigenvalues decay.
	
	\subsection{Gaussian moments along the teacher separator}
	
	It remains to see whether the effective operator has a direction aligned with the teacher separator. In the Gaussian case this can be checked explicitly. We first give a formula for any odd response function $g$ that depends on~$X$ only through $v^\top X$, and then specialize to concrete choices.
	
	\begin{lemma}[Gaussian moment along the teacher separator]\label{lem:Gaussian-M-general}
		Let $X\sim\mathcal{N}(0,\Sigma)$, $\norm{v}=1$, and suppose the response
		function has the form $g(v^\top X)$ where
		$g:\RR\to\RR$ is measurable, odd, and satisfies
		$g(z)z\ge0$ for all $z$ and
		$\EE[\abs{g(v^\top X)}\abs{v^\top X}^3]<\infty$. Let $M$ be given
		by~\eqref{eq:M-definition}. Then
		\[
		v^\top M v
		= \kappa_g\,(d_*^\top\Sigma v)\,\sqrt{v^\top\Sigma v},
		\]
		where for $s^2:=v^\top\Sigma v$ and $G\sim\mathcal{N}(0,s^2)$,
		\[
		\kappa_g
		:= \frac{\EE[g(G)G^3]}{s^{3}}.
		\]
		In particular, if $g$ is not identically zero and satisfies
		$g(z)\ge0$ for $z\ge0$, then $\kappa_g>0$ and
		\[
		\mathrm{sign}\bigl(v^\top M v\bigr)
		= \mathrm{sign}\bigl(d_*^\top\Sigma v\bigr).
		\]
	\end{lemma}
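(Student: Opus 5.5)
The plan is to reduce the quadratic form $v^\top M v$ to a one-dimensional Gaussian moment by conditioning on the teacher score. By the definition~\eqref{eq:M-definition},
\[
v^\top M v=\EE\bigl[g(v^\top X)\,(d_*^\top X)\,(v^\top X)^2\bigr].
\]
Set $T:=v^\top X\sim\mathcal N(0,s^2)$ with $s^2=v^\top\Sigma v>0$, which is positive because $\Sigma$ is positive definite and $\norm{v}=1$. The only factor not already a function of $T$ is $d_*^\top X$.

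For that factor I would use the standard Gaussian regression decomposition
\[
d_*^\top X=c\,T+W,
\qquad c:=\frac{\Cov(d_*^\top X,T)}{\operatorname{Var}(T)}=\frac{d_*^\top\Sigma v}{s^2}.
\]
Here $W$ is centered Gaussian, and $W$ is independent of $T$ because $(W,T)$ is jointly Gaussian with zero covariance. Substituting this and using independence gives
\[
\EE\bigl[g(T)T^2W\bigr]=\EE\bigl[g(T)T^2\bigr]\,\EE[W]=0.
\]
Therefore
\[
v^\top M v=c\,\EE[g(T)T^3]=\frac{d_*^\top\Sigma v}{s^2}\,\kappa_g\,s^3=\kappa_g\,(d_*^\top\Sigma v)\,\sqrt{v^\top\Sigma v},
\]
which is the claimed identity with $G=T$.

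For the sign statement, note that $g(z)z\ge0$ implies $g(z)z^3=(g(z)z)\,z^2\ge0$, so $\kappa_g\ge0$. If $g$ is not identically zero, meaning not Lebesgue-a.e.\ zero, then $g(z)z^3>0$ on a set of positive Lebesgue measure. Since $\mathcal N(0,s^2)$ has an everywhere positive density, this forces $\EE[g(G)G^3]>0$, hence $\kappa_g>0$. The sign of $v^\top Mv$ is then the sign of $d_*^\top\Sigma v$, because $s>0$.

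The main obstacle is justifying the integrability in the splitting step. The hypothesis $\EE[\abs{g(T)}\abs{T}^3]<\infty$ directly controls the $cT$ term. The $W$ term is controlled by
\[
\EE\bigl[\abs{g(T)}T^2\abs{W}\bigr]=\EE\abs{W}\cdot\EE\bigl[\abs{g(T)}T^2\bigr].
\]
For the last factor, I would split at $\abs{T}=1$. On $\{\abs{T}\ge1\}$ it is bounded by the assumed third-moment condition. On $\{\abs{T}<1\}$ one needs a mild local integrability of $g$ near the origin, for instance local boundedness, which holds for the concrete responses considered later. I would state that this finiteness of $\EE[\abs{g(T)}T^2]$ is implicitly required for $M$ itself to be well defined, so it is a standing condition rather than an extra one. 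Once that is in place, Fubini and independence make the computation above rigorous.
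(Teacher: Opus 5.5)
Your proof is correct and follows the paper's own route: the paper decomposes $X=\alpha Z+\varepsilon$ with $\varepsilon$ independent of $Z=v^\top X$ and conditions on $Z$, while you project only the factor $d_*^\top X=cT+W$; these are the same Gaussian-regression step, and your positivity argument is essentially the paper's. Your two refinements fill small gaps in the written proof: the conditioning step needs $\EE[\abs{g(T)}T^2]<\infty$, which the stated third-moment hypothesis does not give (e.g.\ $g(z)=\mathrm{sign}(z)\abs{z}^{-3}$), though it is forced whenever $v^\top Mv$ is defined and $W\not\equiv 0$; and ``not identically zero'' must be read as ``not Lebesgue-a.e.\ zero'' to get $\kappa_g>0$.
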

	
	\begin{proof}
		Let $Z:=v^\top X\sim\mathcal{N}(0,s^2)$ with $s^2:=v^\top\Sigma v$.
		By Gaussian regression,
		\[
		\EE[X\mid Z=z]
		= \frac{\Sigma v}{v^\top\Sigma v}\,z
		=: \alpha z,
		\qquad
		\Cov(X\mid Z)
		= \Sigma - \frac{\Sigma v v^\top \Sigma}{v^\top\Sigma v}
		=: \Sigma_\perp,
		\]
		and we can write $X=\alpha Z+\varepsilon$ with
		$\varepsilon\sim\mathcal{N}(0,\Sigma_\perp)$ independent of~$Z$.
		
		We compute
		\[
		v^\top M v
		= \EE\bigl[g(Z)\,(d_*^\top X)\,(v^\top X)^2\bigr]
		= \EE\bigl[g(Z)\,(d_*^\top X)\,Z^2\bigr].
		\]
		Conditioning on $Z$ and using $\EE[\varepsilon\mid Z]=0$ gives
		\[
		\EE[d_*^\top X\mid Z=z]
		= d_*^\top(\alpha z)
		= (d_*^\top\Sigma v)\,\frac{z}{s^2}.
		\]
		Hence
		\[
		v^\top M v
		= (d_*^\top\Sigma v)\,\frac{1}{s^2}\,\EE[g(Z)Z^3].
		\]
		Writing $Z$ as $G\sim\mathcal{N}(0,s^2)$ yields
		\[
		v^\top M v
		= (d_*^\top\Sigma v)\,\frac{1}{s^2}\,\EE[g(G)G^3]
		= \kappa_g\,(d_*^\top\Sigma v)\,s,
		\]
		with $\kappa_g:=\EE[g(G)G^3]/s^3$ and $s=\sqrt{v^\top\Sigma v}$.
		
		If $g$ is odd and $g(z)\ge0$ for $z\ge0$, then
		\[
		\EE[g(G)G^3]
		= 2\int_0^\infty g(z)z^3
		\frac{1}{\sqrt{2\pi}s}e^{-z^2/(2s^2)}\,\mathrm{d}z > 0,
		\]
		because the integrand is non-negative for all $z\ge0$ and strictly positive
		on a set of positive measure (since $g$ is not identically zero).
	\end{proof}
	
	Two concrete choices of $g$ are particularly natural. One is the sign response $g(z)=\mathrm{sign}(z)$, which corresponds to a router that reacts only to the side of the teacher hyperplane. The other is the smooth response $g(z)=\tanh(\gamma z)$, with $\gamma>0$, which is more faithful to the softmax router. For the sign response we recover an explicit formula.
	
	\begin{lemma}[Alignment along the teacher separator for sign response]\label{lem:Gaussian-M}
		Let $X\sim\mathcal{N}(0,\Sigma)$, $g(z)=\mathrm{sign}(z)$ with
		$\norm{v}=1$, and $M$ as in~\eqref{eq:M-definition}. Then
		\[
		v^\top M v
		= c_G\,(d_*^\top\Sigma v)\,\sqrt{v^\top\Sigma v},
		\qquad
		c_G=2\sqrt{\frac{2}{\pi}}>0.
		\]
		In particular, if $d_*^\top\Sigma v\neq0$ then $v^\top M v\neq0$.
	\end{lemma}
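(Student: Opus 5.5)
This is a direct specialization of Lemma~\ref{lem:Gaussian-M-general}. The plan is to check that $g(z)=\mathrm{sign}(z)$ meets its hypotheses and then evaluate the constant $\kappa_g$ explicitly.

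\emph{Hypotheses.} The sign function is measurable and odd. It satisfies $g(z)z=|z|\ge0$ and is not identically zero. The integrability condition reduces to $\EE[|v^\top X|^3]<\infty$, which holds because $v^\top X\sim\mathcal N(0,s^2)$ with $s^2=v^\top\Sigma v>0$. Lemma~\ref{lem:Gaussian-M-general} therefore gives
\[
v^\top M v=\kappa_g\,(d_*^\top\Sigma v)\,s,
\qquad
\kappa_g=\EE[\mathrm{sign}(G)G^3]/s^3=\EE[|G|^3]/s^3 .
\]

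\emph{The constant.} Write $G=sN$ with $N\sim\mathcal N(0,1)$, so $\kappa_g=\EE|N|^3$. The standard half-line integral gives
\[
\EE|N|^3=2\int_0^\infty z^3\frac{e^{-z^2/2}}{\sqrt{2\pi}}\,\mathrm dz
=\frac{2}{\sqrt{2\pi}}\cdot 2=\frac{4}{\sqrt{2\pi}}=2\sqrt{\frac{2}{\pi}} .
\]
Here $\int_0^\infty z^3e^{-z^2/2}\,\mathrm dz=2$, which follows from the substitution $w=z^2/2$. Hence $c_G=\kappa_g=2\sqrt{2/\pi}>0$, which is independent of $s$ as claimed.

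\emph{Nonvanishing.} Since $c_G>0$ and $s>0$ (because $\Sigma$ is positive definite and $\|v\|=1$), $v^\top Mv$ is a nonzero multiple of $d_*^\top\Sigma v$. So it is nonzero whenever $d_*^\top\Sigma v\neq0$.

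There is no real obstacle. The only points needing care are the constant $\EE|N|^3=2\sqrt{2/\pi}$ and confirming that the normalization by $s^3$ in $\kappa_g$ cancels the scale of $G$ exactly.
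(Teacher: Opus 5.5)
Your proposal is correct and follows the paper's proof. You specialize Lemma~\ref{lem:Gaussian-M-general} to $g=\mathrm{sign}$, so that $g(G)G^3=|G|^3$, and use $\EE|G|^3=2\sqrt{2/\pi}\,s^3$ to get $\kappa_g=c_G$. Your explicit checks of the hypotheses, of the half-line integral, and of $s>0$ fill in steps the paper treats as standard.
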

	
	\begin{proof}
		This is the special case of Lemma~\ref{lem:Gaussian-M-general} with
		$g(z)=\mathrm{sign}(z)$, so $g(G)G^3=\abs{G}^3$ for $G\sim\mathcal{N}(0,s^2)$.
		A standard Gaussian moment computation gives
		\[
		\EE[\abs{G}^3]
		= 2\sqrt{\frac{2}{\pi}}\,s^3.
		\]
		Thus $\kappa_g=2\sqrt{2/\pi}$ and the stated formula follows.
	\end{proof}
	
	For the smooth response $g(z)=\tanh(\gamma z)$ one obtains a similar formula
	with a different positive constant
	$\kappa_g=\kappa(\gamma)>0$; the sign and qualitative behavior are the same.
	
	\subsection{Symmetry breaking and alignment}\label{subsec:symmetry-breaking-main}
	
	The linearization and the Gaussian moment calculation now give the symmetry-breaking statement. Whenever the teacher contrast $d_*$ has a nonzero projection along the separator $v$, the balanced router is linearly unstable in at least one direction correlated with~$v$.
	
	A notational simplification is convenient before the theorem. Since the teacher partition is unchanged if we replace $v$ by $-v$ and swap $\Omega_1^\star,\Omega_2^\star$, we may assume without loss of generality that
	\begin{equation}\label{eq:nondeg-wlog}
		d_*^\top\Sigma v>0,
	\end{equation}
	whenever $d_*^\top\Sigma v\neq 0$.
	
	The proposition is deliberately local. It does not describe the whole training trajectory. It identifies the first instability seen by the symmetric router and shows that this unstable direction must have a nonzero component along the teacher separator.
	
	\begin{proposition}[Reduced linear instability of the symmetric router]\label{prop:symmetry-breaking}
		In the Gaussian two-expert setting above, assume the response expansion
		\eqref{eq:response-assumption} holds with $\partial_s\psi(0,t)=g(t)$ where
		$g$ is odd, $g(z)z\ge0$, and not identically zero. Assume also
		$d_*^\top\Sigma v\neq0$ (equivalently, after possibly flipping $v$,
		\eqref{eq:nondeg-wlog} holds). Let $M$ be as in~\eqref{eq:M-definition}. Then the Rayleigh quotient has the following explicit form.
		\[
		v^\top M v
		= \kappa_g\,(d_*^\top\Sigma v)\,\sqrt{v^\top\Sigma v},
		\qquad \kappa_g>0.
		\]
		In particular $v^\top M v>0$ under~\eqref{eq:nondeg-wlog}. Consequently, $M$ has at least one positive eigenvalue. Moreover, there is an eigenvector $w_+$ with eigenvalue $\lambda_+>0$ such that
		\[
		\ip{w_+}{v}\neq0,
		\]
		so the unstable eigendirection has nontrivial correlation with the teacher separator. For gradient descent with step size $\eta>0$, the linearized update $u_{t+1}\approx(I+\eta M/\tau)u_t$ is unstable along $w_+$ whenever $1+\eta\lambda_+/\tau>1$, equivalently whenever $\eta\lambda_+/\tau>0$. Thus, for fixed $\eta>0$ and sufficiently small~$\tau$, the symmetric point $u=0$ is linearly unstable and the router parameter $u_t$ is driven toward a direction correlated with~$v$.
	\end{proposition}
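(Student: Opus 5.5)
The Rayleigh-quotient identity follows directly from Lemma~\ref{lem:Gaussian-M-general}. Its hypotheses are already in the statement: $g$ is measurable, odd, satisfies $g(z)z\ge0$, and is not identically zero. The only thing to check is the moment condition $\EE[|g(v^\top X)||v^\top X|^3]<\infty$. This is implicit in the weighted $L^1$ expansion~\eqref{eq:response-weighted-L1}, which makes $M$ well defined. If needed, I will state it explicitly as part of the standing response assumption, since it is needed anyway for $M$ in~\eqref{eq:M-definition} to be finite.

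Positivity of $\kappa_g$ comes from the second half of that lemma. Oddness together with $g(z)z\ge0$ gives $g(z)\ge0$ for $z>0$, and $g\not\equiv0$ then forces $g$ to be nonzero on a set of positive Lebesgue measure. I will make this precise: an a.e.-zero $g$ would make $M=0$, so "not identically zero" should be read as "not a.e.\ zero." Multiplying by $d_*^\top\Sigma v>0$ from~\eqref{eq:nondeg-wlog} and by $\sqrt{v^\top\Sigma v}>0$ (since $\Sigma\succ0$) gives $v^\top Mv>0$.

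For the spectral claims, $M$ is real symmetric, as noted after~\eqref{eq:M-definition}. Take an orthonormal eigenbasis $\{w_i\}$ with eigenvalues $\lambda_i$ and expand $v=\sum_i c_iw_i$ with $c_i=\ip{w_i}{v}$. Then $0<v^\top Mv=\sum_i\lambda_ic_i^2$, so at least one index has both $\lambda_i>0$ and $c_i\ne0$. Set $w_+:=w_i$ and $\lambda_+:=\lambda_i$ for that index. This gives a positive eigenvalue and an eigenvector with $\ip{w_+}{v}\ne0$ in a single step, and it avoids any claim about the top eigenvector specifically. The top eigenvector could in principle be orthogonal to $v$, which is why I choose $w_+$ this way.

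For the dynamical statement, I apply the linearized map~\eqref{eq:linear-dynamics}. On $w_+$, the operator $I+(\eta/\tau)M$ acts as multiplication by $1+\eta\lambda_+/\tau>1$, because $\eta,\tau,\lambda_+>0$. So the linear part has spectral radius greater than $1$, and $u=0$ is a linearly unstable fixed point of the linearized iteration. Smaller $\tau$ only enlarges the growth factor. The component $\ip{u_t}{w_+}$ then grows geometrically while $u_t$ stays in the region where the $o(\|u\|)$ remainder is dominated. This is the main obstacle, and I will handle it by keeping the claim strictly at the level of the linearization. Concretely, if $\|u_t\|\le\rho$ with $\rho$ small enough that the remainder is at most $\tfrac12(\eta\lambda_+/\tau)\|u_t\|$, then the $w_+$-component grows by a factor of at least $1+\tfrac12\eta\lambda_+/\tau$ per step, provided it dominates initially. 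This shows the iterates leave every small ball along a direction with nontrivial $v$-correlation. I will not claim anything about the trajectory after it exits the linearization radius, which shrinks with $\tau$, as the setup already remarks.
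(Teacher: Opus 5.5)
Your proposal is correct and is essentially the paper's proof: Lemma~\ref{lem:Gaussian-M-general} gives the Rayleigh-quotient identity with $\kappa_g>0$, the expansion $v^\top Mv=\sum_i\lambda_i\ip{v}{w_i}^2$ yields $w_+$ with $\lambda_+>0$ and $\ip{w_+}{v}\neq0$, and instability is read off from the factor $1+\eta\lambda_+/\tau$ of the linearized map; your remarks that the third-moment condition must be assumed and that ``not identically zero'' has to mean ``not a.e.\ zero'' are fair tightenings of points the paper glosses over. The one piece to drop is your attempted control of the $o(\|u\|)$ remainder, which the statement does not need and which fails as written: the factor $1+\tfrac12\eta\lambda_+/\tau$ would require $\|u_t\|\le\abs{\ip{u_t}{w_+}}$, and, more seriously, dominance of the $w_+$-component is not preserved when $M$ has a larger eigenvalue on a direction orthogonal to $v$ (exactly the case you flag), so escape along a $v$-correlated direction is not established beyond the linearized level.
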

	
	\begin{proof}
		The Rayleigh-quotient identity is exactly Lemma~\ref{lem:Gaussian-M-general}. Under the standing assumptions on $g$, $\kappa_g>0$, so the sign of $v^\top M v$ is the sign of $d_*^\top\Sigma v$.
		
		To obtain the spectral consequence, let $\{(w_i,\lambda_i)\}_{i=1}^d$ be an orthonormal
		eigen-decomposition. Then
		\[
		v^\top M v
		= \sum_{i=1}^d \lambda_i\,\ip{v}{w_i}^2.
		\]
		If $v^\top M v>0$, at least one summand must be strictly positive, so there
		exists an index $i$ with $\lambda_i>0$ and $\ip{v}{w_i}\neq0$. Taking
		$w_+:=w_i$ and $\lambda_+:=\lambda_i$ proves the claim.
		
		The instability statement then follows by writing the linearized update as
		\[
		u_{t+1}\approx A u_t,
		\qquad
		A:=I+\frac{\eta}{\tau}M.
		\]
		If $u_0$ has a nonzero component along $w_+$, then that component is multiplied
		(at the linearized level) by $1+\eta\lambda_+/\tau$ at each step. Thus if
		$\eta\lambda_+/\tau>0$ (equivalently $1+\eta\lambda_+/\tau>1$), that component
		grows exponentially, giving linear instability. The temperature dependence is explicit in the multiplicative factor $1+\eta\lambda_+/\tau$. For fixed $\eta,\lambda_+>0$, the map $\tau\mapsto 1+\eta\lambda_+/\tau$ is strictly decreasing in $\tau$, hence smaller $\tau$ produces a larger amplification per step.
	\end{proof}
	
	Proposition~\ref{prop:symmetry-breaking} formalizes the intuitive picture: once the
	experts have developed a non-trivial contrast $d_*$ with a component along the
	teacher separator~$v$, the symmetric router $u=0$ becomes linearly unstable in
	an eigendirection that is necessarily correlated with $v$. The exact per-step
	amplification factor is $1+\eta\lambda_+/\tau$ in the linearized iteration, so smaller router temperature strengthens the local instability for fixed step size.
	
	\subsection{Alignment rate and temperature dependence}\label{subsec:alignment-rate}
	
	The same linearization also gives the alignment time scale, but this is only a direct spectral consequence of the update in Proposition~\ref{prop:symmetry-breaking}. Let $\lambda_1>\lambda_2\ge\cdots\ge\lambda_d$ be the eigenvalues of~$M$, with orthonormal eigenbasis $\{w_i\}_{i=1}^d$, and suppose $\lambda_1>0$. If $u_0=\sum_i\alpha_i^{(0)}w_i$ with $\alpha_1^{(0)}\ne0$, then the linearized update $u_{t+1}=(I+\eta M/\tau)u_t$ gives
	\begin{equation}\label{eq:linearized-spectral-components}
		\alpha_i^{(t)}
		= \Bigl(1+\frac{\eta}{\tau}\lambda_i\Bigr)^t\alpha_i^{(0)}.
	\end{equation}
	Under the sign-stability condition $1+\eta\lambda_d/\tau>0$, which is automatic for sufficiently small step size at fixed temperature, the non-leading components decay relative to the leading one according to
	\begin{equation}\label{eq:alignment-ratio}
		\left|\frac{\alpha_i^{(t)}}{\alpha_1^{(t)}}\right|
		= \left|\frac{1+\eta\lambda_i/\tau}{1+\eta\lambda_1/\tau}\right|^t
		\left|\frac{\alpha_i^{(0)}}{\alpha_1^{(0)}}\right|
		\longrightarrow 0,
		\qquad i\ge2.
	\end{equation}
	Thus the normalized direction $u_t/\|u_t\|$ converges to the leading eigendirection up to sign whenever the leading coefficient is nonzero and the leading eigenvalue is simple. Writing
	\begin{equation}\label{eq:alignment-rho}
		\rho(\tau):=\left|\frac{1+\eta\lambda_2/\tau}{1+\eta\lambda_1/\tau}\right|,
	\end{equation}
	the non-leading-to-leading ratio decays geometrically like $\rho(\tau)^t$. In the small-step regime $\eta/\tau\ll1$,
	\begin{equation}\label{eq:alignment-small-step}
		\rho(\tau)=1-\frac{\eta}{\tau}(\lambda_1-\lambda_2)+O\!\left((\eta/\tau)^2\right),
		\qquad
		t_{\mathrm{align}}\asymp \frac{\tau}{\eta(\lambda_1-\lambda_2)}.
	\end{equation}
	For fixed $\eta$ and $\tau\downarrow0$ with $\lambda_1,\lambda_2>0$, this positive-eigenspace ratio satisfies $\rho(\tau)\to\lambda_2/\lambda_1$, so the alignment rate saturates rather than continuing to scale like $1/\tau$. If some eigenvalues are negative and $\eta$ is held fixed, the sign-stability condition can fail as $\tau\downarrow0$; the displayed alignment-rate calculation should then be read either at fixed temperature with a stable step size or on the positive spectral subspace.
	
	The scope of the symmetry-breaking statement is local. It concerns a short-time, near-symmetric linearized update in which the expert contrast is treated as a slowly varying input to the router. In that regime, smaller $\tau$ increases the multiplicative factor $1+\eta\lambda_+/\tau$ and accelerates the onset of specialization. The result does not imply global convergence of the full coupled MoE training dynamics; it provides one tractable local route by which a symmetric router can move toward a teacher-aligned partition.

	\section{Diagnostic experiments}\label{sec:experiments}
	
	We now turn from the estimates above to a small set of diagnostics. The purpose of this section is deliberately narrow. The experiments are not meant to compare large sparse-MoE training recipes, nor to claim that the reduced models capture all details of modern MoE systems. Instead, they ask whether the quantities that appear in the proofs are visible in controlled numerical examples where the teacher router, the interface, the temperature, and the input distribution are all known. The plan is to isolate the same boundary-layer mechanism in three ways. We first study the temperature scaling of the soft--hard gap, then move a fixed-form boundary through regions of different input density, and finally revisit the local symmetry-breaking calculation for a nearly balanced two-expert router.
	
	This is why the experiments use synthetic teacher models. The router, the decision boundary, the expert contrast, and the hard-routing predictor are known, so the boundary mass and the soft--hard discrepancy can be measured directly. A real MoE training run would mix this mechanism with finite-sample noise, misspecification, load balancing, optimizer choices, expert-capacity effects, and architecture-specific details. Those effects are important for practice, but they are not the object of the theory developed above. In the controlled setting used here, we can change one quantity at a time and ask whether the measured behavior follows the boundary-layer picture.
	
	All experiments use fixed random seeds and Monte Carlo estimates of the corresponding population quantities. The responses are noiseless, so the reported gaps are not test errors in the usual benchmark sense. They measure what changes when the hard router is replaced by its temperature-$\tau$ soft version, or when the hard assignment is perturbed near a known interface. The scripts distributed with the source write the CSV files, JSON summaries, plots, and \LaTeX{} table fragments used below; the numerical entries in the manuscript are read from those files rather than transcribed by hand.
	
	\subsection{Boundary-layer scaling}
	
	The first experiment is the most direct check of the soft-to-hard estimate. We take a two-expert teacher with $X\sim \mathcal{N}(0,I_d)$, a linear hard router, and two linear experts. The hard teacher predicts
	\[
		f_0(x)=\1\{w_{\mathrm r}^\top x-b\ge 0\}\,\beta_1^\top x+
		\1\{w_{\mathrm r}^\top x-b<0\}\,\beta_2^\top x,
	\]
	and the soft student uses the same experts with the logistic gate $\sigma((w_{\mathrm r}^\top x-b)/\tau)$. Thus the two predictors differ only in a neighborhood of the separating hyperplane. In this realizable setup the hard risk is zero, so $\widehat L_\tau-\widehat L_0$ is a direct Monte Carlo estimate of the soft--hard population gap. We compare that gap with the empirical boundary mass $\widehat\PP\{|w_{\mathrm r}^\top X-b|\le 2\tau\}$.
	
	\begin{table}[H]
\centering
\caption{Boundary-layer scaling in the two-expert teacher model. The last column is stable at small temperatures, consistent with the $O(\tau)$ boundary-layer prediction.}
\label{tab:exp-boundary-scaling}
\begin{tabular}{cccc}
\toprule
$\tau$ & $\widehat P\{|\Delta|\leq 2\tau\}$ & $\widehat L_\tau-\widehat L_0$ & $(\widehat L_\tau-\widehat L_0)/\tau$ \\
\midrule
0.020 & 0.0317 & 0.01161 & 0.5805 \\
0.050 & 0.0797 & 0.02969 & 0.5938 \\
0.100 & 0.1587 & 0.05994 & 0.5994 \\
0.200 & 0.3116 & 0.11989 & 0.5994 \\
0.450 & 0.6327 & 0.26180 & 0.5818 \\
0.600 & 0.7700 & 0.33880 & 0.5647 \\
\bottomrule
\end{tabular}
\end{table}

	\begin{figure}[H]
		\centering
		\includegraphics[width=.48\textwidth]{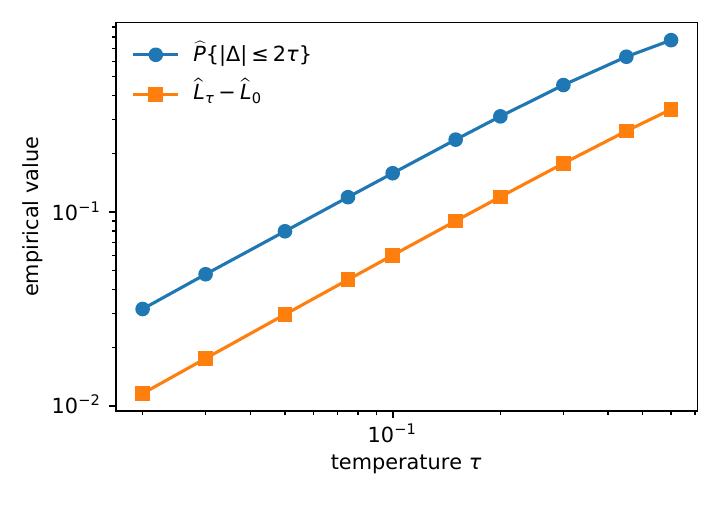}\hfill
		\includegraphics[width=.48\textwidth]{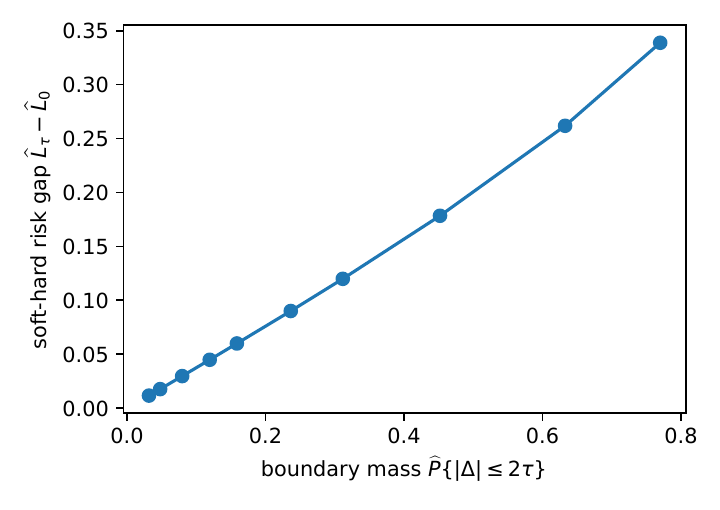}
		\caption{Boundary-layer scaling. The left panel shows that both the empirical boundary mass and the soft--hard risk gap scale nearly linearly at small temperatures. The right panel shows that the risk gap is nearly linear in the measured boundary mass. In this run, the small-temperature log--log slopes are $0.994$ for boundary mass and $1.014$ for the risk gap, and the correlation between boundary mass and risk gap over the displayed temperature grid is $0.998$.}
		\label{fig:exp-boundary-scaling}
	\end{figure}
	
	Table~\ref{tab:exp-boundary-scaling} and Figure~\ref{fig:exp-boundary-scaling} should be read together. The table gives the numerical scale of the effect, while the figure shows the two comparisons that matter for the theory. In the left panel, decreasing $\tau$ shrinks both the measured slab probability and the soft--hard risk gap, and the two log--log slopes are close to one. This is the visual version of the coarea prediction. For a regular interface, the relevant probability mass in a thin tube is first order in the tube width. The right panel removes $\tau$ from the horizontal axis and plots the gap directly against the measured boundary mass. The near-linear relation is the main diagnostic message. In this example, the risk gap is not behaving like an unrelated temperature artifact; it is tracking the amount of probability placed near the routing interface.

	The point of the experiment is not that this Gaussian example is difficult. It is deliberately simple so that the interface is known and the measured quantity in the code can be compared with the boundary-layer quantity that appears in the proof. The nearly constant values of $(\widehat L_\tau-\widehat L_0)/\tau$ at the smaller temperatures give the same message in tabular form.
	
	\subsection{Controlled boundary geometry}
	
	The second experiment keeps the temperature fixed and changes the location of the interface. This separates a temperature effect from a geometry effect. We translate the boundary $w_{\mathrm r}^\top x=b$ through the Gaussian input distribution and choose the expert contrast to be tangent to the translated interfaces. With this choice, moving the boundary mainly changes how much Gaussian probability lies near the interface; it does not substantially change the leading squared-contrast factor in the coarea coefficient.
	
	For each offset we measure two quantities. The first is again the soft--hard risk gap. The second is the hard-assignment flip rate under a fixed small router perturbation. This perturbation is not a trained update and is not meant to model a full optimizer step. It is a stress test. When many points lie close to the decision boundary, a small change in the router should move many of them across the boundary. When little mass lies near the boundary, the same perturbation should have a much smaller effect.
	
	\begin{table}[H]
\centering
\caption{Geometry-controlled experiment at fixed temperature $\tau=0.10$. The expert contrast is chosen tangent to the translated interfaces, so moving the boundary primarily changes the Gaussian density near the interface. Both the soft-hard risk gap and the hard-assignment flip rate decrease with the measured boundary mass.}
\label{tab:exp-geometry}
\begin{tabular}{cccc}
\toprule
offset $b$ & boundary mass & risk gap & flip rate \\
\midrule
0.00 & 0.1590 & 0.06063 & 0.0252 \\
0.50 & 0.1405 & 0.05407 & 0.0226 \\
1.00 & 0.0966 & 0.03691 & 0.0154 \\
1.50 & 0.0523 & 0.01996 & 0.0084 \\
2.00 & 0.0220 & 0.00846 & 0.0035 \\
2.25 & 0.0130 & 0.00489 & 0.0020 \\
\bottomrule
\end{tabular}
\end{table}

	\begin{figure}[H]
		\centering
		\includegraphics[width=.48\textwidth]{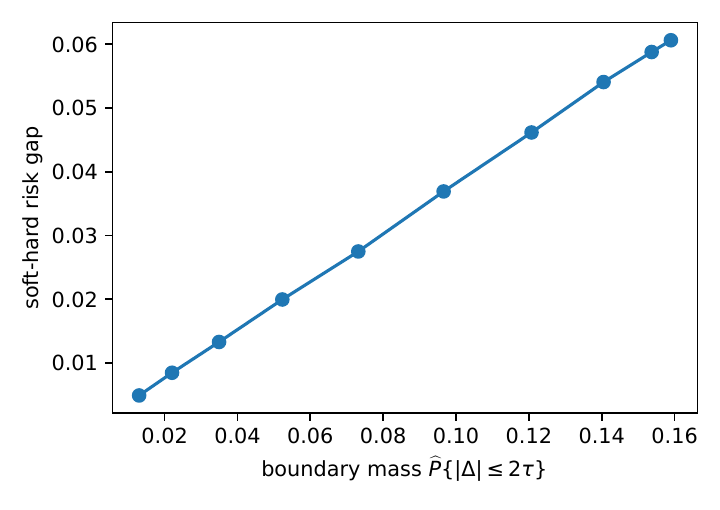}\hfill
		\includegraphics[width=.48\textwidth]{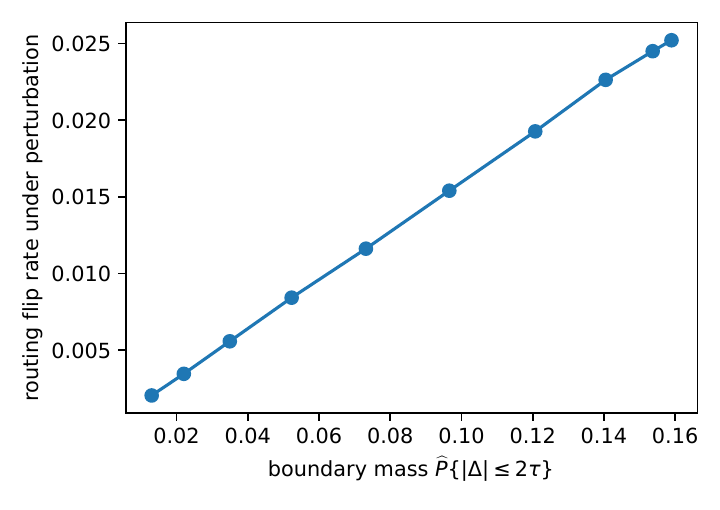}
		\caption{Controlled boundary geometry at fixed temperature $\tau=0.10$. Moving the interface into lower-density regions reduces boundary mass. Because the expert contrast is tangent to the translated interfaces, the risk gap and the hard-assignment flip rate closely track this reduction; both empirical correlations with boundary mass exceed $0.9999$.}
		\label{fig:exp-boundary-geometry}
	\end{figure}
	
	Table~\ref{tab:exp-geometry} and Figure~\ref{fig:exp-boundary-geometry} show the expected monotone pattern. At the same temperature, the soft--hard discrepancy is larger when the interface passes through a high-density region and smaller when the interface is moved into the tail of the input distribution. The left panel of the figure makes this more explicit by plotting the risk gap against the measured boundary mass rather than against the offset $b$. Once the boundary mass is known, the translated interfaces fall almost on a single line in this controlled design.

	The right panel gives a second interpretation of the same geometry. A small router perturbation flips more hard assignments when there is more probability close to the interface, and it flips fewer assignments when the interface lies in a lower-density region. This is not a separate theorem, but it is a useful sanity check. A boundary layer with more mass should be both more important for soft--hard approximation and more sensitive to small boundary motion. Thus the figure separates the two roles played by temperature and geometry. The temperature fixes the thickness of the layer; the router geometry and input density determine how much probability lies inside it.
	
	\subsection{Reduced symmetry-breaking diagnostic}
	
	The last experiment concerns the most model-dependent part of the paper, the reduced symmetry-breaking calculation in Section~\ref{sec:symmetry-breaking}. We generate a two-expert teacher with a linear separator and then fix two student experts with a small contrast around the best shared linear predictor. Only the router is optimized. The router is initialized close to balanced, with initial alignment $0.05$ against the teacher separator, and the same Monte Carlo sample and initial direction are used for all displayed temperatures.
	
	This setup intentionally removes many features of full MoE training. The experts are not updated, there is no load-balancing loss, and the experiment is run on a fixed Monte Carlo approximation to the population objective. These restrictions are the point. The calculation in Section~\ref{sec:symmetry-breaking} assumes that a small expert contrast is already present and studies whether the local router update amplifies the teacher-aligned direction. The experiment checks that narrow local issue, not global convergence of coupled router--expert training.
	
	\begin{table}[H]
\centering
\caption{Reduced symmetry-breaking diagnostic. The experts are fixed with a small contrast and only the router is optimized. Starting from alignment $0.05$, the router aligns with the teacher separator in this controlled Monte Carlo approximation to the population objective.}
\label{tab:exp-training}
\begin{tabular}{cccccc}
\toprule
$\tau$ & final risk & final alignment & $\|\hat u\|$ & boundary mass & gate entropy \\
\midrule
0.05 & 1.2601 & 1.000 & 1.430 & 0.056 & 0.046 \\
0.10 & 1.2748 & 1.000 & 1.997 & 0.080 & 0.065 \\
0.20 & 1.2960 & 1.000 & 2.780 & 0.114 & 0.093 \\
0.40 & 1.3268 & 1.000 & 3.840 & 0.163 & 0.133 \\
\bottomrule
\end{tabular}
\end{table}

	\begin{figure}[H]
		\centering
		\includegraphics[width=.48\textwidth]{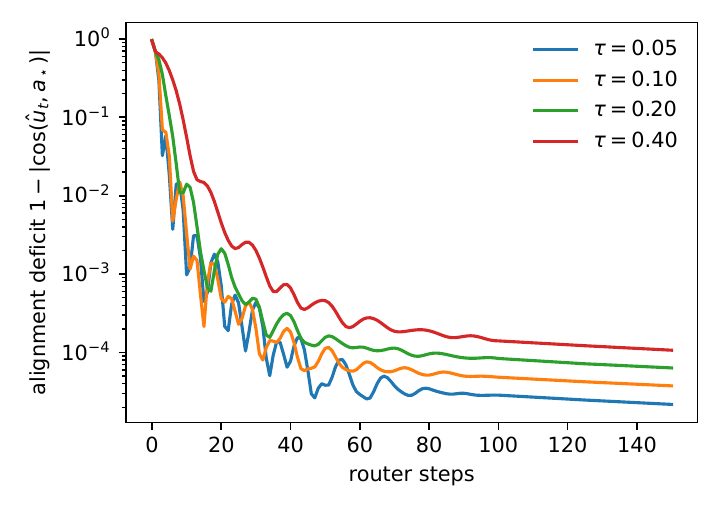}\hfill
		\includegraphics[width=.48\textwidth]{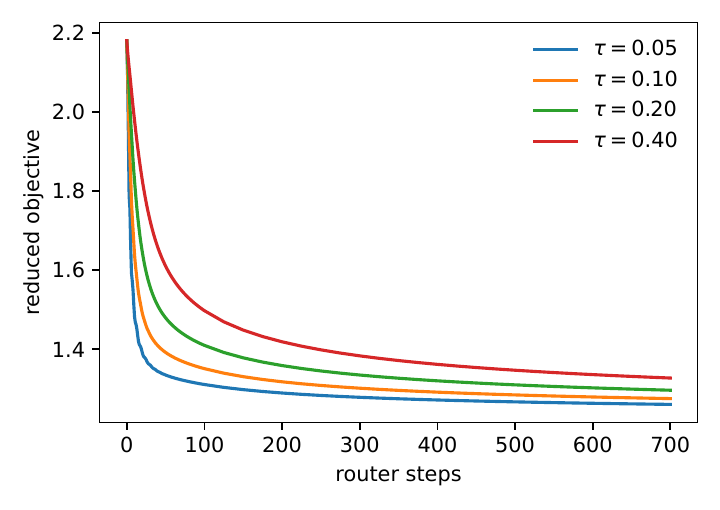}
		\caption{Reduced symmetry-breaking diagnostic. The left panel plots the early-time alignment deficit $1-|\cos(\hat u_t,a_\star)|$ on a logarithmic scale, rather than the raw alignment, because the raw alignment saturates quickly and otherwise hides the transient. The right panel plots the corresponding reduced objective. The experiment is a controlled mechanism check for Proposition~\ref{prop:symmetry-breaking}, not a claim about global convergence of full MoE training.}
		\label{fig:exp-symmetry-breaking}
	\end{figure}
	
	Table~\ref{tab:exp-training} and Figure~\ref{fig:exp-symmetry-breaking} summarize the reduced training run from two angles. The table records the endpoint quantities. The final alignments are close to one for all displayed temperatures, while the final gate entropy and measured boundary mass increase with temperature, as expected for a softer gate. The figure focuses on the part of the trajectory that is easiest to miss if one plots only raw alignment. In the left panel, the vertical axis is the remaining alignment deficit. The log scale makes the first few dozen router steps visible and shows that the router direction rapidly locks onto the teacher separator. In the right panel, the corresponding objective decreases along the same runs. The curves should not be compared as a benchmark across temperatures, because each temperature defines a slightly different soft objective; the relevant point is that each run follows the teacher-aligned direction once the small expert contrast is present.

	This is consistent with the reduced theory in Section~\ref{sec:symmetry-breaking}. Temperature does not mainly change the final direction in this reduced experiment. Instead, it changes the softness of the gate and can be partly offset by the learned router norm through the ratio $u^\top x/\tau$. That is why raw alignment alone is a poor diagnostic after the first transient. All runs select essentially the same direction. The endpoint quantities in Table~\ref{tab:exp-training} therefore matter. They show that higher temperature leaves a wider soft boundary layer and larger gate entropy, while the direction itself remains teacher-aligned. The experiment should not be read as evidence that arbitrary MoE training finds this configuration. It only shows that the local mechanism isolated in the proposition is visible in a controlled population-style calculation.
	
	Taken together, the diagnostics are consistent with the boundary-mass explanation developed in the paper. They provide three checks on the same idea. The soft--hard gap scales with the measured boundary layer, interface placement matters even at fixed temperature, and the local symmetry-breaking calculation can be reproduced when its assumptions are built into the experiment. The evidence is intentionally modest. These experiments do not address finite-sample generalization, load balancing, expert capacity constraints, or large-scale sparse-MoE systems.

		\section{Scope and interpretation}\label{sec:discussion}
	
	We close by spelling out what the preceding results do, and do not, establish. This matters because the paper combines three ingredients with different logical status. These are a boundary-layer analysis of the soft-to-hard limit, a conditional transfer statement about optimization geometry, and a reduced calculation that illustrates one way a routing direction can be selected. These ingredients fit together, but they should not be read as equally general.
	
	The most robust part of the paper is the boundary-layer analysis. Once a router and an expert family are fixed, the difference between the soft objective $L_\tau$ and the hard objective $L_0$ is governed by the amount of probability mass near the hard-routing interfaces. In the notation of the paper, this is the region where the top-two logit gap
	\[
	\Delta(x;\phi)
	\]
	is small. At temperature $\tau$, only inputs with $\Delta(x;\phi)$ on the order of $\tau$ can substantially feel the difference between soft and hard routing. Away from this thin set, the softmax weights are already exponentially close to a hard assignment. Thus the relevant quantity is not simply that the temperature is small, but that the input distribution places little mass in the boundary layer selected by the router.
	
	This part of the analysis does not rely on a teacher--student model, realizability of the experts, or a favorable training landscape. It is a geometric statement about low-temperature routing. The coarea estimates make this statement quantitative, the uniform approximation theorem turns it into a risk comparison, and the $\Gamma$-convergence result records the corresponding variational limit. The message is that soft-to-hard approximation is controlled by boundary mass, not by temperature alone.
	
	The landscape result in Section~\ref{sec:benign} has a more conditional role. It starts from a profiled hard-routing objective that already has the desired structure. The expert geometry separates the relevant components, the router can realize the corresponding interfaces, and the non-teacher critical points have a direction of escape. Under additional first- and second-order soft-to-hard control near the critical points being compared, that structure persists for sufficiently small temperature. In this sense the theorem is a stability statement. It does not say that arbitrary MoE objectives have benign landscapes, and it does not remove the need to understand the hard-routing problem. It says that, when the hard problem has the right geometry and the boundary-layer approximation is uniform enough, the soft objective inherits the useful part of that geometry.
	
	The Gaussian calculation in Section~\ref{sec:symmetry-breaking} should be read in the same spirit. It is deliberately reduced. The calculation does not model all of coupled router--expert training, and it does not include load balancing, capacity constraints, finite-sample effects, or architectural details of large sparse MoE systems. Its purpose is narrower. It shows in a tractable population setting how a small expert contrast can create a teacher-aligned direction for the router, and how the resulting soft objective can move away from the fully symmetric configuration. This gives a concrete mechanism that is compatible with the hard-routing picture, but the main boundary-mass theorems do not depend on it.
	
	The diagnostic experiments in Section~\ref{sec:experiments} were designed with the same separation in mind. They do not try to serve as benchmarks for sparse MoE training. Instead, they check whether the quantities appearing in the theory behave as predicted in controlled teacher models where the interface, temperature, expert contrast, and input density can be inspected directly. The experiments support the interpretation that the soft--hard discrepancy is a boundary-layer effect, that changing the geometry of the interface changes the approximation error even at fixed temperature, and that the reduced symmetry-breaking calculation is visible when its assumptions are built into the setup. These checks are useful, but intentionally modest.
	
	The main unresolved issue is therefore not the zero-temperature passage itself. The paper gives a route for controlling that passage once the hard partition is fixed or sufficiently well behaved. The harder remaining problem is to characterize the limiting hard-routing objective for realistic router--expert families. In particular, one would like conditions under which the profiled hard objective has identifiable partitions, useful curvature away from teacher-equivalent solutions, and enough stability under perturbations of the router and experts. Those questions are not settled here. The contribution of the paper is to isolate the boundary-mass part of the problem and to show, under explicit assumptions, why it is the quantity that governs the passage from soft routing to hard routing.

\end{document}